\documentclass{article} 
\usepackage{iclr2027_conference,times}

\usepackage{amsmath,amsfonts,bm}

\def\eqref#1{equation~\ref{#1}}

\def\1{\bm{1}}

\DeclareMathAlphabet{\mathsfit}{\encodingdefault}{\sfdefault}{m}{sl}
\SetMathAlphabet{\mathsfit}{bold}{\encodingdefault}{\sfdefault}{bx}{n}

\usepackage{hyperref}
\usepackage{url}

\usepackage{amsmath}
\usepackage{amssymb}
\usepackage{mathtools}
\usepackage{amsthm}
\usepackage{wrapfig}
\usepackage{mathtools}
\usepackage{amsmath, amssymb, amsthm}
\usepackage{bm}
\usepackage{enumitem}
\usepackage{multirow}
\usepackage{makecell}
\usepackage{siunitx}
\usepackage{adjustbox}
\usepackage{tabularx}
\usepackage{diagbox}
\usepackage[most]{tcolorbox}
\usepackage{anyfontsize}
\usepackage{amsfonts}       
\usepackage{nicefrac}       
\usepackage{xcolor}         
\usepackage{natbib, mathtools}
\usepackage{multirow}
\usepackage{graphicx}
\usepackage[table]{xcolor}
\usepackage{array}
\usepackage{siunitx}
\usepackage{ragged2e}
\usepackage{listings}
\newtheorem{theorem}{Theorem}[section]
\newtheorem{proposition}[theorem]{Proposition}
\newtheorem{lemma}[theorem]{Lemma}
\newtheorem{corollary}[theorem]{Corollary}
\theoremstyle{remark}
\newtheorem{remark}[theorem]{Remark}
\renewcommand{\eqref}[1]{\textup{(\ref{#1})}}
\usepackage{multirow}
\usepackage{makecell}
\usepackage{booktabs}
\usepackage{tabularx}
\newcolumntype{Y}{>{\centering\arraybackslash}X}
\usepackage{subcaption}
\usepackage{algorithm}
\usepackage{algpseudocode}
\usepackage{longtable,pdflscape}
\usepackage{colortbl}

\title{Beyond Action Entropy: Quotient-Space \\ Exploration for Genome-Scale Metabolic Model Repair}

\author{Xuan Gong$^{\star,1}$, \quad Hanbo Huang$^{\star,1}$, \quad Wenbin Dai$^{\star,1}$, \quad Jing Wang$^{1}$, \quad Lei Bai$^{2}$, \\
  \bf Xiang Xiao$^{1}$, \quad Weishu Zhao$^{1}$, \quad Shiyu Liang$^{\dagger,1}$ \\[1ex]
  \normalfont $^1$Shanghai Jiao Tong University \quad
  $^2$Shanghai AI Laboratory \\
  \normalfont \texttt{\{gongxuan0610, lsy18602808513\}@sjtu.edu.cn}
}

\iclrfinalcopy 

\begin{document}

\maketitle
{\let\thefootnote\relax\footnotetext{\hspace{-1.8em}$^\star$Equal contribution. \;
$^\dagger$Corresponding author. Correspondence to \texttt{lsy18602808513@sjtu.edu.cn}. \\
The resources will be released at \url{https://github.com/coder-gx/GEM_repair}.}}

\lhead{Preprint}

\begin{abstract}
Repairing scientific models from functional observations differs fundamentally from supervised prediction: feedback may certify a solution without revealing which structural correction is responsible.
We study this setting for genome-scale metabolic model (GEM) repair, where multiple reaction edits can explain the same phenotypes and many apparently distinct edits correspond to the same biological mechanism.
This many-to-one structure creates a hidden failure mode for conventional exploration: diversity in the output space need not translate into diversity of scientific hypotheses.
We introduce \textsc{QuotientPO}, which collapses equivalent repairs into canonical mechanisms and optimizes exploration directly over the resulting quotient space.
To make quotient exploration informative under finite rollouts, we derive a kernelized R\'enyi estimator that resolves graded crowding among distinct repair cores beyond coarse exact-match counts.
On 2,212 held-out GEMs, \textsc{QuotientPO} improves Success@32 from 17.93\% to 20.10\% (+12.1\% relative) while consistently increasing distinct successful-core discovery under the same sampling budget.
These results establish quotient-space exploration as a principled approach
to mechanism-level discovery under verifier-induced equivalence.
\end{abstract}



\section{Introduction}
\label{sec:introduction}

AI4Sci~\citep{wang2023scientific} is increasingly moving beyond predicting scientific observations toward reasoning over, generating, and modifying scientific models and structures~\citep{abramson2024alphafold3,merchant2023scaling}.
Unlike ordinary prediction targets, these scientific objects act as hypotheses about underlying mechanisms whose validity is assessed indirectly through physical laws~\citep{karniadakis2021physics} or experimental feedback~\citep{boiko2023autonomous}.
Evaluating candidates against functional constraints rather than gold targets characterizes \emph{verifier-guided scientific AI}, posing the core challenge of learning structural corrections from indirect evidence alone.
This challenge directly materializes in genome-scale metabolic models (GEMs)~\citep{mardinoglu2025genomescale}, which operationalize genomic sequences into executable metabolic machinery.
To overcome this barrier, we introduce \emph{phenotype-supervised GEM repair}: an amortized framework that replaces costly manual curation to scale predictive mechanistic modeling to uncharacterized organisms.

This repair paradigm is fundamentally challenged by partial identification: phenotype observations validate models without uniquely determining underlying mechanisms~\citep{raue2009structural,benedict2014likelihood}.
This underdetermination creates a core tension: while multiple distinct mechanisms can satisfy identical phenotypes, combinatorial edits often implement the exact same mechanism, rendering surface diversity deceptive.
Prior work addresses non-surface redundancy via semantic classes, kernels, or symmetries \citep{kuhn2023semantic,nikitin2024kernellanguageentropyfinegrained,kim2025symmetryawaregflownets,li2026setposetlevelpolicyoptimization}. Yet in verifier-guided repair, a single functional intervention admits diverse raw realizations, meaning output similarity cannot define which candidates constitute the same scientific hypothesis. Here, identity is induced operationally by the verifier itself. This raises a key question: \textit{How should exploration be structured when diversification is defined up to verifier-preserving transformations?}

Our answer is to \emph{quotient before exploring}.
Conceptually, we partition repair edits into verifier-preserving equivalence classes through canonicalization, establishing scientific identity instead of syntactic multiplicity as the unit of diversity.
Computationally, exact identity provides only a coarse finite-sample exploration signal, becoming especially degenerate when exact mechanism collisions are sparse.
We therefore relax binary collision counting into a graded, kernelized R\'enyi entropy objective \citep{Principe2000,giraldo2014measuresentropydatausing}, allowing mechanistic crowding to be detected even among non-identical repairs.
Based on these principles, we introduce \textsc{QuotientPO} (Fig.~\ref{fig:overview}), an algorithm combining group-relative policy optimization with quotient-space exploration.
Together, our theory and experiments establish quotient-space exploration as a principled route to broader valid-mechanism coverage, advancing a Sci4AI perspective on verifier-guided learning.

Our contributions are threefold:
\begin{itemize}[leftmargin=*, itemsep=2pt, parsep=0pt, topsep=1pt]
    \item \textbf{Formulation and Benchmark for Scientific Repair:}
    We formulate \emph{phenotype-supervised GEM repair} to generalize edits across organisms without test-time phenotypes, and establish the first large-scale benchmark for amortized GEM correction.

    \item \textbf{Theoretical Foundation of Quotient Exploration:}
    We identify \emph{representation multiplicity} as an intrinsic failure mode in which standard policy entropy rewards syntactic redundancy, and establish theoretical connections between quotient-space diversity and deployed hypothesis coverage.

    \item \textbf{Tractable Algorithm and Empirical Validation:}
    We develop \textsc{QuotientPO}, making quotient exploration practical via kernelized R\'enyi estimation of graded mechanistic crowding, and demonstrate consistent gains in multi-budget repair and distinct mechanism recovery.
\end{itemize}

\begin{figure}[t]
    \centering
\includegraphics[width=\linewidth]{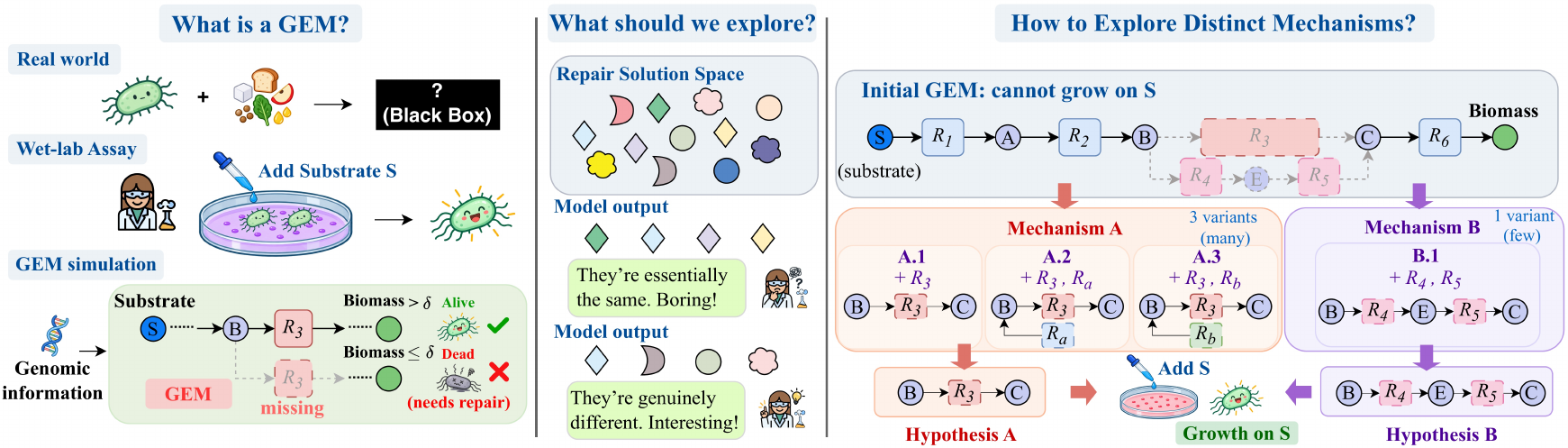}
\vspace{-0.4cm}
   \caption{\textbf{Overview.} \textsc{QuotientPO} canonicalizes equivalent surface edits to explore distinct mechanistic hypotheses rather than redundant syntactic variants for GEM repair.}
    \label{fig:overview}
   \vspace{-0.4cm}
\end{figure}

\vspace{-0.2cm}
\section{Background and Related Work}
\label{sec:related_work}
\vspace{-0.2cm}

\textbf{Genome-scale metabolic reconstruction and model repair.}
GEMs support mechanistic phenotype simulation and metabolic engineering, yet incomplete biological knowledge and reconstruction choices can leave missing and spurious reactions \citep{thiele2010protocol,carveme,zimmermann2021gapseq,hsieh2024comparative,huang2026ggboundgenomegroundedagentmicrobial}.
Classical constraint-based methods repair each GEM independently through gap filling or phenotype reconciliation, without sharing repair knowledge across organisms.
The same phenotypic constraints may admit multiple valid repairs \citep{satish2007optimization,kumar2009growmatch,thiele2014fastgapfill,hartleb2016improved}.
Recent learning-based methods recover missing reactions through link or hypergraph prediction \citep{yadati2020nhp,chen2023teasing,huang2024dual,liu2024generalizable,zhao2026multi}, but largely target synthetically removed reactions rather than functional bidirectional repair.
We instead learn a shared repair policy across organisms, using phenotypes only as training supervision to predict functionally validated bidirectional edits without test-time phenotype access.


\textbf{Diversity-aware reinforcement learning for structured prediction.}
Reinforcement learning (RL) optimizes structured outputs under non-differentiable functional rewards, including molecular design \citep{you2018graph}, executable code \citep{le2022coderl}, and outcome-based reasoning \citep{shao2024deepseekmathpushinglimitsmathematical}.
Diversity has been pursued through maximum-entropy policies \citep{haarnoja2018soft}, unsupervised skill discovery \citep{eysenbach2018diversityneedlearningskills,Sharma2020Dynamics-Aware}, quality-diversity optimization \citep{pmlr-v235-grillotti24a}, and GFlowNets for reward-weighted object generation \citep{bengio2021flow,pmlr-v162-jain22a}.
Recent work further considers semantic equivalence and kernelized set diversity \citep{kuhn2023semantic,li2026setposetlevelpolicyoptimization}.
However, these approaches do not resolve the verifier-induced many-to-one mapping central to GEM repair, where distinct edit sets can collapse to the same repair core.
Raw-output diversity can therefore reward representation multiplicity without expanding distinct repair hypotheses, motivating quotienting before exploration.

\section{Preliminaries and Problem Formulation}
\label{sec:problem}

\textbf{Genome-scale metabolic models.} A GEM represents an organism's metabolic network as a directed bipartite graph $M=(\mathcal C,\mathcal R,S)$ between metabolites $\mathcal C$ and reactions $\mathcal R$~\citep{10.1371/journal.pcbi.1000385,doi:10.1073/pnas.0803571105,10.1093/bioinformatics/btr145}. The stoichiometric matrix $S \in \mathbb R^{|\mathcal C| \times |\mathcal R|}$ defines weighted edges between them, specifying the consumption and production coefficients of each reaction.

\textbf{Formulation of GEM repair.}
We frame GEM correction as constrained graph editing through reaction additions and deletions. Given an initial model $M_0$ with reaction set $\mathcal R_0$ and a candidate reaction universe $\mathcal U$, a repair $E=(A,D)$ adds $A\subseteq\mathcal U\setminus\mathcal R_0$ and removes $D\subseteq\mathcal R_0\cap\mathcal U$, yielding the repaired model $M_E$. 
For each tested phenotype $i$, flux balance analysis (FBA;\citealp{Orth2010}) over its nutrient-specific assay(s) computes optimal biomass flux $v_i^*(M_E)$, giving predicted phenotype $\widehat y_i(M_E)=\mathbb I[v_i^*(M_E)>\delta]$ under growth threshold $\delta>0$.
Given observed BacDive phenotypes $y\in\{0,1\}^N$, repair quality is measured by phenotype consistency $\Phi_y(M_E)=\frac{1}{N}\sum_{i=1}^N\mathbb I[\widehat y_i(M_E)=y_i]$. FBA therefore serves directly as the functional verifier that maps a repaired GEM to phenotype predictions. 
The finite repair space $\mathcal{F}$ contains all legal reaction edits over candidate universe $\mathcal{U}$.

\textbf{Learning problem.}
Let \(G\) denote the organism-specific protein representation derived from its genome-encoded proteins.
We assume a training distribution \(\mathcal D\) over organisms, where for each training instance \((M_0,G,y)\sim\mathcal D\), \(y\) contains the available wet-lab phenotypes.
Neither a corrected GEM nor a gold reaction edit set is available, and the
observed phenotypes generally do not uniquely identify the underlying repair. We therefore learn a stochastic repair policy \(\pi_\theta(E\mid M_0,G)\) over valid repairs.
Importantly, the phenotypes $y$ provide training supervision through $\Phi_y(M_E)$ only
and are never given as model input. The resulting learning objective is
\begin{equation}
\label{eq:objective}
\max_{\theta}\,
\mathbb E_{(M_0,G,y)\sim\mathcal D}
\mathbb E_{E\sim \pi_\theta(\cdot\mid M_0,G)}
\left[\Phi_y(M_E)\right]
\quad\text{s.t.}\quad
E \in \mathcal{F}.
\end{equation}
At test time, only $(M_0,G)$ is available and the learned policy outputs a
repair $E$ and the corresponding $M_E$.
Because no organism-specific corrected GEM or gold edit set is available,
evaluation uses only held-out phenotype consistency target.

\section{Methodology}
\label{sec:method}

The formulation in Eq.~(\ref{eq:objective}) directly motivates \textbf{policy-gradient RL}\footnote{We frame our learning under group-relative policy optimization (GRPO)~\citep{shao2024deepseekmathpushinglimitsmathematical}, which baselines a critic-free rollout group to avoid critic variance under combinatorial edits~\citep{bello2017neuralcombinatorialoptimizationreinforcement,kool2019attentionlearnsolverouting} while providing the sample pool for quotient estimation.}  \textbf{with explicit multi-solution exploration}:
\textbf{(1)} \textit{Bypassing non-differentiable verifiers and solver bias}: Repair quality is evaluated by non-differentiable FBA-based verifiers, preventing direct backpropagation. While offline mixed-integer linear programming (MILP)~\citep{satish2007optimization,benedict2014likelihood,SCHROEDER2020100783} can provide demonstrations for supervised fine-tuning (SFT), it arbitrarily breaks ties across non-unique feasible sets, biasing SFT toward solver heuristics. We therefore adopt RL to optimize verifier rewards directly beyond offline demonstrations.
\textbf{(2)} \textit{Preventing collapse under partial identification}: Across a combinatorial graph edit space, observed phenotypes only partially identify the metabolic network, creating a landscape with multiple discrete optima where mechanistically distinct repairs achieve identical rewards. A policy collapsed onto any single valid repair is already reward-optimal, leaving standard reward maximization with zero gradient signal to discover alternative mechanisms. Under finite rollout budgets, this wastes samples on redundant variations around a single mode, motivating explicit exploration to preserve diverse scientific hypotheses.

\begin{figure}[t]
    \centering
\includegraphics[width=\linewidth]{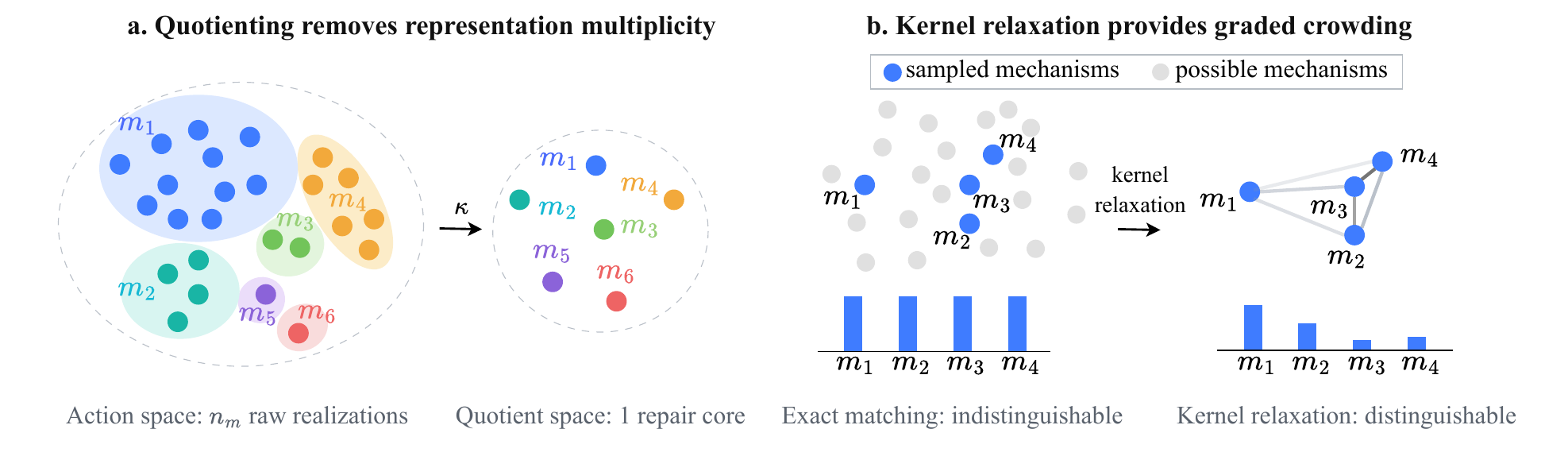}
 \setlength{\abovecaptionskip}{-8pt}
\caption{
\textbf{Theoretical intuition.}
\textbf{(a)} Each core $m$ contains $n_m$ equivalent raw edits contracted by $\kappa$ into a single point, preventing large $n_m$ from distorting exploration.
\textbf{(b)} Bar heights show exploration credit. Exact matching gives identical credit to equal-count cores, whereas kernel similarity favors isolated discoveries ($m_1$) over clustered candidates ($m_2$--$m_4$).
}
    \label{fig:theory}
   \vspace{-0.4cm}
\end{figure}

\subsection{Representation-Invariant Exploration with Quotient Entropy}
\label{sec:exploration}

\textbf{Action entropy rewards representation multiplicity, not mechanistic diversity.}
A natural exploration bonus is the Shannon entropy over raw repair actions $H(\pi_\theta)$, leading to the standard entropy-regularized objective $\mathcal{J}_{\mathrm{act}}(\theta) = \mathbb{E}_{(M_0,G,y)\sim\mathcal{D}} \left[\mathbb{E}_{E\sim\pi_\theta} [\Phi_y(M_E)] + \beta H(\pi_\theta)\right]$, where $\beta > 0$ controls the exploration strength.
The problem is that entropy counts representations, not repair hypotheses. If two equally rewarded hypotheses differ only in that one admits $n$ equivalent action realizations, action entropy gives it an additional preference of $\beta\log n$. Thus, a hypothesis is favored purely by representation redundancy, rather than any distinct biological mechanism. We formalize this multiplicity bias in Appendix~\ref{app:exploration}.

\textbf{Exploring distinct mechanisms requires quotienting equivalent repairs.}
To remove this bias, equivalent repairs should contribute only once to exploration. 
We define a deterministic, verifier-preserving map $\kappa_y$ that removes redundant edits, contracting the $n_m$ equivalent raw realizations into a canonical repair core $m = \kappa_y(E)$ (Fig.~\ref{fig:theory}a).
Let $\mathcal M_y$ denote the set of repair cores, serving as operational representations of underlying mechanisms. Repairs sharing the same core are treated as equivalent hypotheses, and the policy induces the quotient distribution $q=q_{\theta,y}$, where
\(
q(m)=\sum_{E:\kappa_y(E)=m}\pi_\theta(E\mid M_0,G)
\) aggregates probability mass across all equivalent raw realizations.


\begin{theorem}[Within-core invariance characterizes quotient exploration]
\label{thm:quotient_factorization}
An exploration functional $\mathfrak D$ is invariant to within-core
redistribution if and only if it depends on the raw policy $\pi$ only through its induced
core distribution $q_\pi$. Equivalently, there exists a functional
$\overline{\mathfrak D}$ such that
\(
\mathfrak D(\pi)=\overline{\mathfrak D}(q_\pi).
\)
\end{theorem}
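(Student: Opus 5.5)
We first make the notion of invariance precise. For a fixed instance $(M_0,G,y)$, let $\Pi$ be the set of probability distributions on the finite repair space $\mathcal F$. Let $\kappa_y:\mathcal F\to\mathcal M_y$ be the canonicalization map and $q_\pi=\kappa_{y\#}\pi$ the pushforward. Two policies $\pi,\pi'\in\Pi$ are related by a \emph{within-core redistribution} if, for every core $m$, they place the same total mass on the fiber $\kappa_y^{-1}(m)$, possibly split differently among its raw edits. That is, $\pi\sim\pi'$ iff $\pi(\kappa_y^{-1}(m))=\pi'(\kappa_y^{-1}(m))$ for all $m$. By definition this holds iff $q_\pi=q_{\pi'}$. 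A functional $\mathfrak D:\Pi\to\mathbb R$ is invariant to within-core redistribution when $\pi\sim\pi'$ implies $\mathfrak D(\pi)=\mathfrak D(\pi')$. With these definitions the theorem is the universal property of the quotient map $\pi\mapsto q_\pi$.

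\emph{Sufficiency.} Suppose $\mathfrak D=\overline{\mathfrak D}\circ(\pi\mapsto q_\pi)$ and $\pi\sim\pi'$. Then $q_\pi=q_{\pi'}$, so $\mathfrak D(\pi)=\overline{\mathfrak D}(q_\pi)=\overline{\mathfrak D}(q_{\pi'})=\mathfrak D(\pi')$.

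\emph{Necessity.} Suppose $\mathfrak D$ is invariant. We construct $\overline{\mathfrak D}$ on the set $Q=\{q_\pi:\pi\in\Pi\}$ of achievable core distributions via a section map. For $q\in Q$, choose a representative edit $s(m)\in\kappa_y^{-1}(m)$ for each core $m$; the fibers are nonempty because $\mathcal M_y=\kappa_y(\mathcal F)$. Define the lifted policy $\sigma(q)=\sum_m q(m)\,\delta_{s(m)}$, which satisfies $q_{\sigma(q)}=q$. Then set $\overline{\mathfrak D}(q)=\mathfrak D(\sigma(q))$.

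For any $\pi$, we have $q_{\sigma(q_\pi)}=q_\pi$, so $\sigma(q_\pi)\sim\pi$. Invariance then gives $\mathfrak D(\pi)=\mathfrak D(\sigma(q_\pi))=\overline{\mathfrak D}(q_\pi)$. The construction is also independent of the choice of section $s$: two lifts of the same $q$ are equivalent, so invariance assigns them the same value.

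The equivalence clause of the statement is just the factorization itself. Uniqueness of $\overline{\mathfrak D}$ on $Q$ follows from surjectivity of $\pi\mapsto q_\pi$ onto $Q$; in fact $Q$ is the full simplex $\Delta(\mathcal M_y)$, again because the fibers are nonempty.

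The main obstacle is not technical but definitional. The argument above is clean only if "invariant to within-core redistribution" means invariance under all mass-preserving moves within fibers. If the paper instead means invariance only under elementary transfers, i.e.\ moving mass between two edits in the same core, I would add a short lemma: any two $\pi,\pi'$ with equal core marginals are connected by finitely many such transfers. This follows by equalizing one fiber at a time, shifting mass from coordinates with excess to coordinates with deficit within that fiber, and using finiteness of $\mathcal F$. That lemma reduces the elementary-transfer version to the equivalence-class version. If the functional is allowed to depend on the instance $(M_0,G,y)$, the whole argument is applied pointwise for each instance.
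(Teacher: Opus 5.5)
Your proposal is correct and follows essentially the same route as the paper. The paper also proves sufficiency directly, and proves necessity by setting $\overline{\mathfrak D}(q)=\mathfrak D(\pi_q)$ through the representative lift $\pi_q=\sum_m q(m)\delta_{r(m)}$ from its surjectivity lemma, with well-definedness and uniqueness coming from invariance and surjectivity exactly as you argue; your elementary-transfer remark is a harmless extra that the paper does not need, since it defines invariance directly as $T\pi=T\pi'\Rightarrow\mathfrak D(\pi)=\mathfrak D(\pi')$.
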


Theorem~\ref{thm:quotient_factorization} formalizes why exploration should depend only on the induced distribution over repair cores. The intuition is direct: splitting probability across equivalent realizations inflates action entropy, whereas the aggregated core probability $q(m)$ remains unchanged. 
Evaluating entropy over this aggregated distribution removes multiplicity bias by construction, leading to the quotient-regularized objective:
\(
\mathcal J_{\mathrm{quot}}(\theta)
=
\mathbb E_{(M_0,G,y)\sim\mathcal D}
\left[
\mathbb E_{E\sim\pi_\theta}
\bigl[\Phi_y(M_E)\bigr]
+
\beta H(q)
\right],
\label{eq:quotient_entropy_objective}
\)
where the quotient entropy is $H(q) = -\sum_{m\in\mathcal M_y} q(m)\log q(m)$. Crucially, optimizing $\mathcal J_{\mathrm{quot}}$ weights distinct hypotheses strictly by task utility rather than raw representation multiplicity (Appendix~\ref{app:exploration}).

\subsection{Graded Quotient Estimation with R\'enyi Kernels}
\label{sec:renyi}

\textbf{Exact matching gives coarse finite-sample quotient estimates.}
Because the core probability $q(m)$ is an unobserved fiber mass rather than a closed-form likelihood, the quotient diversity must be estimated from sampled cores.
Exact-frequency estimation, however, provides only binary crowding information: structurally different cores with the same empirical multiplicity receive identical credit.
This limitation becomes extreme in the sparse-collision regime.
With $L$ sampled cores, the expected number of exact pairwise collisions is
$\binom{L}{2}\sum_m q(m)^2$.
When $\binom{L}{2}\sum_m q(m)^2 \ll 1$, a rollout group contains few or no repeated cores, making distinct samples indistinguishable under exact matching (Appendix~\ref{app:renyi}).
Thus, exact-count estimation provides only a coarse finite-sample exploration signal, even before collisions vanish entirely.

\textbf{R\'enyi kernels turn binary identity into graded mechanistic crowding.}
The key limitation is the knife-edge nature of exact matching: it distinguishes identical from non-identical cores, but provides no graded notion of crowding among distinct repairs.
Indeed, the exact pairwise collision probability
\(
\sum_m q(m)^2
\)
is precisely the quadratic R\'enyi concentration
\(
\exp[-H_2(q)]
\)
~\citep{Principe2000,giraldo2014measuresentropydatausing}.
This suggests replacing quotient Shannon entropy with quadratic R\'enyi entropy,
whose exact collision form admits a graded relaxation while preserving the quotient identity.
Representing each repair core as a signed edit set, we replace
the exact-match collision operator with the graded kernel
\(
k(m,m')=\exp(-(1-J(m,m'))/h),
\)
where $J$ is Jaccard similarity and $h>0$ controls smoothness.
As $h\rightarrow0$, $k$ recovers exact matching, whereas finite $h$ assigns progressively larger collision mass to increasingly overlapping repairs.
This kernel relaxation exposes mechanistic crowding before exact duplicates dominate (Figure~\ref{fig:theory}b), discounting exploration credit for clustered repairs while rewarding genuinely distinct bypasses. 
Replacing exact-collision R\'enyi entropy with the kernelized surrogate
\(
H_{2,k}(q)
=
-\log \mathbb E_{m,m'\sim q}[k(m,m')]
\)
gives the crowding-aware objective:
\begin{equation}
\mathcal J_{\mathrm{renyi}}(\theta)
=
\mathbb E_{(M_0,G,y)\sim\mathcal D}
\left[
\mathbb E_{E\sim\pi_\theta}
\bigl[\Phi_y(M_E)\bigr]
+
\beta H_{2,k}(q)
\right].
\label{eq:renyi_entropy_objective}
\end{equation}
Kernelized R\'enyi estimation therefore preserves the quotient notion of identity while replacing coarse binary collisions with a graded finite-sample exploration signal.

\subsection{QuotientPO: Quotient-Space Policy Optimization}
\label{sec:qrl}


QuotientPO operationalizes quotient exploration within GRPO by decomposing verifier feedback and quotient diversity into candidate-level credit.  We optimize a verifier-shaped relaxation where each component provides a principled finite-sample realization of Eq.~(\ref{eq:renyi_entropy_objective}).

\textbf{Verifier-shaped task reward.} For a candidate repair $E$, we define the terminal task reward as
\(
R_y(E)
=
\Phi_y(M_E)-\Phi_y(M_0)
+
2\, \mathbb{I}[\text{all satisfied}].
\)
Since $\Phi_y(M_E)-\Phi_y(M_0) \le 1$, the success weight of $2$ enforces a strict margin ensuring that any fully verified repair ($R_y \ge 2$) strictly outranks any partial improvement ($R_y < 1$). 
This rules out reward-level satisficing on partial repairs while retaining dense shaping rewards during early exploration.

\textbf{Group-relative quotient credit.}
For each organism, we sample $L$ repairs and canonicalize them into repair cores $m_1,\ldots,m_L$.  While the kernelized R\'enyi entropy in Section~\ref{sec:renyi} evaluates diversity collectively across the rollout, policy optimization requires candidate-level credit. A leave-one-out linearization of this group entropy around the empirical mean gives the normalized crowding cost
\begin{equation}
c_i
=
2L
\frac{\sum_{j\neq i} k(m_i,m_j)}
{\sum_{a\neq b} k(m_a,m_b)}
-2,
\qquad
\sum_{i=1}^L c_i=0.
\label{eq:qrl_kernel_credit}
\end{equation}
A positive $c_i$ indicates that a repair core is crowded by similar candidates, while a negative value indicates that it expands quotient-space coverage. The zero-sum form aligns this diversity signal with group-relative optimization. Its derivation is given in Appendix~\ref{app:optimization}.

\textbf{QuotientPO objective.}
 Let $\rho_i(\theta) = \pi_\theta(E_i \mid M_0,G) / \pi_{\theta_{\mathrm{old}}}(E_i \mid M_0,G)$, and let $A_i = (R_y(E_i) - \bar R) / (\sigma_R + \varepsilon)$ denote the group-relative advantage with sample mean $\bar R$ and standard deviation $\sigma_R$.  The complete objective is
\begin{equation}
\mathcal J(\theta)
=
\frac{1}{L}
\sum_{i=1}^L
\min\!\Bigl(
\rho_i(\theta)A_i,
\operatorname{clip}(\rho_i(\theta),1-\epsilon,1+\epsilon)A_i
\Bigr)
-
\frac{\beta}{L}
\sum_{i=1}^L
\rho_i(\theta)c_i
+
\mathcal R_{\mathrm{aux}}(\theta).
\label{eq:qpo_objective}
\end{equation}
The GRPO term optimizes verifier quality, while the quotient term shifts probability away from crowded cores toward distinct repair hypotheses.
The auxiliary term $\mathcal R_{\mathrm{aux}}$ combines QPI credit for newly discovered successful cores and an optional redundancy penalty for edits outside the canonical core. More details are given in Appendix~\ref{app:algorithm}.

\section{Experiments}
\label{sec:experiments}

\begin{table*}[!t]
\centering
\small
\setlength{\tabcolsep}{4pt}
\renewcommand{\arraystretch}{1.5}

\caption{
\textbf{Main results.}
MILP is an oracle with test-time phenotype access.
SFT methods use checkpoints selected on the validation split, while RL methods are evaluated at the final checkpoint after an identical 864-update training budget.
All results are evaluated on the test set, reporting mean $\pm$ standard deviation across three seed runs.
}
\label{tab:main_results}

\begin{tabularx}{\textwidth}{@{}l*{6}{Y}@{}}
\toprule

\textbf{Method}
& \textbf{Greedy}
& \textbf{Success@1}
& \textbf{Success@8}
& \textbf{Success@32}
& \shortstack{\textbf{Improved}\\\textbf{(mean@32)}}
& \shortstack{\textbf{Success}\\\textbf{Cores@32}}
\\

\midrule

\textit{MILP (Oracle)}
& 15.01
& -
& -
& -
& 25.63
& 0.15
\\

\midrule

SFT-Hard
& 1.72$_{\pm\text{0.16}}$
& 0.51$_{\pm\text{0.03}}$
& 1.96$_{\pm\text{0.11}}$
& 4.49$_{\pm\text{0.05}}$
& 1.64$_{\pm\text{0.03}}$
& 0.09$_{\pm\text{0.01}}$
\\

SFT-Soft
& 2.65$_{\pm\text{0.35}}$
& 2.40$_{\pm\text{0.05}}$
& 5.27$_{\pm\text{0.41}}$
& 7.52$_{\pm\text{0.49}}$
& 9.92$_{\pm\text{0.18}}$
& 0.16$_{\pm\text{0.01}}$
\\

\midrule

Vanilla GRPO
& 7.97$_{\pm\text{0.29}}$
& 7.43$_{\pm\text{0.03}}$
& 13.79$_{\pm\text{1.09}}$
& 17.93$_{\pm\text{1.54}}$
& 30.21$_{\pm\text{0.59}}$
& 0.50$_{\pm\text{0.01}}$
\\

MaxEnt
& 7.29$_{\pm\text{1.16}}$
& 5.97$_{\pm\text{0.86}}$
& 12.69$_{\pm\text{1.27}}$
& 17.12$_{\pm\text{1.41}}$
& 29.35$_{\pm\text{2.03}}$
& 0.48$_{\pm\text{0.05}}$
\\

GFlowNet
& 3.20$_{\pm\text{0.23}}$
& 3.23$_{\pm\text{0.15}}$
& 8.35$_{\pm\text{0.38}}$
& 13.10$_{\pm\text{0.17}}$
& 14.91$_{\pm\text{0.49}}$
& 0.34$_{\pm\text{0.01}}$
\\

\midrule

\textbf{QuotientPO}
&\textbf{ 8.47$_{\pm\text{0.25}}$}
& \textbf{7.76$_{\pm\text{0.23}}$}
&\textbf{ 15.30$_{\pm\text{0.30}}$}
& \textbf{20.10$_{\pm\text{0.52}}$}
& \textbf{31.59$_{\pm\text{0.82}}$}
& \textbf{0.57$_{\pm\text{0.02}}$}
\\

\bottomrule
\end{tabularx}
\vspace{-0.2cm}
\end{table*}

\subsection{Experimental Setup}

\textbf{Datasets.} We pair strain-level metabolic phenotypes from BacDive~\citep{10.1093/nar/gkae959} with matched NCBI~\citep{ncbi} proteomes, yielding 11,275 organisms spanning 2,454 genera and 40 phyla. Each proteome is reconstructed under 4 CarveMe~\citep{carveme} metabolic universes in the BiGG~\citep{bigg} namespace. Mapping and filtering phenotype annotations onto 16 core BiGG metabolite families produces 3,978 phenotype-consistent candidate GEMs for synthetic pretraining and 28,842 phenotype-inconsistent GEMs for post-training repair and evaluation. We first partition the post-training pool into train, validation, and test splits with an 85:5:10 ratio. Subsequent phenotype-assay filtering and certified-teacher selection leave 2,504/294 training/validation GEMs for SFT, while the RL split retains 9,196/1,087 GEMs and the final sealed test set contains 2,212 GEMs. All splits are species-disjoint, with 5 held-out genera demanding out-of-distribution generalization rather than taxonomic interpolation. Throughout training and evaluation, BacDive phenotypes are never provided as policy inputs, but are used to generate expert repairs via phenotype-constrained MILP for SFT, define RL rewards, and evaluation (Appendix~\ref{app:data}).

\begin{wrapfigure}{r}{0.54\textwidth}
\vspace{-0.4cm}
    \centering
    \includegraphics[width=\linewidth]{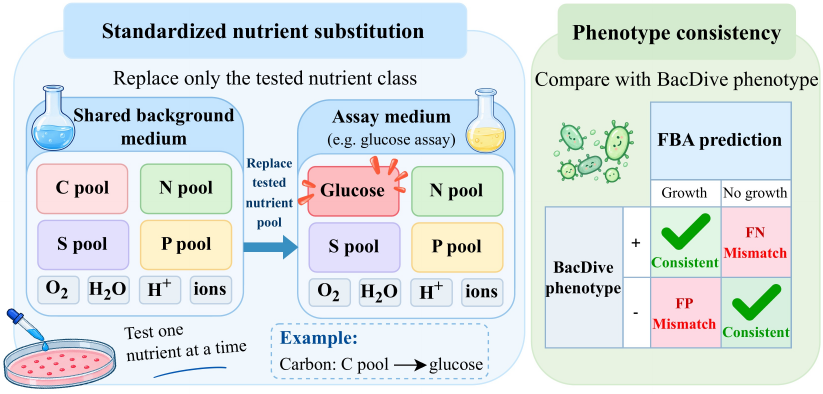}
      \vspace{-0.6cm}
    \caption{\textbf{Verifier design.} }
    \vspace{-0.4cm}
    \label{fig:verfier}
\end{wrapfigure}

\textbf{Verifier.} We define 18 standardized FBA assays across 16 BacDive nutrient families, where each tested nutrient serves as the sole elemental source against an element-depleted background to preclude metabolic bypass (Fig.~\ref{fig:verfier}). Nutrients and oxygen uptake bounds are 10 and 20, respectively, with growth defined by biomass flux $>10^{-3}$. The terminal verifier rewards any repair reproducing the observed phenotypes.

\textbf{Evaluation Metrics.}
We use \textbf{Success@$\boldsymbol{k}$}(\%) as our primary metric, defined as the probability that at least one of $k$ sampled repairs achieves full phenotype recovery ($\Phi_y(M_E)=1$), following multi-sample evaluation in RL and generative modeling~\citep{HongM-RSS-26,chen2021evaluatinglargelanguagemodels,feng2026biomd}.
We report $k \in \{1,8,32\}$ alongside greedy decoding (Greedy), the rate of partial progress $\Phi_y(M_E) > \Phi_y(M_0)$ (Improved), and the macro-averaged number of distinct canonical repair cores recovered among successful repairs within the same 32 samples (Success Cores@32).


\textbf{Baselines.} We compare against 6 baselines across 3 paradigms: (1)~\emph{domain oracle}: MILP with test-time phenotype access; (2)~\emph{standard policy learning}: imitation across probabilistic and discrete-threshold architectures (SFT-Soft/Hard) alongside verifier RL (Vanilla GRPO); and (3)~\emph{diversity-oriented baselines}: action-entropy GRPO (MaxEnt) and GFlowNet~\citep{bengio2021flow} (reward-proportional raw-space sampling without quotient equivalence). Appendix~\ref{app:baselines} details all baselines.


\textbf{Implementation Details.} 
Following prior methods of GEM GNN modeling~\citep{Hasibi2024,huang2024dual}, we train a 92.85M-parameter GNN policy that fuses frozen ESM-2 protein embeddings~\citep{doi:10.1126/science.ade2574} into the global GEM representation, predicting one-shot repair probabilities over a candidate universe of 28,301 reactions. 
For RL training, we initialize from SFT-Soft and train for 3 epochs using AdamW with a fixed learning rate of $2e{-6}$, group size $L=32$, bandwidth $h=0.25$, and the quotient term weight $\beta=0.02$. All FBA simulations use HiGHS~\citep{huangfu2015parallelizingdualrevisedsimplex}. See Appendix~\ref{app:algorithm} and \ref{app:experiment} for more details.

\subsection{Main Results}

\textbf{Quotient exploration improves repair coverage.} Table~\ref{tab:main_results} shows a consistent separation between mechanism-aware exploration and raw-space diversity objectives. QuotientPO achieves the strongest performance across all learned metrics, improving Success@32 from 17.93\% (Vanilla GRPO) to 20.10\% and core discovery from 0.50 to 0.57.
These absolute success rates reflect task difficulty: 
even the phenotype-informed MILP oracle, despite direct test-label access, reaches 15.01\% full repair within its fixed 600\,s search budget.
In contrast, MaxEnt does not improve over vanilla, and GFlowNet underperforms despite explicitly encouraging diversity, confirming that syntactic variety fails to expand mechanistic diversity. The large gap over SFT further shows that supervised imitation alone cannot optimize verifier-defined outcomes.


\textbf{The advantage emerges through test-time scaling.}
Figure~\ref{fig:success_k} shows that quotient exploration improves both repair success and successful-core discovery as the sampling budget increases.
Relative to Vanilla GRPO, the Success@$k$ margin grows from $+0.33$ points at $k=1$ to $+2.17$ at $k=32$, while the gap in Success cores@$k$ expands from $+0.003$ to $+0.068$ cores per GEM.
Thus, the growing deployment advantage reflects genuine coverage of distinct successful repair cores rather than raw scaling alone.
Training dynamics (Fig.~\ref{fig:rl_curves}) mirror this pattern: differences in Success@1 remain modest, whereas QuotientPO separates clearly under Success@8 as training progresses.
Together, these results show that quotient exploration becomes increasingly effective as more samples are allocated to uncover alternative valid repairs.

\begin{figure}[t]
    \centering
    \begin{subfigure}[t]{0.36\linewidth}
        \centering
        \includegraphics[width=\linewidth]{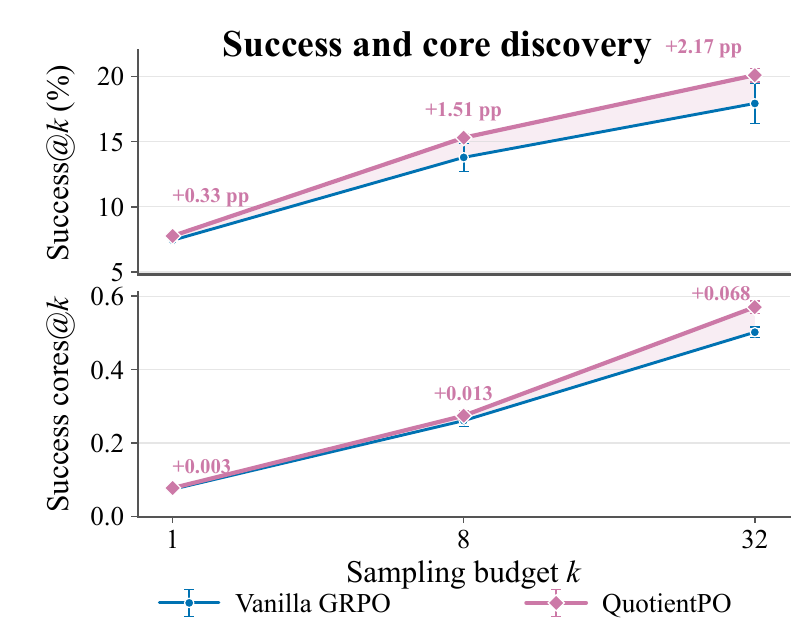}
        \caption{Test-time scaling.}
        \label{fig:success_k}
    \end{subfigure}
     \hfill
      \begin{subfigure}[t]{0.63\linewidth}
        \centering
        \includegraphics[width=\linewidth]{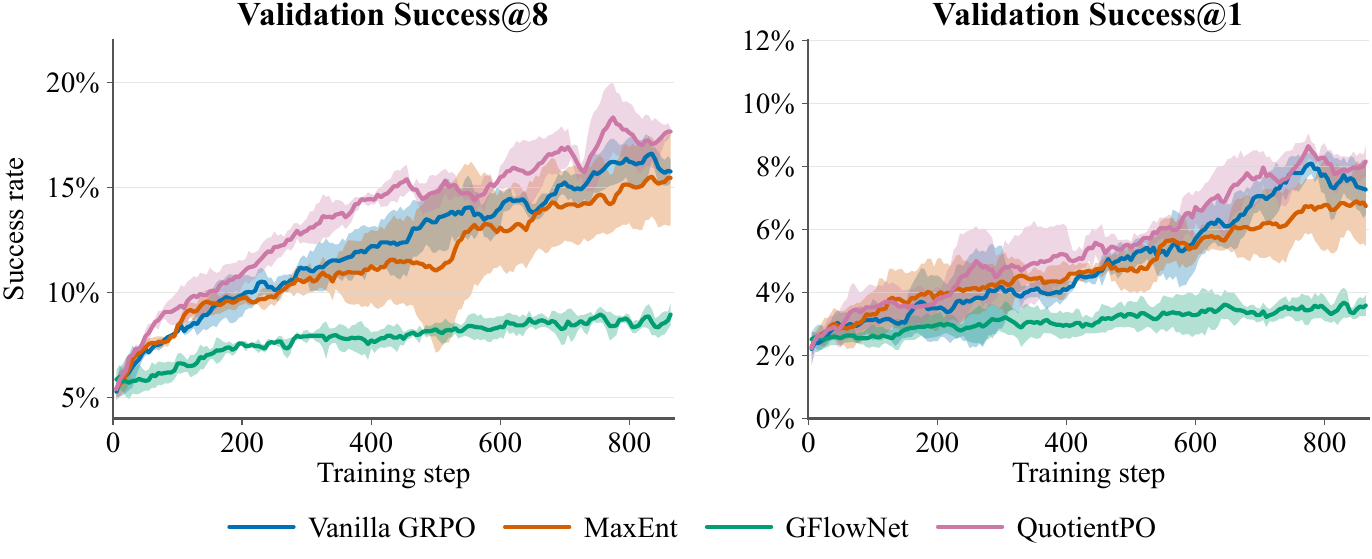}
        \caption{Training dynamics.}
        \label{fig:rl_curves}
    \end{subfigure}

   \caption{
    \textbf{Test-time scaling and training dynamics.}
    \textbf{(a)} Repair success (Success@$k$) and distinct successful-core discovery (Success cores@$k$).
    \textbf{(b)} Validation curves across different RL methods.
}
    \label{fig:rl_analysis}
    \vspace{-0.2cm}
\end{figure}

\subsection{Empirical Analysis of QuotientPO's Mechanism} 

\begin{wrapfigure}{r}{0.42\textwidth}
    \vspace{-0.2cm}
    \centering
    \includegraphics[width=\linewidth]{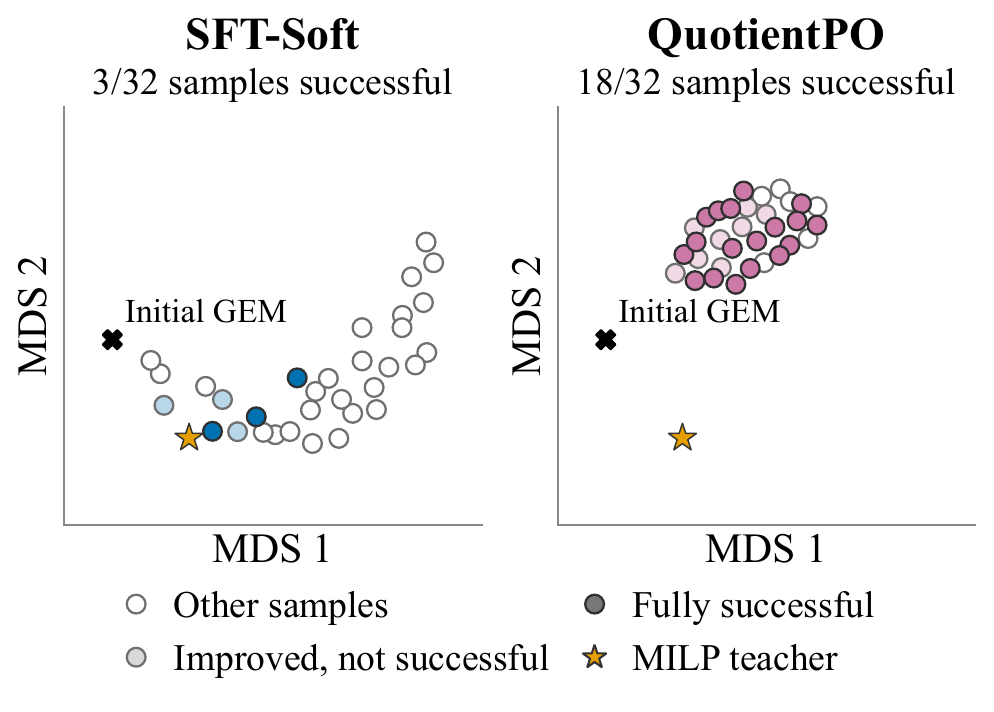}
    \vspace{-0.4cm}
 \caption{\textbf{Landscape of sampled repairs} (\textit{Streptomyces axinellae}).}
    \label{fig:sft-vs-rl}
    \vspace{-0.2cm}
\end{wrapfigure}
\textbf{Verifier feedback moves repair beyond the supervised teacher prior.}
SFT is trained to amortize MILP-generated repairs, but a phenotype-consistent repair is not uniquely identified by its teacher 
edit set. 
Following \citet{tokar2020gsoap}, we project sampled repair sets via metric MDS on pairwise
 signed-Jaccard distances under shared coordinate axes (Fig.~\ref{fig:sft-vs-rl}). While SFT-Soft clusters tightly around the teacher, QuotientPO discovers a well-separated region containing more successful solutions. 
This divergence also holds quantitatively  across the test set: mean distance to the MILP teacher increases from 0.784 for SFT-Soft to 0.980 for QuotientPO (0.780 vs.\ 0.961 among successful repairs).
Thus, the gain from verifier-guided optimization is not explained by more faithful imitation of the teacher, as successful QuotientPO repairs are substantially farther from it.
This supports the partial-identification view of GEM repair, where the teacher provides one admissible correction but verifier feedback reveals alternative successful repairs beyond the supervised mode.


\textbf{Quotient exploration converts diversity into useful mechanism coverage.}
Figure~\ref{fig:why_quotient} exposes a key failure mode: although MaxEnt and GFlowNet
explicitly diversify raw repair sets, over 80\% of their sampled representations collapse after
canonicalization, and neither improves successful-core yield over Vanilla GRPO.
In contrast, QuotientPO applies diversity pressure directly in quotient space, reaching
6.44 distinct cores and 0.57 successful cores per GEM under the same 32-sample budget, compared
with 5.98 and 0.50 for Vanilla GRPO.
To disentangle mechanism coverage from the higher repair success rate itself, we further compare the
two methods on paired GEMs where both produce at least $r$ successful repairs and equalize the
successful-sample budget.
QuotientPO recovers more distinct successful cores at
$r\in\{2,4,8\}$: 1.46 vs.\ 1.43, 2.05 vs.\ 1.96, and 2.78 vs.\ 2.66, respectively
(Appendix~\ref{app:success_controlled}).
Thus, its coverage advantage persists after controlling for the number of successful samples,
rather than arising solely from a higher repair-success frequency.

\begin{figure}[t]
\centering
\begin{subfigure}[t]{0.49\linewidth}
\centering
\includegraphics[width=\linewidth]{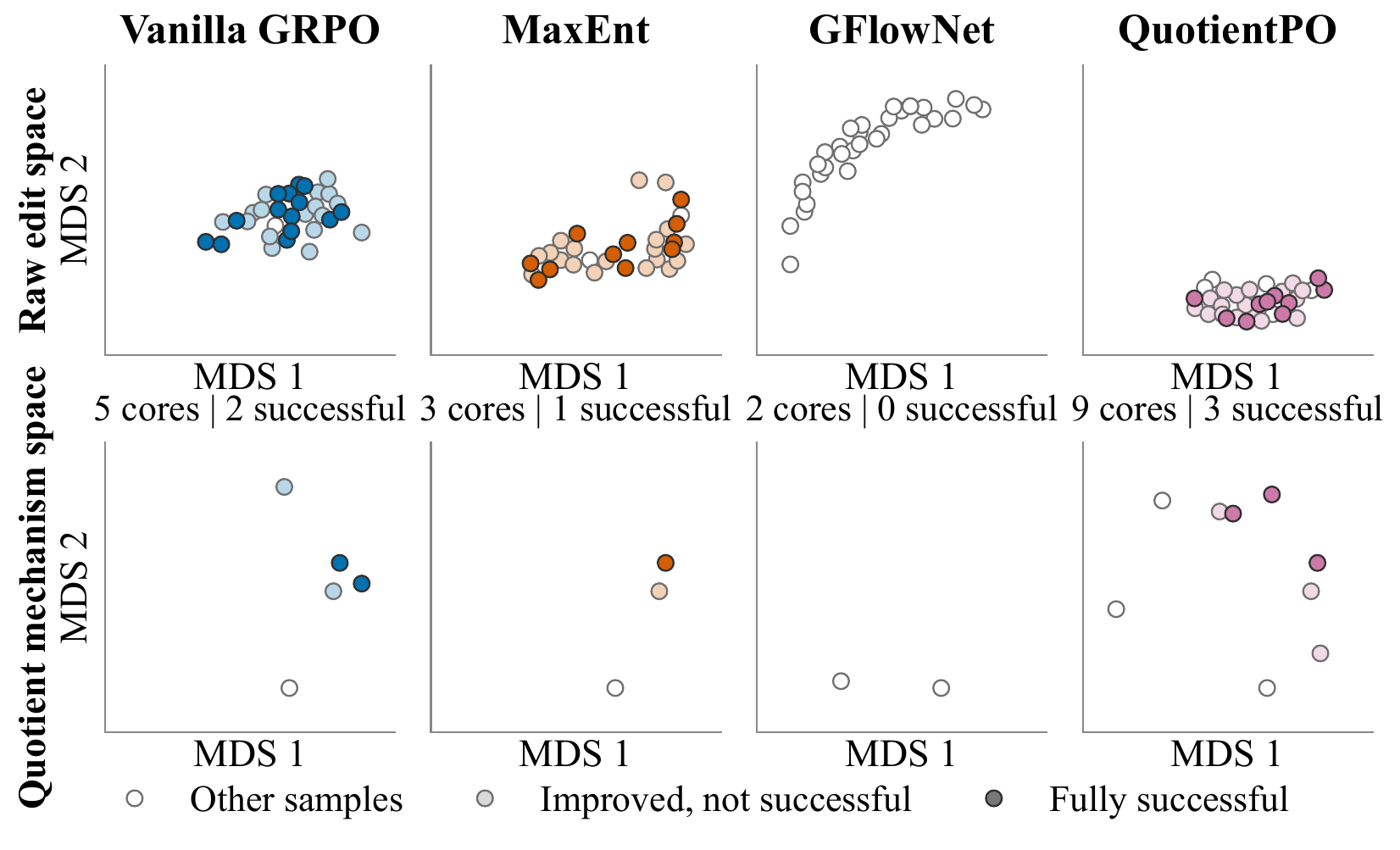}
\caption{Raw diversity does not imply mechanism diversity.}
\label{fig:quotient_space_visualization}
\end{subfigure}
\hfill
\begin{subfigure}[t]{0.49\linewidth}
\centering
\includegraphics[width=\linewidth]{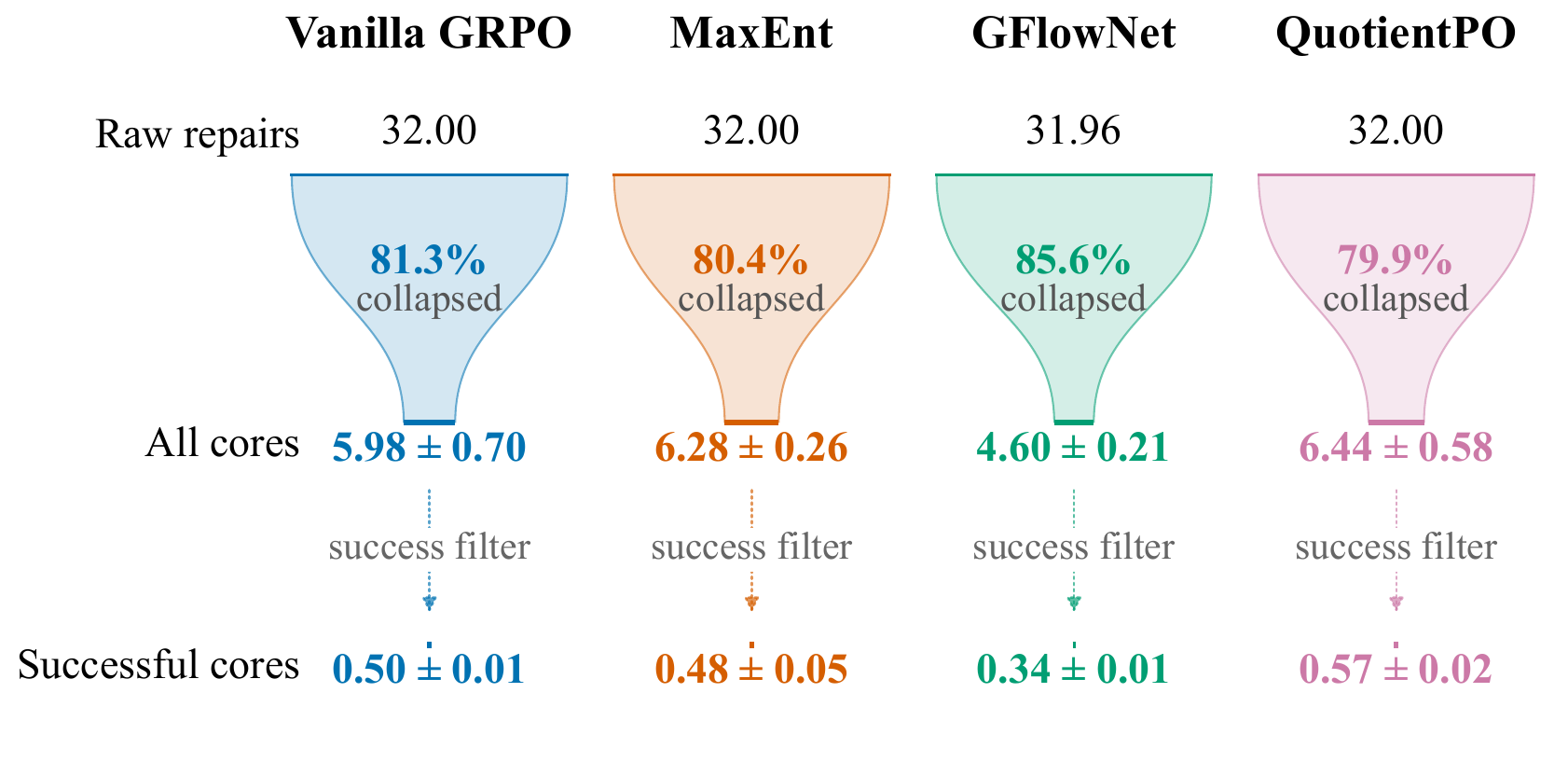}
\caption{Raw-to-core collapse and success filtering.}
\label{fig:raw_to_core}
\end{subfigure}
\caption{\textbf{Exploration across raw and quotient spaces.} \textbf{(a)} MDS projections of sampled repairs before (top) and after (bottom) canonicalization on (\textit{S.~griseoflavus}). \textbf{(b)} Funnel transitions from distinct raw repairs ($k=32$) to verifier-validated cores across the test set.}
\label{fig:why_quotient}
\vspace{-0.4cm}
\end{figure}

\begin{wraptable}{r}{0.6\textwidth}
\centering
\footnotesize
\setlength{\tabcolsep}{0.8pt}
\renewcommand{\arraystretch}{1}
\vspace{-0.4cm}
\caption{
\textbf{Finite-sample credit resolution and successful-core diversity.}
Training diagnostics use all rollouts.
}
\label{tab:kernel_renyi}
\vspace{-0.1cm}
\begin{tabular}{@{}lcccc@{}}
\toprule
&
\multicolumn{2}{c}{\textbf{Training (200→864)}}
&
\multicolumn{2}{c}{\textbf{Successful Cores}}
\\
\cmidrule(lr){2-3}
\cmidrule(lr){4-5}

\textbf{Method}
& \textbf{Exact ties (\%)}
& \textbf{Resolved (\%)}
& \textbf{Collision $\downarrow$}
& \textbf{Mean $d_J$ $\uparrow$}
\\
\midrule

Vanilla
& 8.19 $\rightarrow$ 8.59
& 60.55 $\rightarrow$ 84.86
& 0.5769 & 0.2408 \\

Shannon
& 8.60 $\rightarrow$ 9.72
& 59.23 $\rightarrow$ 80.20
& 0.5687 & 0.2431 \\

Exact R2
& 7.34 $\rightarrow$ 10.23
& 59.71 $\rightarrow$ 78.20
& 0.5285 & 0.2861 \\

Kernel R2
& 8.57 $\rightarrow$ \textbf{7.99}
& 63.81 $\rightarrow$ 72.78
& \textbf{0.5176} & \textbf{0.3008} \\

\bottomrule
\end{tabular}
\vspace{-0.2cm}
\end{wraptable}

\textbf{Kernel relaxation recovers graded diversity signals from coarse finite-sample counts.}
Under Exact R\'enyi-2, the exact-count tie rate rises from 7.34\% to 10.23\% over training (Table~\ref{tab:kernel_renyi}), while 78.2\% of these final ties are resolved by kernel credit. In contrast, Kernel R\'enyi-2 reduces the exact-count tie rate of its rollout groups to 7.99\%, while achieving the lowest successful-core collision (0.518) and largest mean edit Jaccard distance $d_J=1-J$ (0.301). Consistently, it also achieves the best Success@32 and successful-core (Table~\ref{tab:quotient_objective_downstream}). These results show that kernel relaxation provides finer exploration credit precisely where exact multiplicity counts lose resolution.

\begin{wrapfigure}{r}{0.5\textwidth}
    \vspace{-0.4cm}
    \centering
    \includegraphics[width=\linewidth]{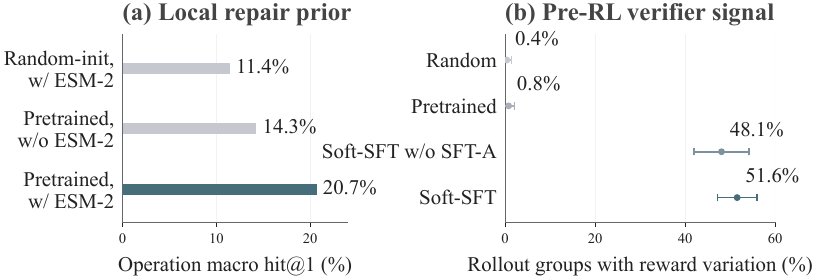}
    \vspace{-0.4cm}
\caption{\textbf{Building an exploration-ready policy.}
\textbf{(a)} Pretraining and ESM-2 enhance the local repair prior.
\textbf{(b)} Set-level SFT unlocks rollout reward variance for RL.
Error bars denote standard deviation across 8 rollout batches.}
    \label{fig:initialization_analysis}
    \vspace{-0.6cm}
\end{wrapfigure}
\textbf{Structured initialization turns sparse verifier feedback into a learnable signal.}
Figure~\ref{fig:initialization_analysis} separates the roles of representation learning and supervised policy formation.
During single-edit imitation (SFT-A), which trains the model on individual MILP teacher operations, operation-level Hit@1 drops from $20.7\%$ to $11.4\%$ without structural pretraining and to $14.3\%$ without
 ESM-2 conditioning,
 showing that both improve local repair prediction.
Yet representation alone provides almost no terminal learning signal: before any RL update, only $0.4\%$ and $0.8\%$ of rollout groups exhibit within-group reward variation under random and pretrained policies.
Set-level supervision changes this regime sharply, increasing the rate to $48.1\%$ without SFT-A and to $51.6\%$ with the complete supervised pipeline.
Thus, while pretraining and genomic conditioning provide a local repair prior, set-level supervision is crucial to unlock group reward variance for RL.

\subsection{Robustness and Ablation Analysis}
\label{sec:ablation}



\textbf{Robustness across RL objectives and policy architectures.}
As shown in Table~\ref{tab:ablations}(a,b), QuotientPO consistently outperforms baselines across RL formulations and architectures. Under RLOO~\citep{ahmadian-etal-2024-back}, it reaches 17.21 Success@32 (S@32, vs.\ 15.75 Vanilla), while under GRPO, it improves both Soft (20.10 vs.\ 17.93) and Hard (6.65 vs.\ 5.92) architectures.
These results show that the benefit of quotient-space exploration is not tied to a specific RL formulation or  architecture.



\textbf{Design ablation and sensitivity.} 
QPI and the redundancy penalty yield smaller mean improvements, while the quotient term accounts for the dominant performance gain: removing it incurs the steepest drop in S@32 (from 20.10 to 17.54; Table~\ref{tab:ablations}c). 
Critically, with the same kernel and training budget, quotient-space exploration improves both Success@32 (20.10 vs. 18.16) and successful-core discovery (0.57 vs. 0.52) over its raw-space counterpart.
Sensitivity analysis (Table~\ref{tab:ablations}d) shows broad robustness to the diversity weight $\beta$, but sensitivity to bandwidth $h$: a larger $h$ over-smooths distant cores and blunts crowding selectivity, while leaving quotient identity unchanged.

\begin{table*}[t]
\centering
\footnotesize

\caption{
\textbf{Robustness and ablation studies.} S@32 denotes Success@32 (\%).
}
\label{tab:ablations}
\vspace{-0.2cm}
\begin{minipage}[t]{0.2\textwidth}
\centering
\textbf{(a) RLOO Base}
\setlength{\tabcolsep}{1.2pt}        
\renewcommand{\arraystretch}{1.15}
\begin{tabular}{@{}lcc@{}}
\toprule
\textbf{Method}
& \textbf{S@32}
\\
\midrule

Vanilla
& 15.75  \\

MaxEnt
& 14.29  \\

GFlowNet
& 11.57 \\

\textbf{QuotientPO}
& \textbf{17.21}  \\

\bottomrule
\end{tabular}
\end{minipage}
\hfill
%
\begin{minipage}[t]{0.25\textwidth}
\centering
\textbf{(b) Policy Architecture}
\setlength{\tabcolsep}{1.2pt}        
\renewcommand{\arraystretch}{1.05}
\begin{tabular}{@{}llcc@{}}
\toprule
\textbf{Arch.}
& \textbf{RL}
& \textbf{S@32} \\
\midrule

\multirow{2}{*}{Soft}
& Vanilla
& 17.93  \\

& QuotientPO
& 20.10  \\

\midrule

\multirow{2}{*}{Hard}
& Vanilla
& 5.92 \\

& QuotientPO
& 6.65 \\

\bottomrule
\end{tabular}
\end{minipage}
\hfill
%
\begin{minipage}[t]{0.23\textwidth}
\centering
\textbf{(c) Design Ablation}
\setlength{\tabcolsep}{1.2pt}        
\renewcommand{\arraystretch}{0.96}
\begin{tabular}{@{}lcc@{}}
\toprule
\textbf{Variant}
& \textbf{S@32}
 \\
\midrule

Default
& 20.10  \\
Raw-space kernel 
& 18.16 \\
w/o Quotient term
& 17.54 \\

w/o QPI term
& 19.62 \\

w/o Red. penalty
& 19.94  \\
\bottomrule
\end{tabular}
\end{minipage}
\hfill
%
\begin{minipage}[t]{0.27\textwidth}
\centering
\textbf{(d) Sensitivity }
\setlength{\tabcolsep}{2pt}        
\renewcommand{\arraystretch}{0.6}

\begin{tabular}{@{}llcc@{}}
\toprule
\textbf{Param.}
& \textbf{Setting}
& \textbf{S@32} \\
\midrule

--
& Default
& 20.10   \\

\midrule

\multirow{2}{*}{$h (0.25)$}
& 0.125
& 19.41  \\

& 0.5
& 18.57  \\

\midrule

\multirow{2}{*}{$\beta (0.02)$}
& 0.01
& 19.05 \\

& 0.04
& 20.00 \\

\bottomrule
\end{tabular}
\end{minipage}
\vspace{-0.4cm}
\end{table*}

\section{Case Study}

Figure~\ref{fig:case_study} grounds the mechanistic failure modes in concrete biological repairs.
On \textit{S.~mashuensis} (Fig.~\ref{fig:case1}), Vanilla GRPO collapses entirely onto boundary entry blockade (\texttt{NO3abc}, \texttt{NO3tex}), whereas QuotientPO discovers an orthogonal intracellular mechanism: jointly knocking out the NADH- and menaquinol-dependent reductases (\texttt{NITR}, \texttt{NTR3B}).
On \textit{S.~palmae} (Fig.~\ref{fig:case2}), syntactically different Vanilla proposals repeatedly retain the same extracellular bypass (\texttt{SUCRe}), so surrounding edits vary without resolving the sucrose phenotype, whereas QuotientPO recovers clean successful cores. Counterfactual replays confirm that toggling \texttt{SUCRe} directly flips repair outcomes.
These cases expose two failures of raw exploration: viable mechanisms can be missed, while surface variation around the same outcome-determining core can be mistaken for useful diversity. QuotientPO resolves both by directing exploration toward functionally distinct mechanisms.

\begin{figure}[h]
\centering
\vspace{-0.4cm}
\begin{subfigure}[t]{0.498\linewidth}
\centering
\includegraphics[width=\linewidth, trim=16 0 0 0,
    clip]{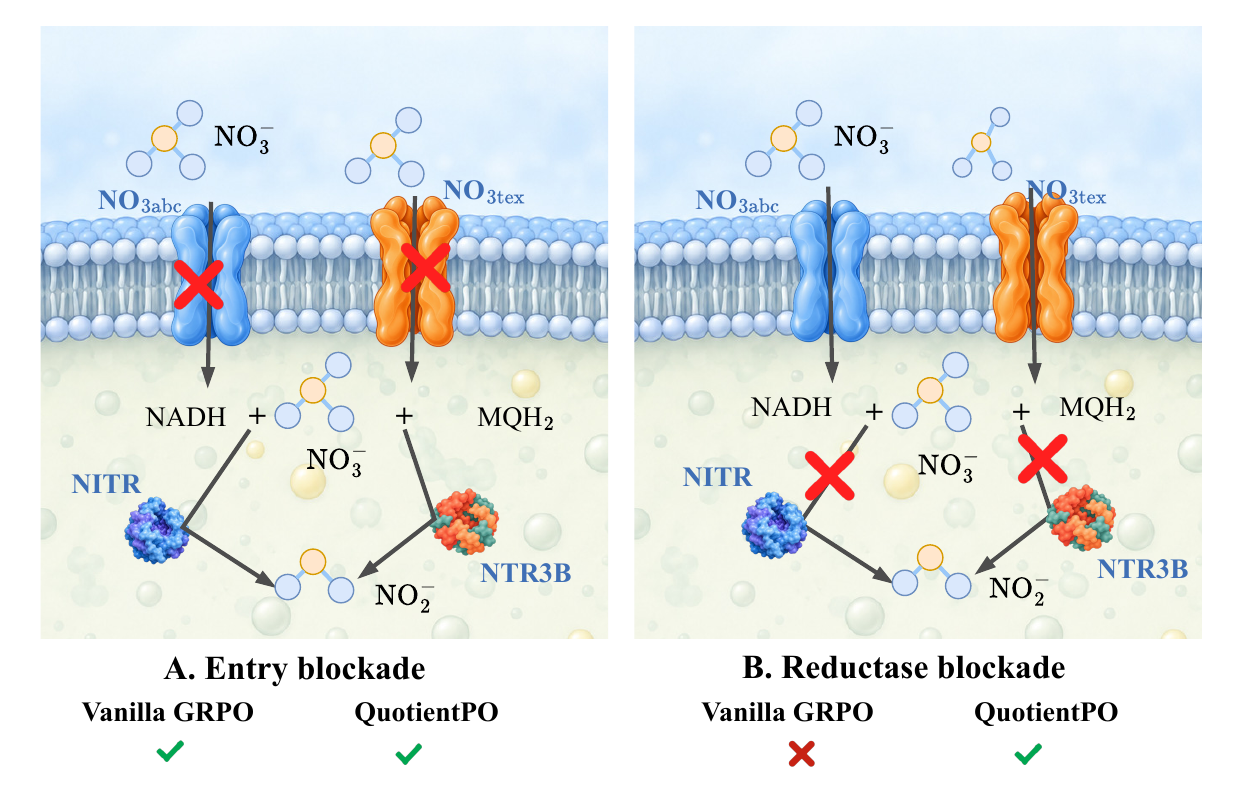}
\vspace{-0.4cm}
\caption{\textit{S. mashuensis.} Nitrate utilization.}
\label{fig:case1}
\end{subfigure}
\hspace{-0.02\linewidth}
\begin{subfigure}[t]{0.498\linewidth}
\centering
\includegraphics[width=\linewidth,trim=8 0 16 0,
    clip]{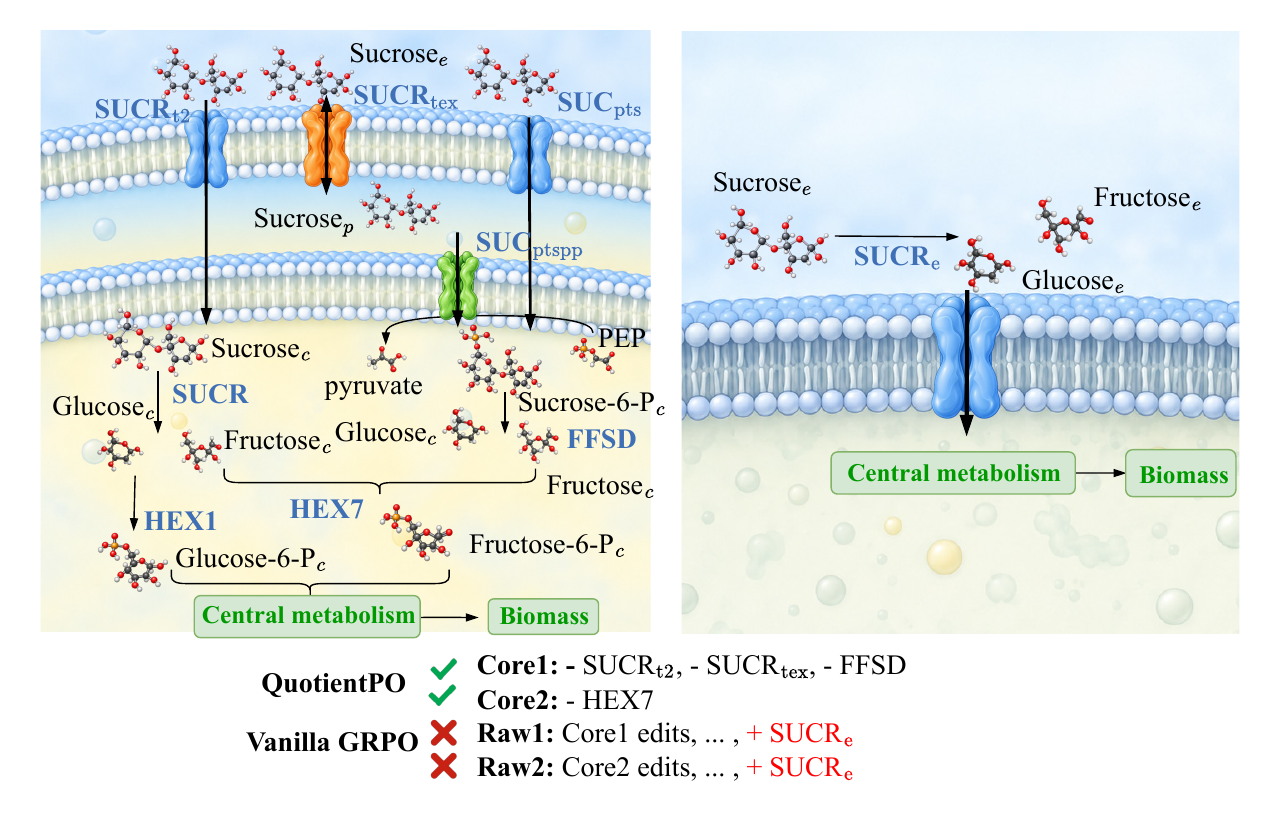}
\vspace{-0.4cm}
\caption{\textit{S. palmae.} Sucrose utilization.}
\label{fig:case2}
\end{subfigure}
\vspace{-0.2cm}
\caption{\textbf{Case studies of mechanistic repair.}}
\label{fig:case_study}
\vspace{-0.2cm}
\end{figure}

\newpage
\section{Conclusion}
Repairing GEMs from indirect functional feedback faces a structural mismatch: combinatorial candidate edits can collapse to the same underlying repair mechanism, causing standard exploration to mistake syntactic multiplicity for scientific diversity.
We address this mismatch by changing the unit of exploration from raw reaction-level actions to invariant repair mechanisms.
By canonicalizing verifier-preserving edits, QuotientPO explores directly in quotient space, with a kernelized Rényi relaxation replacing coarse exact-match statistics with graded finite-sample crowding signals.
This formulation removes representation-multiplicity bias while empirically improving repair success and the coverage of distinct valid mechanisms.
More broadly, our results suggest that when scientific solutions admit equivalent representations, effective exploration should be defined over equivalence classes rather than surface representations (limitations and future directions in Appendix~\ref{app:limitations}).

\section*{AI Use Statement}
We used generative AI tools to assist with writing refinement and code generation. For writing, these tools were used to improve the clarity, grammar, conciseness, and presentation of author-written text. For code, generative AI was used to assist with generating and revising portions of the implementation. All AI-assisted code was manually reviewed, tested, and verified by the authors before use. All AI-assisted writing was likewise reviewed and edited by the authors. The authors take full responsibility for the correctness and final content of this work.

\section*{Ethics statement}
This work uses publicly available microbial genomic and phenotype data from NCBI~\citep{ncbi} and BacDive~\citep{10.1093/nar/gkae959}, together with established computational resources for genome-scale metabolic modeling (CarveMe~\citep{carveme}, BiGG~\citep{bigg}, etc.), in accordance with the terms and policies of their respective sources. All experiments are computational and involve no human participants, personally identifiable information, animal experiments, or experimental manipulation of living organisms. Our method operates only on in silico metabolic models and is intended for scientific modeling and computational research. While improved metabolic models may support downstream applications in metabolic engineering or synthetic biology, such uses should follow appropriate institutional biosafety procedures, regulatory requirements, and responsible-use practices.

\section*{Reproducibility Statement}
We provide detailed materials to support reproduction of our results. The complete QuotientPO algorithm and optimization procedure are described in Appendix~\ref{app:algorithm}, with the core implementation and usage instructions included in the supplemental materials. The benchmark construction and data-processing pipeline, based on publicly available databases, are documented in Appendix~\ref{app:data}; the benchmark construction pipeline and evaluation code are also provided in the supplemental materials. Model architecture and training details for all stages are specified in Appendix~\ref{app:experiment}, and the corresponding code is included in the supplemental materials. Implementation details for all baselines are provided in Appendix~\ref{app:baselines}, with their core implementations likewise included in the supplemental materials.

\bibliography{iclr2027_conference}
\bibliographystyle{iclr2027_conference}

\newpage
\appendix
\section{Algorithm Details of QuotientPO}
\label{app:algorithm}

This section gives the implementation details of QuotientPO.
For compactness, let $x=(M_0,G,y)$ denote a training instance, while
emphasizing that the repair policy $\pi_\theta(E\mid M_0,G)$ conditions
only on the initial GEM and genomic representation. The phenotype labels
$y$ are used exclusively by the verifier.
QuotientPO fine-tunes the supervised one-shot repair policy using
four signals: verifier reward, kernelized quotient crowding (Quotient term), finite-panel
successful-core discovery credit (QPI term), and a redundancy penalty outside the
canonical repair core.

\subsection{One-shot repair policy}
Let $\mathcal A(M_0)$ denote the legal signed reaction edits for an initial
GEM, where $(\mathrm{ADD},r)$ and $(\mathrm{REMOVE},r)$ are treated as
distinct actions.
The policy predicts the complete unordered repair set in one shot.
For each legal edit $a\in\mathcal A(M_0)$, the actor produces a logit
$z_{\theta,a}$ and an operation-specific threshold
$\tau_{\theta,o(a)}$, giving the inclusion probability
\(
p_{\theta,a}
=
\operatorname{sigmoid}
\left(
z_{\theta,a}-\tau_{\theta,o(a)}
\right).
\)
The complete edit-set likelihood is the product of the corresponding
Bernoulli decisions,
\[
\log \pi_\theta(E\mid M_0,G)
=
\sum_{a\in\mathcal A(M_0)}
\left[
\mathbb I[a\in E]\log p_{\theta,a}
+
\mathbb I[a\notin E]\log(1-p_{\theta,a})
\right].
\label{eq:app_set_likelihood}
\]

\begin{algorithm}[h]
\caption{Verifier-preserving canonicalization $\kappa_y$}
\label{alg:canonicalize}
\small
\begin{algorithmic}[1]
\Require Repair set $E$, phenotype targets $y$,
         deterministic verifier $\mathrm{FBA}_y$,
         maximum parallelism $B \geq 1$
\Ensure Canonical repair core $m=\kappa_y(E)$

\State $\boldsymbol{\sigma}^{\star} \gets \mathrm{FBA}_y(E)$
\State $C \gets \operatorname{sort}_{(\mathrm{operation},\mathrm{reaction})}(E)$

\While{$|C|>0$}
    \State $(c_1,\ldots,c_n) \gets C$
    \State $\mathrm{removed} \gets \mathrm{false}$
    \State $i \gets 1$
    \State $w \gets 1$

    \While{$i \leq n$}
        \State $r \gets \min(n,i+w-1)$

        \For{$j=i,\ldots,r$}
            \State $S_j \gets C\setminus\{c_j\}$
            \State $h_j \gets
                \Call{Submit}{\mathrm{FBA}_y(S_j)=\boldsymbol{\sigma}^{\star}}$
            \Comment{Asynchronous verifier check}
        \EndFor

        \For{$j=i,\ldots,r$}
            \State $a_j \gets \Call{Await}{h_j}$
            \Comment{Consume results in the fixed order}

            \If{$a_j=\mathrm{true}$}
                \State $C \gets C\setminus\{c_j\}$
                \State $\mathrm{removed} \gets \mathrm{true}$
                \State \Call{CancelOrDiscard}{$h_{j+1:r}$}
                \State \textbf{break}
            \EndIf
        \EndFor

        \If{$\mathrm{removed}$}
            \State \textbf{break}
            \Comment{Restart after the committed deletion}
        \EndIf

        \State $i \gets r+1$
        \State $w \gets \min(2w,B)$
        \Comment{Expand only after the entire batch fails}
    \EndWhile

    \If{\textbf{not} $\mathrm{removed}$}
        \State \textbf{break}
        \Comment{No verifier-preserving singleton deletion remains}
    \EndIf
\EndWhile

\State \Return $C$
\end{algorithmic}
\end{algorithm}

\subsection{Verifier-Preserving Canonicalization}
\label{app:canonicalization}

For a sampled repair $E$, let $\boldsymbol{\sigma}_y(E) = (\mathbb I[\widehat y_i(M_E)=y_i])_{i=1}^N$ denote the complete phenotype-satisfaction vector across all $N$ conditions. The quotient map $\kappa_y$ (Section~\ref{sec:exploration}) is implemented as a deterministic, verifier-preserving reduction (Algorithm~\ref{alg:canonicalize}) that systematically removes redundant edits while preserving $\boldsymbol{\sigma}_y(E)$. The resulting deletion-irreducible \emph{repair core} $m=\kappa_y(E)\subseteq E$ defines the canonical functional representation of $E$ under the verifier and induces the redundancy cost $|E|-|\kappa_y(E)|$. Because $\kappa_y$ is deterministic, every repair is assigned to exactly one canonical core, yielding a well-defined quotient partition over the repair space. 

\subsection{Kernelized quotient crowding (Quotient term)}

For each organism, we sample a rollout group $E_{1:L}\sim\pi_{\theta_{\mathrm{old}}}(\cdot\mid M_0,G)$ and obtain their canonical cores $m_i=\kappa_y(E_i)$.
All cores participate in quotient exploration, irrespective of whether they achieve complete phenotype recovery. As in Section~\ref{sec:renyi}, signed repair cores are compared via the kernel
\[
k(m_i,m_j)
=
\exp\left(
-\frac{1-J(m_i,m_j)}{h}
\right),
\qquad
J(m_i,m_j)
=
\frac{|m_i\cap m_j|}{|m_i\cup m_j|},
\label{eq:app_kernel}
\]
with $J(\varnothing,\varnothing)=1$, and with ADD and REMOVE edits treated as distinct signed elements.

For $L\ge2$, the empirical off-diagonal kernel concentration is
\[
\widehat I
=
\frac{1}{L(L-1)}
\sum_{a\neq b}
k(m_a,m_b).
\]
The candidate-level crowding cost is directly given by Eq.~\eqref{eq:qrl_kernel_credit}:
\[
c_i
=
2L
\frac{
\sum_{j\neq i}
k(m_i,m_j)
}{
\sum_{a\neq b}
k(m_a,m_b)
}
-2.
\]
For numerical robustness, the implemented score is clipped:
\(
\widetilde c_i
=
\operatorname{clip}(c_i,-8,8).
\label{eq:app_crowding_clip}
\)
The analysis in Section~\ref{sec:qrl} characterizes the unclipped crowding score $c_i$;
implementation clipping is used only for finite-sample numerical stability.
A positive crowding cost suppresses probability assigned to locally redundant cores, while a negative cost favors isolated cores.

\subsection{Finite-panel successful-core discovery credit (QPI term)}
\label{app:qpi}

Kernelized R\'enyi credit encourages geometric diversity over the entire space of repair cores, but finite rollout groups provide sparse signals regarding which distinct cores are functionally useful.
We therefore supplement it with a finite-panel \emph{quotient policy iteration} (QPI) credit that explicitly rewards successful-core discovery.

For the reported setting, each rollout group of $L=32$ samples is deterministically repartitioned $R=4$ times into panels of size $K=8$.
For a panel $P$, let $S_P = \sum_{j\in P} s_y(E_j)$ denote the number of successful samples, and $N_P(m) = \sum_{j\in P} \mathbb I[m_j=m]$ denote the multiplicity of core $m$ within the panel. Each training organism maintains a bounded archive $\mathcal H_x$ of previously discovered successful cores.
Let $\mathcal H_x^{-}$ denote the archive before processing the current rollout group, and let
\(
\nu_i = \mathbb I[s_y(E_i)=1,\, m_i\notin\mathcal H_x^{-}]
\label{eq:app_novel}
\)
indicate that sample $i$ discovers a previously unseen successful core.
The multiplicity of novel samples realizing core $m$ in panel $P$ is $N_P^{\mathrm{new}}(m) = \sum_{j\in P} \nu_j\,\mathbb I[m_j=m]$.

For sample $i$ in panel $P$, the credit decomposes into three terms:
\begin{align*}
a_i(P)
&=
\begin{cases}
s_y(E_i)/S_P, & S_P>0,\\
0, & S_P=0,
\end{cases}
&&\text{successful coverage},
\\
b_i(P)
&=
\frac{1}{K} \mathbb I[N_P(m_i)=1],
&&\text{within-panel core diversity},
\\
n_i(P)
&=
\begin{cases}
\displaystyle \frac{1}{K\,N_P^{\mathrm{new}}(m_i)}, & \nu_i=1,\\[2mm]
0, & \nu_i=0,
\end{cases}
&&\text{archive novelty}.
\end{align*}
The first term shares credit among successful candidates in the panel; the second rewards cores that occur uniquely within the current panel; and the third rewards successful cores not previously archived.  Averaging over the $R$ deterministic repartitions gives the detached QPI credit
\begin{equation*}
u_i
=
\frac{1}{R}
\sum_{r=1}^{R}
\left[
a_i(P_i^{(r)})
+
\alpha_t
\left(
b_i(P_i^{(r)})
+
n_i(P_i^{(r)})
\right)
\right],
\label{eq:app_qpi_credit}
\end{equation*}
where $P_i^{(r)}$ is the panel containing sample $i$ in repartition $r$.
Newly discovered successful cores are subsequently inserted into $\mathcal H_x$ (bounded to eight distinct cores per organism).
The archive is used strictly to evaluate novelty and prioritize under-explored instances; archived repairs are never replayed as actions, and no distillation loss is applied.

\begin{algorithm}[t]
\caption{QuotientPO: quotient-space policy optimization}
\label{alg:quotientpo}
\small
\begin{algorithmic}[1]
\Require Training set $\mathcal D$;
         initialized policy $\pi_{\theta_0}$;
         verifier $\text{FBA}$;
         rollout size $L$;
         panel size $K$
\Ensure Trained policy $\pi_{\theta_T}$

\State $\theta\gets\theta_0$
\State Initialize successful-core archives $\{\mathcal H_x\}_{x\in\mathcal D}$

\For{$t=1,\ldots,T$}
    \State Select a training batch, including archive-guided context replay
    \State $\theta_{\mathrm{old}}\gets\theta$
    \State Set coefficients
           $(\beta_t,\lambda_t,\alpha_t,\eta_t)$

    \For{each training instance $x=(M_0,G,y)$ in the batch}

        \State Sample
        $E_{1:L}\sim
        \pi_{\theta_{\mathrm{old}}}(\cdot\mid M_0,G)$

        \For{$i=1,\ldots,L$}
            \State Verify $M_{E_i}$ to obtain
                   $\text{FBA}_y(E_i)$,
                   $\boldsymbol{\sigma}_y(E_i)$,
                   and $\Phi_y(M_{E_i})$
            \State Compute terminal reward $R_y(E_i)$
            \State $m_i\gets
                   \Call{Canonicalize}{E_i,y,\text{FBA}_y}$
                   \Comment{Algorithm~\ref{alg:canonicalize}}
            \State Compute redundancy $(|E|-|\kappa_y(E)|)$
        \EndFor

        \State Compute group-relative advantages $A_{1:L}$

        \State Compute kernel crowding costs
               $\widetilde c_{1:L}$
               from the valid cores

        \State Compute finite-panel QPI credits
               $u_{1:L}$ against the pre-update archive
               $\mathcal H_x^{-}$

        \State Insert newly discovered successful cores into $\mathcal H_x$
    \EndFor

    \State Rescore all sampled edit sets under $\pi_\theta$
           and compute policy ratios $\rho_i(\theta)$

    \State Form $\mathcal J_t(\theta)$

    \State Take one gradient update on $\theta$
\EndFor

\State \textbf{return} $\pi_{\theta_T}$
\end{algorithmic}
\end{algorithm}

\subsection{Complete QuotientPO objective}

Within each rollout group, the verifier reward is normalized using the GRPO
baseline,
\(
A_i
=
\frac{
R_y(E_i)-\overline R
}{
\sigma_R+\varepsilon
},
\)
with $A_i=0$ when the group reward variance vanishes.
For a sampled repair, the policy ratio is
\(
\rho_i(\theta)
=
\frac{
\pi_\theta(E_i\mid M_0,G)
}{
\pi_{\theta_{\mathrm{old}}}(E_i\mid M_0,G)
}
\). The implemented maximization objective is
\begin{align*}
\mathcal J_t(\theta)
={}&
\frac{1}{L}
\sum_{i=1}^{L}
\min\Bigl(
\rho_i(\theta)A_i,\,
\operatorname{clip}
(\rho_i(\theta),1-\epsilon,1+\epsilon)A_i
\Bigr)
\nonumber\\
&-
\frac{\beta_t}{L}
\sum_{i=1}^{L}
\rho_i(\theta)\,
\operatorname{sg}(\widetilde c_i)
\nonumber\\
&+
\frac{\lambda_t K}{L}
\sum_{i=1}^{L}
\rho_i(\theta)\,
\operatorname{sg}(u_i)
-
\frac{\eta_t}{L}
\sum_{i=1}^{L}
\rho_i(\theta)\,
\operatorname{sg}(|E|-|\kappa_y(E)|),
\label{eq:app_full_objective}
\end{align*}

where $\operatorname{sg}(\cdot)$ denotes stop-gradient.
The first term is verifier-guided GRPO.
The second implements the kernelized quotient-diversity objective by shifting
probability away from crowded repair cores.
The third is the finite-panel QPI term that promotes successful-core
discovery and retention.
The final term suppresses surface edits removed by canonicalization.
All verifier outcomes, advantages, core identities, crowding costs, QPI
credits, and redundancy costs are detached from the actor graph.
Gradients therefore flow only through the complete edit-set likelihood
$\pi_\theta(E\mid M_0,G)$.
PPO clipping is applied to the GRPO task term, whereas the quotient and
auxiliary terms use the unclipped likelihood ratio. The full algorithm process is Algorithm~\ref{alg:quotientpo}.

In all experiments, the hyperparameters balance verifier reward optimization against mechanism-level diversity: we use a rollout group size of $L=32$ with PPO clipping threshold $\epsilon=0.2$. For exploration, we set the kernel crowding weight to $\beta=0.02$ to penalize mechanism collapse, the finite-panel QPI discovery weight to $\lambda=0.10$ with panel size $K=8$ and internal novelty scale $\alpha=0.05$, and the canonical redundancy penalty to $\eta=0.001$ to gently suppress extraneous surface edits without impeding phenotype recovery.

\section{Theoretical Details}
\label{app:theory}

\subsection{Formulation of GEM Modeling}
\label{app:formulation}
A genome-scale metabolic model (GEM) is a directed, weighted bipartite graph
\(M=(\mathcal C,\mathcal R,S)\), where metabolites \(\mathcal C\) and reactions
\(\mathcal R\) form the two node types and
\(S\in\mathbb R^{|\mathcal C|\times|\mathcal R|}\) encodes their stoichiometric
edges. For metabolite \(c\) and reaction \(r\), \(S_{cr}<0\) denotes consumption
(\(c\!\to\!r\)), \(S_{cr}>0\) denotes production (\(r\!\to\!c\)), and
\(|S_{cr}|\) gives the stoichiometric coefficient. Reaction addition or deletion
therefore corresponds to adding or removing a reaction node and its associated
column of \(S\). Flux balance analysis (FBA) assigns reaction fluxes \(v\) and
computes
\[
v^*(M)=\max_v v_{\mathrm{bio}}
\qquad
\text{s.t.}
\qquad
Sv=0,\quad \ell\le v\le u,
\]
where \(Sv=0\) enforces steady-state mass balance and \([\ell,u]\) specifies
reaction directionality and the reference environment. Wet-lab phenotypes
provide organism-specific functional constraints through \(\widehat y_z(M)\),
and \(v^*(M)>\delta\) enforces a non-negligible biomass-supporting feasible
flux.

\subsection{Theory of Representation-Invariant Exploration}
\label{app:exploration}
\textbf{Notation and setup.}
Given an initial draft metabolic network $M_0$, genomic context $G$, and experimental phenotype target $y$, the repair policy generates a raw repair candidate $E \in \mathcal{F}$ (a set of reaction additions or deletions). 
The deterministic verifier-preserving reduction
$\kappa_y:\mathcal F\to\mathcal F$
maps each raw repair $E$ to an operational repair core
$m=\kappa_y(E)$.
We denote by $\mathcal{M}_y = \kappa_y(\mathcal{F}) \subseteq \mathcal{F}$ the quotient mechanism space of all such canonical repairs. Throughout this subsection, we fix an arbitrary training instance $(M_0, G, y)$ and suppress this conditioning from the raw policy $\pi_\theta$ and the induced quotient policy $q_{\theta,y}$ to streamline notation, writing $\pi(E)$ and $q(m)$ respectively. For each mechanism $m \in \mathcal{M}_y$, we define:
\begin{itemize}[leftmargin=*,noitemsep,topsep=2pt]
    \item the \emph{repair fiber} $\Omega_m = \{E \in \mathcal{F} : \kappa_y(E) = m\}$,  
which collects all syntactically distinct reaction edits mapped to the same
canonical repair mechanism $m$.
    \item the \emph{fiber multiplicity} $n_m = |\Omega_m|$, quantifying the syntactic degeneracy (the number of redundant representations) of mechanism $m$.
    \item the \emph{quotient policy marginal} $q(m) = \sum_{E \in \Omega_m} \pi(E)$, representing the aggregate probability assigned by the policy to deploying mechanism $m$.
\end{itemize}
Because $\kappa_y$ preserves the complete phenotype response, any raw edit $E \in \Omega_m$ achieves an identical post-repair phenotype evaluation $\Phi_y(M_E) = \Phi_y(M_m)$. Consequently, the mechanism-level class utility
\(
U_y(m) = \Phi_y(M_E), E \in \Omega_m,
\)
is well defined and invariant to the choice of raw representative within $\Omega_m$. Since the candidate edit space $\mathcal{F}$ is formed by reaction additions and deletions over a finite reaction database, it is finite. Consequently, the quotient mechanism space $\mathcal{M}_y$ and every repair fiber $\Omega_m$ are finite. We denote by $\Delta(\mathcal{F})$ and $\Delta(\mathcal{M}_y)$ the probability simplices over candidate edits and repair mechanisms, respectively, and assume an exploration coefficient $\beta > 0$ throughout.

\begin{lemma}[Entropy decomposition over repair fibers]
\label{lem:entropy_decomposition}
For any probability distribution $\pi \in \Delta(\mathcal{F})$, define the conditional distribution $\pi(E\mid m) = \pi(E) / q(m)$ for $E \in \Omega_m$ whenever $q(m) > 0$. Then
\begin{equation}
H_{\mathrm{act}}(\pi) = H(q) + \sum_{m \in \mathcal{M}_y} q(m) H \! \left(\pi(\cdot \mid m)\right),
\label{eq:app_entropy_decomposition}
\end{equation}
where zero-mass fibers contribute zero. Moreover,
\begin{equation}
H\!\left(\pi(\cdot\mid m)\right)
\leq
\log n_m,
\label{eq:app_conditional_entropy_bound}
\end{equation}
with equality, for $q(m)>0$, if and only if
$\pi(\cdot\mid m)$ is uniform on $\Omega_m$.
\end{lemma}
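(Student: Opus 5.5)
The plan is the standard chain rule for Shannon entropy, applied to the partition of $\mathcal F$ into the fibers $\{\Omega_m\}_{m\in\mathcal M_y}$. Since $\kappa_y$ is a deterministic map and $\mathcal F$ is finite, the fibers are disjoint and cover $\mathcal F$. Every $E$ therefore lies in exactly one $\Omega_{\kappa_y(E)}$, and we can reindex
\[
H_{\mathrm{act}}(\pi)=-\sum_{E\in\mathcal F}\pi(E)\log\pi(E)=-\sum_{m\in\mathcal M_y}\sum_{E\in\Omega_m}\pi(E)\log\pi(E).
\]

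\textbf{Step 1: discard zero-mass fibers.} If $q(m)=0$, then $\pi(E)=0$ on all of $\Omega_m$. Under the convention $0\log 0=0$, such a fiber contributes nothing to $H_{\mathrm{act}}(\pi)$, to $H(q)$, or to the weighted sum. So we restrict to $m$ with $q(m)>0$.

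\textbf{Step 2: factor the probabilities.} For $E\in\Omega_m$ with $q(m)>0$, write $\pi(E)=q(m)\,\pi(E\mid m)$ and split the logarithm:
\[
-\sum_{E\in\Omega_m}\pi(E)\log\pi(E)=-q(m)\log q(m)\sum_{E\in\Omega_m}\pi(E\mid m)\;-\;q(m)\sum_{E\in\Omega_m}\pi(E\mid m)\log\pi(E\mid m).
\]
Entries with $\pi(E)=0$ contribute $0$ to both sides, so they need no separate treatment.

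\textbf{Step 3: sum over fibers.} Because $\sum_{E\in\Omega_m}\pi(E\mid m)=1$, the first term reduces to $-q(m)\log q(m)$, and the second is $q(m)H(\pi(\cdot\mid m))$. Summing over $m$ gives \eqref{eq:app_entropy_decomposition}.

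\textbf{Step 4: the bound \eqref{eq:app_conditional_entropy_bound}.} The conditional $\pi(\cdot\mid m)$ is a probability distribution on the finite set $\Omega_m$, which has $n_m$ elements. Apply Jensen's inequality to the concave function $\log$:
\[
H(\pi(\cdot\mid m))=\sum_{E:\pi(E\mid m)>0}\pi(E\mid m)\log\frac{1}{\pi(E\mid m)}\le\log\sum_{E:\pi(E\mid m)>0}1\le\log n_m.
\]
Equivalently, $\log n_m-H(\pi(\cdot\mid m))=\mathrm{KL}(\pi(\cdot\mid m)\,\|\,\mathrm{Unif}(\Omega_m))\ge0$, by Gibbs' inequality.

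\textbf{Step 5: the equality case.} This is the only step that needs care. The KL form settles it directly: KL vanishes if and only if the two distributions coincide, so equality holds exactly when $\pi(\cdot\mid m)$ is uniform on $\Omega_m$. In the Jensen form, equality requires two things. First, strict concavity of $\log$ forces $1/\pi(E\mid m)$ to be constant on the support. Second, the final inequality must be tight, which forces the support to be all of $\Omega_m$. Together these give uniformity, and the converse is immediate.

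The hypothesis $q(m)>0$ is needed precisely so that $\pi(\cdot\mid m)$ is defined. Without it the "if and only if" statement has no content.
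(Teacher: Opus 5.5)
Your proposal is correct and follows essentially the paper's own proof. The decomposition comes from factoring $\pi(E)=q(m)\,\pi(E\mid m)$ on each fiber and summing, and the bound and its equality case come from $0\le D_{\mathrm{KL}}\bigl(\pi(\cdot\mid m)\,\|\,\mathrm{Unif}(\Omega_m)\bigr)=\log n_m-H(\pi(\cdot\mid m))$. Your Jensen variant and your explicit handling of zero-mass fibers and zero-probability entries are harmless additions to the same argument.
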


\begin{proof}
Because
$\mathcal F=\bigsqcup_{m\in\mathcal M_y}\Omega_m$,
for every $E\in\Omega_m$ with $q(m)>0$ we may write
$\pi(E)=q(m)\pi(E\mid m)$.
Therefore,
\begin{align*}
H_{\mathrm{act}}(\pi)
&=
-\sum_{m\in\mathcal M_y}
\sum_{E\in\Omega_m}
q(m)\pi(E\mid m)
\log\!\bigl(q(m)\pi(E\mid m)\bigr)
\nonumber\\
&=
-\sum_m q(m)\log q(m)
   \sum_{E\in\Omega_m}\pi(E\mid m)
\nonumber
-\sum_m q(m)
\sum_{E\in\Omega_m}
\pi(E\mid m)\log\pi(E\mid m)
\nonumber\\
&=
H(q)
+
\sum_m q(m)
H\!\left(\pi(\cdot\mid m)\right),
\end{align*}
which proves Eq.~(\ref{eq:app_entropy_decomposition}).

For the second claim, let $u_m$ denote the uniform distribution on
$\Omega_m$, so $u_m(E)=1/n_m$.
Non-negativity of the Kullback--Leibler divergence gives
\begin{align}
0
\leq
D_{\mathrm{KL}}
\!\left(
\pi(\cdot\mid m)\,\|\,u_m
\right)
&=
\sum_{E\in\Omega_m}
\pi(E\mid m)
\log
\frac{\pi(E\mid m)}{1/n_m}
\nonumber
=
-H\!\left(\pi(\cdot\mid m)\right)
+\log n_m.
\end{align}
Hence
$H(\pi(\cdot\mid m))\leq\log n_m$.
Equality in the KL inequality holds if and only if
$\pi(\cdot\mid m)=u_m$, proving the equality condition.
\end{proof}

\begin{proposition}[Action-space multiplicity bias]
\label{prop:action_multiplicity_bias}
For a fixed training instance, let $\{\Omega_m\}_{m\in\mathcal M_y}$
partition the feasible repair space into verifier-equivalent classes, with
common utility $U_y(m)=\Phi_y(M_E)$ for $E\in\Omega_m$ and multiplicity
$n_m=|\Omega_m|$.
Optimizing $\mathcal J_{\mathrm{act}}$ over the unrestricted probability
simplex induces
\(
q_{\mathrm{act}}^\star(m)
\propto
n_m\exp\!\left(U_y(m)/\beta\right).
\)
Equivalently, action entropy introduces an implicit class-level utility
$\beta\log n_m$ in addition to biological utility $U_y(m)$.
\end{proposition}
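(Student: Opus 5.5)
We fix a training instance, so the outer expectation over $\mathcal D$ can be dropped and it suffices to maximize $F(\pi)=\sum_{E}\pi(E)\Phi_y(M_E)+\beta H_{\mathrm{act}}(\pi)$ over $\pi\in\Delta(\mathcal F)$. The plan is to rewrite $F$ on the quotient using Lemma~\ref{lem:entropy_decomposition} and then optimize in two stages, within fibers and across fibers.

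\emph{Fiber decomposition.} Since $\Phi_y(M_E)=U_y(m)$ for every $E\in\Omega_m$, the reward term equals $\sum_m q(m)U_y(m)$. Lemma~\ref{lem:entropy_decomposition} then gives
\[
F(\pi)=\sum_{m}q(m)U_y(m)+\beta H(q)+\beta\sum_m q(m)H\!\left(\pi(\cdot\mid m)\right).
\]

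\emph{Inner maximization.} For fixed $q$, the conditionals $\pi(\cdot\mid m)$ are free to vary independently and enter only through the last sum. By the bound \eqref{eq:app_conditional_entropy_bound}, each conditional entropy is at most $\log n_m$, with equality exactly when $\pi(\cdot\mid m)$ is uniform. Therefore
\[
\max_{\pi:\,q_\pi=q}F(\pi)=\sum_m q(m)\bigl(U_y(m)+\beta\log n_m\bigr)+\beta H(q),
\]
which already displays the implicit class-level utility $\beta\log n_m$. To justify the two-stage reduction, note that every $q\in\Delta(\mathcal M_y)$ is realized by some $\pi$, for instance $\pi(E)=q(m)/n_m$ on $\Omega_m$. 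Hence $\max_\pi F=\max_q G(q)$, where $G(q)$ denotes the right-hand side above.

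\emph{Outer maximization.} $G$ is a linear term plus $\beta H(q)$, so it is strictly concave on the simplex. Its maximizer is the Gibbs distribution $q^\star(m)\propto\exp\bigl((U_y(m)+\beta\log n_m)/\beta\bigr)=n_m\exp(U_y(m)/\beta)$. This follows from the Gibbs variational principle: $G(q)=\beta\log Z-\beta D_{\mathrm{KL}}(q\,\|\,q^\star)$ with $Z=\sum_m n_m e^{U_y(m)/\beta}$, which identifies the unique maximizer. The corresponding optimal raw policy is $\pi^\star(E)\propto\exp(\Phi_y(M_E)/\beta)$, and aggregating it over fibers gives the same $q^\star$, as a consistency check.

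The main obstacle is the justification of the two-stage maximization, i.e.\ that the joint maximizer over $\pi$ induces the outer maximizer over $q$. Uniqueness settles this: $F$ is strictly concave on $\Delta(\mathcal F)$, so its maximizer $\pi^\star$ is unique, and $\pi^\star$ must attain the inner optimum at its own $q_{\pi^\star}$ and then maximize $G$. The quantity $q_{\mathrm{act}}^\star$ in the statement is therefore well defined. Zero-mass fibers do not arise, because $q^\star(m)>0$ for every $m$.
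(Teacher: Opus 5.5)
Your proposal is correct and follows essentially the same route as the paper. You use the fiber decomposition from Lemma~\ref{lem:entropy_decomposition}, maximize the inner problem with uniform within-fiber conditionals, and solve the outer problem via the Gibbs/KL identity $G(q)=\beta\log Z-\beta D_{\mathrm{KL}}(q\,\|\,q^\star)$. The only difference is cosmetic: you get uniqueness of the raw optimum from strict concavity of $F$ on $\Delta(\mathcal F)$, whereas the paper deduces it from the full support of $q^\star_{\mathrm{act}}$ together with the equality case of \eqref{eq:app_conditional_entropy_bound}.
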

\begin{proof}
For a fixed raw repair distribution $\pi$, phenotype utility depends only on
its quotient marginal:
\begin{align}
\mathbb E_{E\sim\pi}\!\left[\Phi_y(M_E)\right]
=
\sum_{m\in\mathcal M_y}
\sum_{E\in\Omega_m}
\pi(E)U_y(m) =
\sum_{m\in\mathcal M_y}
q(m)U_y(m).
\label{eq:app_expected_utility}
\end{align}
Combining Eq.~\eqref{eq:app_expected_utility} with
Lemma~\ref{lem:entropy_decomposition}, the fixed-instance action-entropy
objective can be written as
\begin{align}
\mathcal J_{\mathrm{act}}(\pi)
&=
\sum_m q(m)U_y(m)
+
\beta H(q)
+
\beta
\sum_m
q(m)H\!\left(\pi(\cdot\mid m)\right).
\label{eq:app_action_objective_decomp}
\end{align}

Now fix an arbitrary quotient marginal $q$.
The first two terms in
Eq.~\eqref{eq:app_action_objective_decomp}
are independent of the conditional distributions inside the fibers.
By Eq.~\eqref{eq:app_conditional_entropy_bound}, the final term is maximized
independently in every positive-mass fiber by
\[
\pi^\star(E\mid m)
=
\frac{1}{n_m},
\qquad E\in\Omega_m.
\]
Therefore optimization over the raw simplex $\Delta(\mathcal F)$ is
equivalent to
\begin{equation} 
\max_{\pi \in \Delta(\mathcal{F})} \mathcal{J}_{\mathrm{act}}(\pi) = \max_{q \in \Delta(\mathcal{M}_y)} \left\{ \sum_m q(m)U_y(m) + \beta H(q) + \beta \sum_m q(m)\log n_m \right\}.
\label{eq:app_reduced_action_objective}
\end{equation}

Define
\[
a_m
=
n_m\exp\!\left(\frac{U_y(m)}{\beta}\right),
\qquad
Z_{\mathrm{act}}
=
\sum_{m\in\mathcal M_y}a_m,
\]
and
\[
q_{\mathrm{act}}^\star(m)
=
\frac{a_m}{Z_{\mathrm{act}}}.
\]
Since every fiber is nonempty, $n_m\geq1$, and all utilities are finite,
$q_{\mathrm{act}}^\star(m)>0$ for every $m$.
Using
$\log a_m=\log n_m+U_y(m)/\beta$,
the objective in Eq.~\eqref{eq:app_reduced_action_objective} becomes
\begin{align}
&\sum_m q(m)U_y(m)
+
\beta\sum_mq(m)\log n_m
-
\beta\sum_mq(m)\log q(m)
\nonumber\\
&\qquad=
\beta\sum_mq(m)
\log\frac{a_m}{q(m)}
\nonumber\\
&\qquad=
\beta\log Z_{\mathrm{act}}
-
\beta
D_{\mathrm{KL}}
\!\left(
q\,\|\,q_{\mathrm{act}}^\star
\right).
\label{eq:app_action_kl}
\end{align}
Because KL divergence is nonnegative and vanishes only when its two
arguments coincide, Eq.~\eqref{eq:app_action_kl} proves that the unique
quotient marginal maximizing the objective is
\begin{equation}
q_{\mathrm{act}}^\star(m)
=
\frac{
n_m\exp(U_y(m)/\beta)
}{
\sum_{m'\in\mathcal M_y}
n_{m'}\exp(U_y(m')/\beta)
}.
\end{equation}

Furthermore, because this optimum assigns positive mass to every fiber,
Lemma~\ref{lem:entropy_decomposition} requires the unique optimal
within-fiber conditional to be uniform.
Thus the corresponding raw-policy optimum satisfies
\[
\pi^\star(E)
=
\frac{q_{\mathrm{act}}^\star(m)}{n_m}
=
\frac{
\exp(U_y(m)/\beta)
}{
Z_{\mathrm{act}}
},
\qquad
E\in\Omega_m.
\]
Hence the multiplicity factor arises only when raw action probabilities are
aggregated at the class level: each representative receives the same
Boltzmann weight for a fixed utility, and a class containing $n_m$
representatives accumulates $n_m$ such contributions.
Equivalently, action entropy contributes the implicit class utility
$\beta\log n_m$ claimed in the proposition.
\end{proof}

\begin{lemma}[Surjectivity of the quotient marginalization map]
\label{lem:surjectivity}
Let
$T:\Delta(\mathcal F)\to\Delta(\mathcal M_y)$
be defined by
\[
[T\pi](m)=\sum_{E\in\Omega_m}\pi(E),
\]
where
$\Omega_m=\{E\in\mathcal F:\kappa_y(E)=m\}$.
Then $T$ is surjective.
\end{lemma}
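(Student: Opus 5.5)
The plan is to construct an explicit preimage for each $q\in\Delta(\mathcal M_y)$: spread each core's mass over its repair fiber and check that marginalizing returns $q$. First I record two structural facts that make this possible. Because $\kappa_y$ is a function on $\mathcal F$, the fibers $\Omega_m$, $m\in\mathcal M_y$, are pairwise disjoint and cover $\mathcal F$, i.e.\ $\mathcal F=\bigsqcup_{m\in\mathcal M_y}\Omega_m$; this is the same decomposition used in the proof of Lemma~\ref{lem:entropy_decomposition}. Because $\mathcal M_y=\kappa_y(\mathcal F)$ is defined as the image, every fiber is nonempty, so $n_m=|\Omega_m|\ge 1$ for each $m$. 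Finiteness of $\mathcal F$ gives $n_m<\infty$.

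Given an arbitrary $q\in\Delta(\mathcal M_y)$, I define the uniform lift
\[
\pi_q(E)=\frac{q(\kappa_y(E))}{n_{\kappa_y(E)}},\qquad E\in\mathcal F .
\]
This is well defined because $n_m\ge1$, and it is nonnegative. Using the disjoint-union decomposition,
\[
\sum_{E\in\mathcal F}\pi_q(E)=\sum_{m\in\mathcal M_y}\sum_{E\in\Omega_m}\frac{q(m)}{n_m}=\sum_m q(m)=1,
\]
so $\pi_q\in\Delta(\mathcal F)$. Similarly, $[T\pi_q](m)=\sum_{E\in\Omega_m}q(m)/n_m=q(m)$, so $T\pi_q=q$ and $T$ is surjective.

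The only point needing care is nonemptiness of the fibers. It holds because $\mathcal M_y$ is exactly the image $\kappa_y(\mathcal F)$ and not some larger ambient set. Were $\mathcal M_y$ allowed to contain an $m$ outside the image, any $q$ charging that $m$ would have no preimage, and the statement would fail. I will state this dependence explicitly.

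As an alternative to the uniform lift, one could pick a representative $E_m\in\Omega_m$ for each core and set $\pi=\sum_m q(m)\delta_{E_m}$. I prefer the uniform lift because it is the one Proposition~\ref{prop:action_multiplicity_bias} identifies as optimal.
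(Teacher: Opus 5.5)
Your proof is correct. It differs from the paper's only in which preimage of $q$ you construct. The paper uses exactly the alternative you mention at the end: it fixes one representative $r(m)\in\Omega_m$ per core, sets $\pi_q(r(m))=q(m)$ and zero elsewhere, and notes that disjoint fibers make the representatives distinct.

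Both lifts rest on the same two facts: the fibers partition $\mathcal F$, and they are nonempty because $\mathcal M_y=\kappa_y(\mathcal F)$. Both are linear right inverses of $T$, so either one serves Theorem~\ref{thm:quotient_factorization} and the later remark on the linear lift equally well.

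The representative lift is slightly more economical. It uses only nonemptiness of the fibers and would still work if fibers were infinite. Your uniform lift additionally needs $n_m<\infty$, which you correctly take from the finiteness of $\mathcal F$.

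In exchange, your lift involves no arbitrary choice of representative, and its support is the full preimage $\kappa_y^{-1}(\operatorname{supp} q)$. It is also the unique maximum-entropy preimage of $q$, which is why it matches the within-fiber optimizer in Proposition~\ref{prop:action_multiplicity_bias}. That connection is a useful interpretation but plays no logical role in proving surjectivity.
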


\begin{proof}
Because
$\mathcal M_y=\kappa_y(\mathcal F)$,
every $m\in\mathcal M_y$ has at least one representative
$r(m)\in\mathcal F$ satisfying
$\kappa_y(r(m))=m$.
For any $q\in\Delta(\mathcal M_y)$, define
\[
\pi_q(E)=
\begin{cases}
q(m), & E=r(m)\ \text{for some }m\in\mathcal M_y,\\
0, & \text{otherwise}.
\end{cases}
\]
Since distinct fibers are disjoint, the selected representatives are distinct and
$\pi_q\in\Delta(\mathcal F)$.
Moreover,
\[
[T\pi_q](m)
=
\sum_{E\in\Omega_m}\pi_q(E)
=
\pi_q(r(m))
=
q(m).
\]
Hence $T\pi_q=q$, proving that $T$ is surjective.
\end{proof}

\begin{remark}[Biological and probabilistic intuition]
In probabilistic terms, the operator $T$ simply formalizes marginalization
over functionally equivalent repairs: it aggregates the probabilities of all
distinct reaction edit sets mapped to the same canonical repair mechanism,
yielding the total probability $q(m)$ of deploying mechanism $m$.
Biologically, surjectivity ensures that the quotient mechanism space $\Delta(\mathcal{M}_y)$ has no ``ghost states": any target distribution over metabolic repair mechanisms is physically realizable by an underlying reaction-level policy.
\end{remark}

\begin{proof}[Proof of Theorem~\ref{thm:quotient_factorization}]
Let $T : \Delta(\mathcal{F}) \to \Delta(\mathcal{M}_y)$ be the quotient marginalization operator from Lemma~\ref{lem:surjectivity}.

\textbf{\emph{Sufficiency.}}
Suppose there exists
$\overline{\mathfrak D}:\Delta(\mathcal M_y)\to\mathbb R$
such that
\(
\mathfrak D(\pi)
=
\overline{\mathfrak D}(T\pi)
\text{, for all }\pi\in\Delta(\mathcal F).
\)
If two raw policies $\pi,\pi'$ induce the same quotient marginal,
$T\pi=T\pi'$, then
\[
\mathfrak D(\pi)
=
\overline{\mathfrak D}(T\pi)
=
\overline{\mathfrak D}(T\pi')
=
\mathfrak D(\pi').
\]
Hence $\mathfrak D$ is invariant to all probability redistributions that
leave the quotient law unchanged.

\textbf{\emph{Necessity.}}
Conversely, assume that
\[
T\pi=T\pi'
\quad\Longrightarrow\quad
\mathfrak D(\pi)=\mathfrak D(\pi')
\]
for all $\pi,\pi'\in\Delta(\mathcal F)$.
By Lemma~\ref{lem:surjectivity}, for every
$q\in\Delta(\mathcal M_y)$ there exists at least one lift
$\pi_q\in\Delta(\mathcal F)$ satisfying $T\pi_q=q$.
Define
\begin{equation}
\overline{\mathfrak D}(q)
=
\mathfrak D(\pi_q).
\label{eq:app_factor_definition}
\end{equation}
We first verify that this definition is independent of the chosen lift.
If $\pi_q$ and $\widetilde\pi_q$ both satisfy
$T\pi_q=T\widetilde\pi_q=q$, then the assumed invariance gives
\(
\mathfrak D(\pi_q)
=
\mathfrak D(\widetilde\pi_q).
\)
Thus Eq.~\eqref{eq:app_factor_definition} defines a well-defined functional
on the entire quotient simplex.

Now take an arbitrary
$\pi\in\Delta(\mathcal F)$ and set $q=T\pi$.
Both $\pi$ and the selected lift $\pi_q$ induce the same quotient law.
Therefore,
\[
\mathfrak D(\pi)
=
\mathfrak D(\pi_q)
=
\overline{\mathfrak D}(q)
=
\overline{\mathfrak D}(T\pi),
\]
which proves the required factorization.

It remains to prove uniqueness.
Suppose two functionals
$\overline{\mathfrak D}_1$ and
$\overline{\mathfrak D}_2$ satisfy
\(
\mathfrak D
=
\overline{\mathfrak D}_1\circ T
=
\overline{\mathfrak D}_2\circ T.
\)
For any $q\in\Delta(\mathcal M_y)$,
surjectivity of $T$ provides a lift $\pi_q$ with $T\pi_q=q$.
Hence
\[
\overline{\mathfrak D}_1(q)
=
\overline{\mathfrak D}_1(T\pi_q)
=
\mathfrak D(\pi_q)
=
\overline{\mathfrak D}_2(T\pi_q)
=
\overline{\mathfrak D}_2(q).
\]
Therefore
$\overline{\mathfrak D}_1=\overline{\mathfrak D}_2$
on $\Delta(\mathcal M_y)$.
This establishes both existence and uniqueness.
\end{proof}

\begin{remark}[Preservation under a representative lift]
Beyond existence and uniqueness, the quotient factorization preserves basic analytical properties of the exploration functional on the probability simplex.
Fix, for each $m \in \mathcal{M}_y$, an arbitrary representative
$r(m) \in \Omega_m$, so that $\kappa_y(r(m)) = m$, as in Lemma~B.3.
This representative selection induces a lift
$L_r : \Delta(\mathcal{M}_y) \to \Delta(\mathcal{F})$ defined by
\[
[L_r q](E)
=
\begin{cases}
q(m), & E = r(m) \text{ for some } m \in \mathcal{M}_y,\\
0, & \text{otherwise}.
\end{cases}
\]
By construction, $T L_r q = q$ for every $q \in \Delta(\mathcal{M}_y)$.
Moreover, $L_r$ is linear:
\[
L_r\!\left(\alpha q_1 + (1-\alpha)q_2\right)
=
\alpha L_r q_1 + (1-\alpha)L_r q_2,
\qquad \alpha \in [0,1].
\]
Therefore, for the quotient functional $\bar{\mathcal{D}}$ characterized by
Theorem~\ref{thm:quotient_factorization},
\[
\bar{\mathcal{D}}(q)
=
\mathcal{D}(L_r q),
\]
and continuity and convexity or concavity of $\mathcal{D}$ restrict directly
to $\bar{\mathcal{D}}$ through this linear lift.
\end{remark}


\begin{corollary}[Quotient Shannon entropy removes the multiplicity factor]
\label{cor:quotient_shannon_optimum}
For a fixed training instance, unrestricted maximization of
\(
\sum_{m\in\mathcal M_y}q(m)U_y(m)
+
\beta H(q)
\)
over $q\in\Delta(\mathcal M_y)$ has the unique solution
\begin{equation}
q_{\mathrm{quot}}^\star(m)
=
\frac{
\exp(U_y(m)/\beta)
}{
\sum_{m'\in\mathcal M_y}
\exp(U_y(m')/\beta)
}.
\label{eq:app_quotient_optimum}
\end{equation}
\end{corollary}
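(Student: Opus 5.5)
The plan is to reuse, essentially verbatim, the Gibbs variational argument from the proof of Proposition~\ref{prop:action_multiplicity_bias}, with the multiplicity weights $n_m$ set to one. The optimization is now posed directly over $\Delta(\mathcal M_y)$, so no within-fiber reduction via Lemma~\ref{lem:entropy_decomposition} is needed.

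First, define the Boltzmann weights
\[
b_m=\exp\!\left(U_y(m)/\beta\right),\qquad Z_{\mathrm{quot}}=\sum_{m'\in\mathcal M_y} b_{m'}.
\]
Each $b_m$ is finite and strictly positive because $U_y(m)\in[0,1]$ and $\beta>0$. The space $\mathcal M_y$ is finite, so $Z_{\mathrm{quot}}\in(0,\infty)$, and $q_{\mathrm{quot}}^\star=b/Z_{\mathrm{quot}}$ is a well-defined, fully supported element of $\Delta(\mathcal M_y)$.

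Next, for arbitrary $q\in\Delta(\mathcal M_y)$, use the convention $0\log 0=0$ and the identity $U_y(m)=\beta\log b_m$ to rewrite the objective:
\[
\sum_m q(m)U_y(m)+\beta H(q)=\beta\sum_m q(m)\log\frac{b_m}{q(m)}=\beta\log Z_{\mathrm{quot}}-\beta D_{\mathrm{KL}}\!\left(q\,\|\,q_{\mathrm{quot}}^\star\right).
\]
The first equality is immediate. The second follows by substituting $b_m=Z_{\mathrm{quot}}q_{\mathrm{quot}}^\star(m)$ and using $\sum_m q(m)=1$. Terms with $q(m)=0$ contribute zero on both sides. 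Because $q_{\mathrm{quot}}^\star$ has full support, the KL divergence is finite for every $q$, so the identity holds on the entire simplex, including its boundary.

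Finally, nonnegativity of the KL divergence, with equality if and only if $q=q_{\mathrm{quot}}^\star$, shows three things: the objective is bounded above by $\beta\log Z_{\mathrm{quot}}$, this bound is attained, and it is attained only at $q_{\mathrm{quot}}^\star$. This proves both existence and uniqueness of the stated maximizer.

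The only delicate step is the boundary handling. One must check that distributions with zero entries are covered by the identity, which is why full support of $q_{\mathrm{quot}}^\star$ is established first; the rest is routine. To connect the result to Proposition~\ref{prop:action_multiplicity_bias}, note that the factor $n_m$ appearing in $q_{\mathrm{act}}^\star$ is absent here. By Theorem~\ref{thm:quotient_factorization}, the quotient entropy depends on $\pi$ only through $q$, so every raw lift of $q_{\mathrm{quot}}^\star$ (which exists by Lemma~\ref{lem:surjectivity}) attains the same value, and no within-fiber term reintroduces $\log n_m$.
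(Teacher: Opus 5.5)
Your proposal is correct and follows the paper's proof essentially verbatim: both rewrite the objective as $\beta\log Z_{\mathrm{quot}}-\beta D_{\mathrm{KL}}(q\,\|\,q_{\mathrm{quot}}^\star)$ and obtain existence and uniqueness from the nonnegativity of the KL divergence and its equality case. Your explicit check that $q_{\mathrm{quot}}^\star$ has full support, so the identity also holds at boundary points of the simplex, is a small but welcome addition that the paper leaves implicit.
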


\begin{proof}
Define
\[
Z_{\mathrm{quot}}
=
\sum_{m\in\mathcal M_y}
\exp\!\left(\frac{U_y(m)}{\beta}\right),
\qquad
q_{\mathrm{quot}}^\star(m)
=
\frac{
\exp(U_y(m)/\beta)
}{
Z_{\mathrm{quot}}
}.
\]
For arbitrary $q\in\Delta(\mathcal M_y)$,
\begin{align}
\sum_mq(m)U_y(m)+\beta H(q)
&=
\beta
\sum_mq(m)
\log
\frac{\exp(U_y(m)/\beta)}{q(m)}
\nonumber\\
&=
\beta\log Z_{\mathrm{quot}}
-
\beta
D_{\mathrm{KL}}
\!\left(
q\,\|\,q_{\mathrm{quot}}^\star
\right).
\label{eq:app_quotient_kl}
\end{align}
The KL divergence is nonnegative and equals zero if and only if
$q=q_{\mathrm{quot}}^\star$.
Therefore Eq.~\eqref{eq:app_quotient_optimum} is the unique maximizer.
\end{proof}

Comparing this expression with the action-space optimum,
\(
q_{\mathrm{act}}^\star(m)
\propto
n_m\exp(U_y(m)/\beta),
\)
shows that quotient Shannon entropy removes the representation-volume factor
$n_m$ exactly.

\textbf{Scope of the results.}
Lemma~\ref{lem:entropy_decomposition} applies to any distribution on the finite feasible repair space under the partition induced by $\kappa_y$. The Gibbs solutions characterize unrestricted simplex optima and serve as variational references for the parameterized policy.
The factorization theorem holds for any fixed verifier-preserving $\kappa_y$, establishing rigorous representation invariance directly over the operational mechanism classes induced by the verifier across diverse biological hypotheses.

\subsection{Collision Sparsity and Kernel R\'enyi Estimation}
\label{app:renyi}

\textbf{Notation and setup.} This subsection provides the formal results supporting
\textbf{Sec.~\ref{sec:renyi}}, adopting the quotient setup from
Appendix~\ref{app:exploration}.
Unless stated otherwise, $m_1,\ldots,m_L$ are drawn IID from the quotient
distribution $q$ on the finite space $\mathcal M_y$.
We write
\[
C(q)
=
\sum_{m\in\mathcal M_y}q(m)^2
=
\exp[-H_2(q)]
\]
for the collision concentration, where $H_2$ denotes categorical
R\'enyi entropy of order two.

We first characterize the collision-free regime as an extreme case of the broader finite-sample coarseness of exact-count statistics.
\begin{proposition}[Sparse-collision barrier]
\label{prop:collision}
For $L$ independently sampled repair cores $m_1,\ldots,m_L \sim q_{\theta,y}$, the pairwise collision count $N_{\mathrm{col}}=\sum_{i<j}\mathbf 1[m_i=m_j]$ satisfies $\mathbb P(N_{\mathrm{col}}>0) \leq \mathbb E[N_{\mathrm{col}}] = \binom{L}{2} \sum_{m\in\mathcal M_y} q_{\theta,y}(m)^2$. Consequently, along any sequence of instances and policies for which $L^2 \sum_{m \in \mathcal M_y} q_{\theta,y}(m)^2 \to 0$, the collision probability vanishes ($\mathbb{P}(N_{\mathrm{col}} > 0) \to 0$), leaving all sampled candidates with identical zero exploration credit.
\end{proposition}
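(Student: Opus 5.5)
The argument is a first-moment computation followed by Markov's inequality. We plan to write the collision count as a sum of pairwise indicators and compute its expectation by linearity. For a fixed pair $i<j$, independence gives
\[
\mathbb P(m_i=m_j)=\sum_{m\in\mathcal M_y}\mathbb P(m_i=m)\,\mathbb P(m_j=m)=\sum_{m}q_{\theta,y}(m)^2=C(q_{\theta,y}).
\]
The sum over $\mathcal M_y$ is finite, as noted in Appendix~\ref{app:exploration}. Summing over the $\binom{L}{2}$ unordered pairs then yields $\mathbb E[N_{\mathrm{col}}]=\binom{L}{2}C(q_{\theta,y})$.

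Because $N_{\mathrm{col}}$ is a nonnegative integer, we have $\mathbf 1[N_{\mathrm{col}}>0]\le N_{\mathrm{col}}$ pointwise. Taking expectations gives $\mathbb P(N_{\mathrm{col}}>0)\le\mathbb E[N_{\mathrm{col}}]$, which is Markov's inequality at threshold $1$.

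For the asymptotic part, we use $\binom{L}{2}\le L^2/2$. This gives $\mathbb P(N_{\mathrm{col}}>0)\le \tfrac12 L^2\sum_m q(m)^2$, so the probability tends to $0$ along any sequence of instances and policies with $L^2\sum_m q(m)^2\to0$. Nothing here needs to hold uniformly, since each term in the sequence is bounded individually.

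The step that needs care is the final clause about ``identical zero exploration credit'', since this depends on which exact-count credit is meant. We will interpret it as any exploration credit built only from exact-match multiplicities, such as the exact Rényi-2 counterpart of Eq.~\eqref{eq:qrl_kernel_credit} obtained by replacing $k$ with the indicator $\mathbf 1[m=m']$, or the within-group collision counts $\sum_{j\ne i}\mathbf 1[m_i=m_j]$. On the event $N_{\mathrm{col}}=0$, every such count is zero for every $i$, so all candidates receive the same credit. Under the convention that the normalized form is set to zero when the denominator vanishes, that common value is zero. The zero-sum centering then yields no differential signal among candidates. This event has probability at least $1-\binom{L}{2}C(q)\to1$, which establishes the claim.
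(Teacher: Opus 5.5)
Your first-moment computation and Markov step are the paper's argument. The paper writes the bound as $\min\{1,\binom{L}{2}C(q)\}$ and passes to the limit along $L_n^2C(q_n)\to0$, just as you do. The only real difference is how you justify the final clause about ``identical zero exploration credit.''

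The paper reads the credit as a leave-one-out entropy contrast. On a collision-free group, the plug-in empirical law puts mass $1/L$ on each sampled core, so its Shannon and R\'enyi-2 entropies both equal $\log L$. Every leave-one-out entropy then equals $\log(L-1)$, and $\bar H_{-}-H_{-i}=0$ for all $i$. The paper adds that the same holds for any symmetric equality-only statistic.

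You instead read the credit directly off exact-match counts. For the raw within-group counts $\sum_{j\neq i}\mathbf 1[m_i=m_j]$ this is immediate and fine. The fact that every such count vanishes (equivalently, every sampled core has multiplicity $1$) gives identical credit for any count-based rule, without committing to an estimator.

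For the exact-indicator version of Eq.~\eqref{eq:qrl_kernel_credit}, however, numerator and denominator both vanish on that event. The off-diagonal exact concentration is $0$, so the U-statistic R\'enyi-2 entropy would be $+\infty$. ``Zero'' is then a convention rather than a consequence. Defining $c_i:=0$ works. Defining the ratio to be $0$ instead gives $c_i=-2$ for every $i$, which is still identical but neither zero nor zero-sum.

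The paper's diagonal-included plug-in entropy avoids this $0/0$ altogether; that is the main thing its route buys. Yours buys generality across count-based credits. In either reading, the substantive conclusion (no differential exploration signal, with probability at least $1-\binom{L}{2}C(q)$) is correctly established.
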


\begin{proof}
Writing $N_{\mathrm{col}} = \sum_{1 \le i < j \le L} \mathbf{1}[m_i = m_j]$, independence gives
\[
\mathbb P(m_i=m_j)
=
\sum_{m\in\mathcal M_y}q(m)^2
=
C(q).
\]
Hence, by linearity of expectation,
\[
\mathbb E[N_{\mathrm{col}}]
=
\binom{L}{2}C(q).
\]
Since $N_{\mathrm{col}}$ is nonnegative and integer valued, Markov's
inequality yields
\[
\mathbb P(N_{\mathrm{col}}>0)
\leq
\min\!\left\{
1,\,
\binom{L}{2}C(q)
\right\}.
\]

More formally, along any sequence of quotient laws $q_n$ and rollout sizes
$L_n$ satisfying
\[
L_n^2 C(q_n)\longrightarrow0,
\]
the preceding bound implies
$\mathbb P(N_{\mathrm{col}}>0)\to0$.

If the group is collision-free, its empirical distribution assigns mass
$1/L$ to each sampled core, so both its plug-in Shannon entropy and
categorical R\'enyi-2 entropy equal $\log L$.
After removing any candidate $i$, the corresponding entropy satisfies
\[
H_{-i}=\log(L-1)
\qquad
\text{for all }i.
\]
Since this holds for every $i$, the mean across the group is $\bar{H}_{-} = \frac{1}{L} \sum_{j=1}^L H_{-j} = \log(L-1)$, yielding a leave-one-out entropy contrast
\begin{equation}
\bar{H}_{-} - H_{-i} = \log(L-1) - \log(L-1) = 0
\end{equation}
identically for all $i \in \{1, \ldots, L\}$. The same argument applies to any symmetric equality-only statistic whose leave-one-out value depends solely on collision patterns. Hence, under collision-free rollout groups, exact-match leave-one-out evaluation provides zero exploration credit to all sampled candidates.
\end{proof}

We now formalize the kernel relaxation used to replace exact collisions.
Each repair core is identified with its canonical finite set of signed
reaction edits, so equality of cores coincides with equality of their
signed-edit sets.
Define
\[
J(m,m')
=
\frac{|m\cap m'|}{|m\cup m'|},
\qquad
J(\varnothing,\varnothing)=1,
\]
and, for fixed bandwidth $h>0$,
\[
k_y(m,m')
=
\exp\!\left(
-\frac{1-J(m,m')}{h}
\right).
\]
The kernel rule is shared across instances and the subscript $y$ indicates its
restriction to the instance-specific quotient space $\mathcal M_y$.

\begin{proposition}[Validity and consistency of the signed-Jaccard relaxation]
\label{prop:app_jaccard_kernel}
The kernel $k_y$ is positive semidefinite and satisfies
\[
e^{-1/h} \le k_y(m,m') \le 1, \qquad k_y(m,m) = 1.
\]
Moreover, on the finite quotient space $\mathcal{M}_y$, $k_y(m, m')$ converges uniformly to the exact-match kernel $\mathbf{1}[m = m']$ as $h \to 0$. 

Define the population kernel concentration and its associated kernelized quadratic R\'enyi functional by
\begin{equation}
V_k(q) = \mathbb{E}_{m,m' \sim q}[k_y(m,m')], \qquad H_{2,k}(q) = -\log V_k(q).
\end{equation}
Then $C(q) \le V_k(q) \le 1$, $0 \le H_{2,k}(q) \le H_2(q)$, and as $h \to 0$,
\[
V_k(q) \longrightarrow C(q), \qquad H_{2,k}(q) \longrightarrow H_2(q).
\]
\end{proposition}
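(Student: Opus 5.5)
The argument splits into three parts: positive semidefiniteness, elementary bounds together with the $h\to0$ limit, and the comparisons between $V_k$ and $C(q)$.

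\textbf{Positive semidefiniteness.} The cleanest route is to write $k_y=e^{-1/h}\exp(J/h)$ and use the Schur product theorem. First, the Jaccard similarity on finite sets is itself a positive semidefinite kernel. This is the classical result that the Tanimoto/Jaccard kernel is PSD, with the convention $J(\varnothing,\varnothing)=1$ preserving it. One proof writes $J(A,B)$ as a mixture over random permutations (min-hash) of indicators $\mathbf 1[\min\sigma(A)=\min\sigma(B)]$. Each such indicator is an equality kernel $\langle e_{f(A)},e_{f(B)}\rangle$, hence PSD, and a convex combination of PSD kernels is PSD. The empty set can be assigned its own symbol. Next, $\exp(J/h)=\sum_{n\ge0}J^n/(h^n n!)$ is a nonnegative series of Hadamard powers of a PSD kernel. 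Each power is PSD by Schur, the sum is PSD, and so is the limit. Multiplying by the positive constant $e^{-1/h}$ preserves this. The main obstacle is to cite or justify PSD-ness of Jaccard rigorously. I would use the min-hash representation, since it is short and self-contained.

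\textbf{Bounds and the $h\to0$ limit.} Since $0\le J\le1$, the exponent $-(1-J)/h$ lies in $[-1/h,0]$, which gives $e^{-1/h}\le k_y\le1$. Also $J(m,m)=1$, so $k_y(m,m)=1$. For distinct cores (distinct signed-edit sets), $J(m,m')<1$ strictly. On the finite space $\mathcal M_y$ define the gap $g=\min_{m\ne m'}(1-J(m,m'))>0$. Then $\sup_{m\ne m'}k_y\le e^{-g/h}\to0$, while the diagonal equals $1$ identically, which gives uniform convergence to $\mathbf 1[m=m']$.

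\textbf{Comparisons with $C(q)$ and convergence of $H_{2,k}$.} Expand
\[
V_k(q)=\sum_{m,m'}q(m)q(m')k_y(m,m').
\]
Keeping only the diagonal terms and using $k_y\ge0$ gives $V_k\ge\sum_m q(m)^2=C(q)$. Using $k_y\le1$ gives $V_k\le1$. Taking $-\log$, which is decreasing, yields $0\le H_{2,k}(q)\le -\log C(q)=H_2(q)$. Convergence follows from a finite sum of uniformly convergent terms:
\[
|V_k-C|\le\sum_{m\ne m'}q(m)q(m')\,e^{-g/h}\le e^{-g/h}\to0.
\]
Finally, $C(q)\ge 1/|\mathcal M_y|>0$ and $-\log$ is continuous on $(0,1]$, so $H_{2,k}(q)\to H_2(q)$.
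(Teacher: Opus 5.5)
Your proposal is correct and follows essentially the same route as the paper: PSD-ness of Jaccard via the min-hash mixture of one-hot Gram matrices (with a sentinel for $\varnothing$), then $k_y=e^{-1/h}\sum_{r\ge 0}J^{\odot r}/(h^r r!)$ with the Schur product theorem and closedness of the PSD cone, the gap $\min_{m\neq m'}(1-J(m,m'))>0$ for uniform convergence, and the diagonal/off-diagonal split of $V_k$ together with $C(q)\ge 1/|\mathcal M_y|$ for the R\'enyi comparisons and limits. The only point the paper treats separately is the trivial case $|\mathcal M_y|=1$, where your gap $g$ is a minimum over an empty set, but there $V_k(q)=C(q)=1$ holds identically.
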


\begin{proof}
We first establish positive semidefiniteness of the Jaccard similarity.
Let $\mathcal U_{\pm}$ denote the finite universe of signed reaction edits,
and let $\zeta$ be a uniformly random permutation of $\mathcal U_{\pm}$.
For each nonempty core $m$, let $b_\zeta(m)$ be its minimum element under
$\zeta$, assigning the empty core a distinguished sentinel symbol.

For any finite collection $m_1,\ldots,m_n$, the matrix
\[
G_{ab}^{\zeta}
=
\mathbf 1[b_\zeta(m_a)=b_\zeta(m_b)]
\]
is a Gram matrix of one-hot vectors and is therefore positive semidefinite.
For two nonempty cores, the standard minwise-hashing identity gives
\[
\mathbb E_\zeta[G_{ab}^{\zeta}]
=
\frac{|m_a\cap m_b|}{|m_a\cup m_b|}
=
J(m_a,m_b),
\]
with the sentinel convention giving the stated values involving the empty
core~\citep{666900,NIPS2011_0a1bf96b}.
Hence the Jaccard Gram matrix $Q$ is positive semidefinite.

For the same collection, the Gram matrix $K$ of $k_y$ can be expressed via the Hadamard exponential as
\[
K = e^{-1/h} \exp^\odot(Q/h) = e^{-1/h} \sum_{r=0}^\infty \frac{Q^{\odot r}}{h^r r!}.
\]
By the Schur product theorem, every Hadamard power $Q^{\odot r}$ is positive semidefinite. Thus every finite partial sum is positive semidefinite; since the positive-semidefinite cone is closed and the series converges in finite dimension, $K \succeq 0$. The bounds follow directly from $0 \le J \le 1$.

For the small-bandwidth limit, the case $|\mathcal{M}_y| = 1$ is immediate. If $|\mathcal{M}_y| > 1$, finiteness of $\mathcal{M}_y$ and the canonical signed-edit representation guarantee
\[
\gamma_y = \min_{m \neq m'} \bigl(1 - J(m, m')\bigr) > 0.
\]
Therefore,
\begin{equation}
\sup_{m \neq m'} k_y(m, m') \le e^{-\gamma_y/h} \longrightarrow 0 \quad \text{as } h \to 0, \label{eq:uniform_dirac_limit}
\end{equation}
while $k_y(m, m) = 1$ for all $h > 0$, proving uniform convergence to the exact-match kernel.

Finally, expanding $V_k(q)$ into diagonal and off-diagonal terms gives
\[
V_k(q) = \sum_m q(m)^2 + \sum_{m \neq m'} q(m)q(m') k_y(m, m') = C(q) + \sum_{m \neq m'} q(m)q(m') k_y(m, m') \ge C(q),
\]
while $V_k(q) \le 1$. Taking negative logarithms yields $0 \le H_{2,k}(q) \le H_2(q)$.

Note that the total off-diagonal probability mass satisfies
\[
\sum_{m \neq m'} q(m)q(m') = \left(\sum_m q(m)\right)^2 - \sum_m q(m)^2 = 1 - C(q).
\]
Consequently, factoring out the off-diagonal supremum yields
\begin{align}
0 \le V_k(q) - C(q) 
&= \sum_{m \neq m'} q(m)q(m') k_y(m, m') \notag \\
&\le \left( \sup_{m \neq m'} k_y(m, m') \right) \sum_{m \neq m'} q(m)q(m') \notag \\
&= (1 - C(q)) \sup_{m \neq m'} k_y(m, m') \notag \\
&\le \sup_{m \neq m'} k_y(m, m') \le e^{-\gamma_y/h}.
\end{align}
As $h \to 0$, this upper bound vanishes by Eq.~\eqref{eq:uniform_dirac_limit}, so $V_k(q) \to C(q)$. Because $\mathcal{M}_y$ is finite and nonempty, Cauchy--Schwarz ensures $C(q) = \sum_m q(m)^2 \ge 1/|\mathcal{M}_y| > 0$; thus continuity of $-\log$ on $(0, 1]$ implies $H_{2,k}(q) \to H_2(q)$.
\end{proof}


\begin{theorem}[Finite-group estimation and consistency]
\label{thm:finite_group}
For $L\ge 2$, define
\[
\widehat V_L
=
\frac{1}{L(L-1)}
\sum_{i\neq j}
k_y(m_i,m_j),
\qquad
\widehat H_{2,k}
=
-\log \widehat V_L,
\qquad
J_L(q)
=
\mathbb E[\widehat H_{2,k}].
\]
Then $\widehat V_L$ is an unbiased estimator of the population kernel concentration,
\[
\mathbb E[\widehat V_L]=V_k(q),
\]
and, for every fixed bandwidth $h>0$,
\[
0
\le
J_L(q)-H_{2,k}(q)
\le
B_L(h),
\qquad
B_L(h)
=
\frac{(e^{1/h}-1)^2}{4L}.
\]
Hence, for fixed $h$, $J_L(q)$ converges uniformly over quotient laws to
$H_{2,k}(q)$ at rate $O(L^{-1})$.
\end{theorem}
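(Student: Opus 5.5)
The plan has four parts: unbiasedness, the lower bound from Jensen, the upper bound from a second-order expansion of $-\log$, and a variance bound for $\widehat V_L$.

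\textbf{Unbiasedness.} For each ordered pair $i\neq j$, the cores $m_i,m_j$ are independent draws from $q$. Hence $\mathbb E[k_y(m_i,m_j)]=\mathbb E_{m,m'\sim q}[k_y(m,m')]=V_k(q)$. Averaging over the $L(L-1)$ ordered pairs gives $\mathbb E[\widehat V_L]=V_k(q)$ by linearity.

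\textbf{Lower bound.} By Proposition~\ref{prop:app_jaccard_kernel}, every kernel value lies in $[a,1]$ with $a=e^{-1/h}$. So $\widehat V_L\in[a,1]$ almost surely, and $\widehat H_{2,k}$ is bounded and integrable. Since $-\log$ is convex, Jensen's inequality gives $J_L(q)=\mathbb E[-\log\widehat V_L]\ge -\log\mathbb E[\widehat V_L]=H_{2,k}(q)$.

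\textbf{Upper bound.} Write $v=V_k(q)$, which also lies in $[a,1]$. I expand $-\log$ around $v$ with Lagrange remainder:
\[
-\log x=-\log v-\frac{x-v}{v}+\frac{(x-v)^2}{2\xi^2},
\]
where $\xi$ lies between $x$ and $v$, so $\xi\ge a$. Substitute $x=\widehat V_L$ and take expectations. The linear term vanishes by unbiasedness, which leaves
\[
J_L(q)-H_{2,k}(q)\le \frac{\operatorname{Var}(\widehat V_L)}{2a^2}=\frac{e^{2/h}}{2}\operatorname{Var}(\widehat V_L).
\]

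\textbf{Variance bound (main obstacle).} The step needing care is a variance bound sharp enough to produce the constant $\tfrac14$. Note that $\widehat V_L$ is a $U$-statistic of order two with a symmetric kernel $k_y$. I would invoke Hoeffding's classical bound for $U$-statistics of order $r$, namely $\operatorname{Var}(U_L)\le \frac{r}{L}\zeta_r$ with $\zeta_r=\operatorname{Var}(k_y(m_1,m_2))$. One could also derive it directly: the Hoeffding decomposition gives
\[
\operatorname{Var}(\widehat V_L)=\frac{4(L-2)}{L(L-1)}\zeta_1+\frac{2}{L(L-1)}\zeta_2,
\]
and then one uses $\zeta_1\le\zeta_2/2$ from the orthogonality of the projections. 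By Popoviciu's inequality, a random variable supported in $[a,1]$ has variance at most $(1-a)^2/4$. Hence
\[
\operatorname{Var}(\widehat V_L)\le \frac{2}{L}\cdot\frac{(1-a)^2}{4}=\frac{(1-a)^2}{2L}.
\]

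\textbf{Conclusion.} Combining the two displays,
\[
J_L(q)-H_{2,k}(q)\le \frac{e^{2/h}(1-e^{-1/h})^2}{4L}=\frac{(e^{1/h}-1)^2}{4L}=B_L(h).
\]
This bound does not depend on $q$, which yields uniform $O(L^{-1})$ convergence over quotient laws. If the $\tfrac{2}{L}\zeta_2$ form of Hoeffding's bound proved delicate, the crude decomposition estimate $\operatorname{Var}(\widehat V_L)\le (1-a)^2/L$ would still give the same $O(L^{-1})$ rate, but with constant $\tfrac12$ in place of $\tfrac14$.
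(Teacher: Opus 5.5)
Your proposal is correct and follows the paper's proof essentially step for step: unbiasedness of the order-two $U$-statistic, Jensen for the lower bound, a Lagrange-remainder expansion of $-\log$ with $f''(\xi)\le e^{2/h}$, the variance bound $\operatorname{Var}(\widehat V_L)\le \frac{2}{L}\operatorname{Var}(k_y(m,m'))$, and Popoviciu's inequality. The only difference is notational: you use Hoeffding's classical form $\frac{4(L-2)}{L(L-1)}\zeta_1+\frac{2}{L(L-1)}\zeta_2$ with $\zeta_2=\operatorname{Var}(k_y)$, whereas the paper uses the orthogonal-projection form $\frac{4}{L}\zeta_1+\frac{2}{L(L-1)}\zeta_2$ with $\zeta_2=\mathbb E[g_2^2]$ and $2\zeta_1+\zeta_2=\operatorname{Var}(k_y)$, and both give the same constant $\frac{2}{L}$.
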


\begin{proof}
The statistic $\widehat{V}_L$ is the order-two U-statistic
\[
\widehat{V}_L = \frac{2}{L(L-1)} \sum_{i<j} k_y(m_i, m_j),
\]
so independence immediately gives $\mathbb{E}[\widehat{V}_L] = V_k(q)$.
Since $-\log$ is convex, Jensen's inequality implies
\[
J_L(q) = \mathbb{E}[-\log \widehat{V}_L] \ge -\log V_k(q) = H_{2,k}(q).
\]
Let $X = \widehat{V}_L$, $V = V_k(q)$, and recall that $X, V \in [e^{-1/h}, 1]$. For $f(v) = -\log v$, $f''(v) = 1/v^2$. By Taylor's theorem, for some $\xi$ between $X$ and $V$,
\[
f(X) = f(V) + f'(V)(X - V) + \frac{f''(\xi)}{2}(X - V)^2.
\]
Since $\xi \in [e^{-1/h}, 1]$, we have $f''(\xi) \le e^{2/h}$, so
\[
0 \le f(X) - f(V) - f'(V)(X - V) \le \frac{e^{2/h}}{2} (X - V)^2.
\]
Taking expectations and using $\mathbb{E}[X] = V$ yields
\begin{equation}
0 \le J_L(q) - H_{2,k}(q) \le \frac{e^{2/h}}{2} \operatorname{Var}(\widehat{V}_L). \label{eq:app_taylor_remainder}
\end{equation}
To bound the variance, let $g(m, m') = k_y(m, m')$, $g_1(m) = \mathbb{E}_{m'}[g(m, m')] - V$, and define the degenerate second-order projection $g_2(m, m') = g(m, m') - V - g_1(m) - g_1(m')$. Let $\zeta_1 = \mathbb{E}[g_1(m)^2]$, $\zeta_2 = \mathbb{E}[g_2(m, m')^2]$. The Hoeffding decomposition for an order-two U-statistic gives
\[
\operatorname{Var}(\widehat{V}_L) = \frac{4}{L} \zeta_1 + \frac{2}{L(L-1)} \zeta_2,
\]
while orthogonality of the projections implies $2\zeta_1 + \zeta_2 = \operatorname{Var}(g(m, m'))$. Therefore, for $L \ge 2$,
\begin{equation}
\operatorname{Var}(\widehat{V}_L) \le \frac{2}{L}(2\zeta_1 + \zeta_2) = \frac{2}{L} \operatorname{Var}(k_y(m, m')) \le \frac{(1 - e^{-1/h})^2}{2L}, \label{eq:app_u_stat_var}
\end{equation}
where the last inequality follows from Popoviciu's inequality since $k_y(m, m') \in [e^{-1/h}, 1]$. Substituting Eq.~\eqref{eq:app_u_stat_var} into Eq.~\eqref{eq:app_taylor_remainder} gives
\[
J_L(q) - H_{2,k}(q) \le \frac{e^{2/h} (1 - e^{-1/h})^2}{4L} = \frac{(e^{1/h} - 1)^2}{4L}.
\]
Therefore, for every fixed $h>0$,
\[
\sup_q
\left|
J_L(q)-H_{2,k}(q)
\right|
=
O(L^{-1}),
\]
which establishes uniform consistency over quotient laws.
\end{proof}


\textbf{Scope of the result.}
Theorem~\ref{thm:finite_group} establishes uniform $O(L^{-1})$ consistency for fixed bandwidth $h$.
The distribution-free bound is a conservative worst-case guarantee and is not intended as a tight characterization of finite rollout groups.
The limit $H_{2,k}(q)\to H_2(q)$ is a population result, while signed-Jaccard similarity quantifies structural overlap rather than biological equivalence.

\subsection{QuotientPO Optimization Theory}
\label{app:optimization}

This subsection establishes the theoretical role of the quotient-diversity
term in Eq.~\eqref{eq:qpo_objective}.
We deliberately separate this analysis from the verifier-shaped reward,
GRPO clipping, and auxiliary discovery and retention mechanisms, which are
optimization choices detailed in Appendix~\ref{app:algorithm}.
Our goal here is twofold: \textbf{to characterize the candidate-level crowding cost
in Eq.~\eqref{eq:qrl_kernel_credit}, and to relate its policy update to the
population kernel R\'enyi functional introduced in
Appendix~\ref{app:renyi}.}

\textbf{Notation and setup.} In this subsection, fix a training instance and suppress its conditioning. For a rollout group $m_{1:L}$, recall the group kernel concentration $\widehat{V}_L = \frac{1}{L(L-1)} \sum_{a \neq b} k_y(m_a, m_b)$ from Theorem~\ref{thm:finite_group}, and define the candidate-level average kernel crowding
\begin{equation}
\bar{k}_i = \frac{1}{L-1} \sum_{j \neq i} k_y(m_i, m_j), \qquad \text{so that} \quad \frac{1}{L}\sum_{i=1}^L \bar{k}_i = \widehat{V}_L.
\end{equation}

\begin{proposition}[Candidate crowding from leave-one-out kernel entropy]
\label{prop:app_loo_crowding}
For $L \ge 3$, let $\widehat{H}_{-i} = -\log \widehat{V}_{-i}$ be the kernel R\'enyi entropy of the group after removing candidate $i$, where
\[
\widehat{V}_{-i} = \frac{L \widehat{V}_L - 2\bar{k}_i}{L-2}.
\]
Then the leave-one-out contrast $\Delta_i = \widehat{H}_{2,k} - \widehat{H}_{-i}$ satisfies
\begin{equation}
\Delta_i = \log \left( 1 - \frac{2(\bar{k}_i - \widehat{V}_L)}{(L-2)\widehat{V}_L} \right). \label{eq:app_exact_loo_contrast}
\end{equation}
As $L \to \infty$, the first-order Taylor expansion around the group mean $\widehat{V}_L$ yields
\begin{equation}
\Delta_i = - \frac{c_i}{L-2} + \mathcal{O}\left( \frac{1}{L^2} \right), \label{eq:app_linearized_loo}
\end{equation}
where $c_i = \frac{2 \bar{k}_i}{\widehat{V}_L} - 2$ is the crowding cost in Eq.~\eqref{eq:qrl_kernel_credit}. Moreover,
\[\sum_{i=1}^L c_i = 0, \quad c_i > 0 \iff \bar{k}_i > \widehat{V}_L ,\quad c_i < 0 \iff \bar{k}_i < \widehat{V}_L.
\]
\end{proposition}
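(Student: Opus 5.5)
The argument is direct algebra on the pairwise kernel sums, followed by a one-term Taylor expansion of the logarithm. Write $S=\sum_{a\neq b}k_y(m_a,m_b)=L(L-1)\widehat V_L$ and $s_i=\sum_{j\neq i}k_y(m_i,m_j)=(L-1)\bar k_i$.

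\textbf{Leave-one-out concentration.} Removing candidate $i$ deletes the ordered pairs $(i,j)$ and $(j,i)$, so the remaining off-diagonal sum is $S-2s_i$. Normalizing by $(L-1)(L-2)$ gives
\[
\widehat V_{-i}=\frac{L(L-1)\widehat V_L-2(L-1)\bar k_i}{(L-1)(L-2)}=\frac{L\widehat V_L-2\bar k_i}{L-2},
\]
which matches the stated formula. The case $L\ge 3$ is needed so that $L-2>0$. Positivity of $\widehat V_{-i}$ follows from $k_y\ge e^{-1/h}>0$ (Proposition~\ref{prop:app_jaccard_kernel}), so every logarithm below is defined.

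\textbf{Exact contrast.} We have $\Delta_i=-\log\widehat V_L+\log\widehat V_{-i}=\log(\widehat V_{-i}/\widehat V_L)$. Writing $L\widehat V_L-2\bar k_i=(L-2)\widehat V_L-2(\bar k_i-\widehat V_L)$ and dividing by $(L-2)\widehat V_L$ gives exactly Eq.~\eqref{eq:app_exact_loo_contrast}.

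\textbf{Linearization.} Set
\[
x_i=\frac{2(\bar k_i-\widehat V_L)}{(L-2)\widehat V_L}=\frac{c_i}{L-2},\qquad c_i=\frac{2\bar k_i}{\widehat V_L}-2 .
\]
This $c_i$ agrees with Eq.~\eqref{eq:qrl_kernel_credit}, since $\bar k_i/\widehat V_L=L\,s_i/S$.

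The main obstacle is making the $\mathcal O(L^{-2})$ remainder uniform rather than merely formal. Because both $\bar k_i$ and $\widehat V_L$ lie in $[e^{-1/h},1]$, we get $|c_i|\le 2e^{1/h}+2$, a bound depending only on $h$ and not on $L$ or the sample. Hence $|x_i|\le(2e^{1/h}+2)/(L-2)\le 1/2$ once $L$ is large enough. The bound $|\log(1-x)+x|\le x^2$ for $|x|\le 1/2$ then gives $\Delta_i=-x_i+\mathcal O(x_i^2)=-c_i/(L-2)+\mathcal O(L^{-2})$, with a constant depending only on $h$.

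\textbf{Sign and zero-sum properties.} Averaging gives $\sum_i\bar k_i=L\widehat V_L$, so $\sum_i c_i=2L\widehat V_L/\widehat V_L-2L=0$. Since $\widehat V_L>0$, $c_i>0$ holds iff $\bar k_i>\widehat V_L$, and $c_i<0$ holds iff $\bar k_i<\widehat V_L$.
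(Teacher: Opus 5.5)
Your proposal is correct and follows essentially the same route as the paper: remove the $2(L-1)\bar k_i$ off-diagonal mass to obtain $\widehat V_{-i}$, rewrite $\Delta_i=\log(\widehat V_{-i}/\widehat V_L)$, linearize $\log(1-x_i)$ using the uniform bounds $k_y\in[e^{-1/h},1]$, and read off the zero-sum and sign properties from $\sum_i\bar k_i=L\widehat V_L$. Your explicit inequality $|\log(1-x)+x|\le x^2$ for $|x|\le 1/2$ is a small sharpening that makes the $\mathcal O(L^{-2})$ remainder constant explicit and dependent only on $h$, whereas the paper simply invokes $\log(1-x)=-x+\mathcal O(x^2)$.
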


\begin{proof}
By the kernel symmetry $k_y(m, m') = k_y(m', m)$, removing candidate $i$ deletes $2\sum_{j \neq i} k_y(m_i, m_j) = 2(L-1)\bar{k}_i$ off-diagonal mass from the total group mass $L(L-1)\widehat{V}_L$. The remaining $(L-1)(L-2)$ ordered pairs have average concentration
\begin{equation}
\widehat{V}_{-i} = \frac{L(L-1)\widehat{V}_L - 2(L-1)\bar{k}_i}{(L-1)(L-2)} = \frac{L\widehat{V}_L - 2\bar{k}_i}{L-2} = \widehat{V}_L - \frac{2(\bar{k}_i - \widehat{V}_L)}{L-2}. \label{eq:app_v_minus_i_derivation}
\end{equation}
By Proposition~\ref{prop:app_jaccard_kernel}, $k_y(m, m') \ge e^{-1/h} > 0$ holds uniformly, guaranteeing that $\widehat{V}_{-i} \ge e^{-1/h} > 0$ and the logarithm is strictly well-defined. Dividing Eq.~\eqref{eq:app_v_minus_i_derivation} by $\widehat{V}_L$ and evaluating the leave-one-out entropy contrast gives
\begin{align}
\Delta_i 
= \widehat{H}_{2,k} - \widehat{H}_{-i} \notag 
= -\log \widehat{V}_L - (-\log \widehat{V}_{-i}) \notag 
= \log \left( \frac{\widehat{V}_{-i}}{\widehat{V}_L} \right) \notag 
= \log \left( 1 - \frac{2(\bar{k}_i - \widehat{V}_L)}{(L-2)\widehat{V}_L} \right),
\end{align}
which establishes Eq.~\eqref{eq:app_exact_loo_contrast}.

Next, let $x_i = \frac{2(\bar{k}_i - \widehat{V}_L)}{(L-2)\widehat{V}_L}$. Since $|\bar{k}_i - \widehat{V}_L| \le 1 - e^{-1/h}$ and $\widehat{V}_L \ge e^{-1/h}$, we have $x_i = \mathcal{O}(L^{-1})$. Applying the Taylor expansion $\log(1 - x) = -x + \mathcal{O}(x^2)$ around $x=0$ yields
\begin{align*}
\Delta_i &= -x_i + \mathcal{O}(x_i^2) = - \frac{2(\bar{k}_i - \widehat{V}_L)}{(L-2)\widehat{V}_L} + \mathcal{O}(L^{-2}) = - \frac{c_i}{L-2} + \mathcal{O}(L^{-2}),
\end{align*}
which establishes Eq.~\eqref{eq:app_linearized_loo} and shows that the candidate crowding cost $c_i = \frac{2\bar{k}_i}{\widehat{V}_L} - 2$ governs the leading leave-one-out entropy rate.

Finally, using the definition $\sum_{i=1}^L \bar{k}_i = L\widehat{V}_L$ from Eq.~\eqref{eq:qrl_kernel_credit}, the group sum of the costs satisfies
\begin{equation}
\sum_{i=1}^L c_i = \frac{2}{\widehat{V}_L} \sum_{i=1}^L \bar{k}_i - 2L = \frac{2}{\widehat{V}_L}(L\widehat{V}_L) - 2L = 0.
\end{equation}
Rewriting the cost as $c_i = \frac{2}{\widehat{V}_L}(\bar{k}_i - \widehat{V}_L)$ with strictly positive normalizer $\widehat{V}_L > 0$, the signs satisfy $c_i > 0 \iff \bar{k}_i > \widehat{V}_L$ and $c_i < 0 \iff \bar{k}_i < \widehat{V}_L$ directly by affine monotonicity.
\end{proof}

The preceding result gives a finite-group interpretation of the implemented
cost.
We next show that the same quantity converges to the score coefficient of
the population kernel R\'enyi objective.
Define the population kernel density of a repair core $m$ by
\[
d_q(m)
=
\mathbb E_{m'\sim q}[k_y(m,m')],
\]
so that
\[
V_k(q)
=
\mathbb E_{m\sim q}[d_q(m)].
\]
Define the corresponding population crowding coefficient
\begin{equation}
c_q^k(m)
=
2\frac{d_q(m)}{V_k(q)}-2.
\label{eq:app_population_crowding}
\end{equation}
\begin{theorem}[Population score identity and local consistency]
\label{thm:app_qpo_kernel_gradient}
Assume that $\pi_\theta$ is differentiable with locally fixed support and integrable score $\mathbb E_{E\sim\pi_\theta}[\|g_\theta(E)\|] < \infty$, and that $\kappa_y$ and $k_y$ remain fixed during the policy update. Let $g_\theta(E) = \nabla_\theta \log \pi_\theta(E \mid M_0,G)$ and let $q_{\theta,y}$ denote the distribution of repair cores $m = \kappa_y(E)$ induced by $E \sim \pi_\theta$ (abbreviated as $q$). Then:
\begin{enumerate}[leftmargin=*,label=\rm(\roman*)]
    \item \textbf{Score Identity:} The population kernel R\'enyi functional satisfies
    \begin{equation}
    \nabla_\theta H_{2,k}(q_{\theta,y}) = - \mathbb E_{E\sim \pi_\theta}\left[c_{q_{\theta,y}}^k(\kappa_y(E))\,g_\theta(E)\right].
    \label{eq:app_population_kernel_gradient}
    \end{equation}
    \item \textbf{Local Consistency:} For the empirical crowding cost $c_i$ in Eq.~\eqref{eq:app_linearized_loo}, we have $|c_i - c_q^k(m_i)| \to 0$ in probability as $L \to \infty$. Consequently, for the detached quotient-diversity surrogate $\mathcal J_{k,L}(\theta) = -\frac{\beta}{L} \sum_{i=1}^L \rho_i(\theta) \mathrm{sg}(c_i)$ in Eq.~\eqref{eq:qpo_objective}, its expected gradient at the behavior policy satisfies
    \begin{equation}
    \lim_{L\to\infty} \mathbb E\left[\nabla_\theta \mathcal J_{k,L}(\theta)\big|_{\theta=\theta_{\mathrm{old}}}\right] = \beta \nabla_\theta H_{2,k}(q_{\theta_{\mathrm{old}},y}).
    \label{eq:app_kernel_gradient_consistency}
    \end{equation}
\end{enumerate}
Thus the QuotientPO kernel term is a locally consistent score surrogate for population quotient diversity as the rollout group grows.
\end{theorem}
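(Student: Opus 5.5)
For part (i) I will differentiate $H_{2,k}(q_{\theta,y})=-\log V_k(q_{\theta,y})$ directly. Since $\mathcal M_y$ is finite, $V_k(q)=\sum_{m,m'}q(m)q(m')k_y(m,m')$ is a quadratic form in the finite vector $q$. Symmetry of $k_y$ (Proposition~\ref{prop:app_jaccard_kernel}) gives $\nabla_\theta V_k=2\sum_m d_q(m)\nabla_\theta q(m)$.

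Next I will write $q(m)=\sum_{E\in\Omega_m}\pi_\theta(E)$. The supports and fibers stay fixed because $\kappa_y$ is fixed and the support is locally fixed. The log-derivative trick then gives $\nabla_\theta q(m)=\sum_{E\in\Omega_m}\pi_\theta(E)g_\theta(E)$, which combines to $\nabla_\theta V_k=2\,\mathbb E_{E\sim\pi_\theta}[d_q(\kappa_y(E))g_\theta(E)]$.

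Dividing by $-V_k$, which is at least $e^{-1/h}>0$, yields $\nabla_\theta H_{2,k}=-\mathbb E[(2d_q/V_k)\,g_\theta]$. The constant $-2$ can be inserted for free because $\mathbb E_{\pi_\theta}[g_\theta]=\nabla_\theta\sum_E\pi_\theta(E)=0$. Differentiation under the sum is trivial since $\mathcal F$ is finite; the integrability hypothesis is only needed for the general statement. This gives Eq.~\eqref{eq:app_population_kernel_gradient}.

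For part (ii) I will first show consistency of the two ingredients of $c_i=2\bar k_i/\widehat V_L-2$.

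\textbf{Denominator.} By Theorem~\ref{thm:finite_group}, $\widehat V_L$ is unbiased with variance $O(1/L)$, so Chebyshev gives $\widehat V_L\to V_k(q)$ in probability.

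\textbf{Numerator.} Conditional on $m_i$, $\bar k_i$ is an average of $L-1$ i.i.d.\ bounded terms $k_y(m_i,m_j)$ with mean $d_q(m_i)$, so $|\bar k_i-d_q(m_i)|\to0$ in probability. The bound is uniform because the kernel lies in $[e^{-1/h},1]$; Hoeffding gives a tail of $2\exp(-2(L-1)\varepsilon^2)$ regardless of $m_i$.

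Since both $\widehat V_L$ and $V_k$ are bounded below by $e^{-1/h}$, the map $(a,b)\mapsto 2a/b-2$ is Lipschitz on the relevant domain, and $|c_i-c^k_q(m_i)|\to0$ in probability follows. Moreover $|c_i|\le 2e^{1/h}+2$ deterministically.

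For the gradient limit, at $\theta=\theta_{\mathrm{old}}$ we have $\nabla\rho_i=g_\theta(E_i)$. Hence
\[
\mathbb E[\nabla\mathcal J_{k,L}]=-\tfrac{\beta}{L}\sum_i\mathbb E[c_ig_\theta(E_i)]=-\beta\,\mathbb E[c_1g_\theta(E_1)]
\]
by exchangeability. I will compare this with $-\beta\,\mathbb E[c^k_q(m_1)g_\theta(E_1)]$, which by (i) equals $\beta\nabla_\theta H_{2,k}$. The difference is bounded by $\beta\,\mathbb E[|c_1-c^k_q(m_1)|\,\|g_\theta(E_1)\|]$.

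The main obstacle is passing from convergence in probability to convergence of this expectation, because $g_\theta$ need not be bounded. I would handle it with dominated convergence. The integrand is dominated by $(4e^{1/h}+4)\|g_\theta(E_1)\|$, which is integrable by assumption, and it tends to zero in probability. Convergence in probability with an integrable dominator suffices, since every subsequence has an a.s.\ convergent further subsequence.

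Alternatively, I would condition on $E_1$ and use the uniform Hoeffding bound. This gives $\mathbb E[|c_1-c^k_q(m_1)|\mid E_1]\le\epsilon_L\to0$ uniformly in $E_1$, where conditioning on $E_1$ leaves the remaining samples i.i.d. Then the error is at most $\beta\epsilon_L\,\mathbb E\|g_\theta\|$. I prefer this route because it also yields an explicit $O(L^{-1/2})$ rate.
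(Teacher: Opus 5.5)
Your proposal is correct and follows the paper's proof essentially step for step: you differentiate the pair expectation defining $V_k$ (using symmetry of $k_y$), divide by $V_k\ge e^{-1/h}$, and insert the $-2$ baseline via $\mathbb E_{\pi_\theta}[g_\theta]=0$; you then combine a law of large numbers for $\bar k_i$, U-statistic consistency of $\widehat V_L$, continuous mapping, exchangeability, the uniform bound on $c_1$, and dominated convergence with the integrable score. Your Hoeffding/Chebyshev quantification and the subsequence justification of dominated convergence under convergence in probability are sound refinements; if you pursue the conditional $O(L^{-1/2})$ route, note that $\widehat V_L$ still depends on $m_1$, which you can handle via $\widehat V_L=\tfrac{2}{L}\bar k_1+\tfrac{L-2}{L}\widehat V_{-1}$, where $\widehat V_{-1}$ is independent of $E_1$.
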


\begin{proof}
Using the deterministic core map $m = \kappa_y(E)$, the population concentration can be written as
\begin{equation*}
V_k(q_{\theta,y}) = \mathbb E_{E,E'\sim \pi_\theta}\left[k_y(\kappa_y(E), \kappa_y(E'))\right].
\end{equation*}
Differentiating the two IID policy factors gives
\begin{equation*}
\nabla_\theta V_k(q_{\theta,y}) = 2 \, \mathbb E_{E\sim \pi_\theta}\left[d_q(\kappa_y(E))\,g_\theta(E)\right].
\end{equation*}
Since $H_{2,k}(q) = -\log V_k(q)$, we have
\begin{equation*}
\nabla_\theta H_{2,k}(q_{\theta,y}) = - \frac{2}{V_k(q_{\theta,y})} \mathbb E_{E\sim \pi_\theta}\left[d_q(\kappa_y(E))\,g_\theta(E)\right].
\end{equation*}
The score identity $\mathbb E_{E\sim \pi_\theta}[g_\theta(E)] = 0$ allows baseline subtraction without altering the expectation, yielding
\begin{equation*}
\nabla_\theta H_{2,k}(q_{\theta,y}) = - \mathbb E_{E\sim \pi_\theta}\left[\left(2\frac{d_q(\kappa_y(E))}{V_k(q_{\theta,y})} - 2\right) g_\theta(E)\right] = - \mathbb E_{E\sim \pi_\theta}\left[c_q^k(\kappa_y(E))\,g_\theta(E)\right],
\end{equation*}
which proves Eq.~\eqref{eq:app_population_kernel_gradient}.

Next, we establish the local consistency of the empirical surrogate. Conditional on $m_i$, the remaining $L-1$ cores are IID draws from $q$. By the weak law of large numbers, $\bar{k}_i \to d_q(m_i)$ in probability as $L \to \infty$. In addition, the bounded order-two U-statistic satisfies $\widehat{V}_L \to V_k(q)$ in probability. Because $V_k(q) \ge e^{-1/h} > 0$ is bounded away from zero, the continuous mapping theorem implies that $|c_i - c_q^k(m_i)| \to 0$ in probability as $L \to \infty$.

Finally, evaluating the surrogate gradient at the behavior policy gives $\nabla_\theta \rho_i(\theta)|_{\theta=\theta_{\mathrm{old}}} = g_{\theta_{\mathrm{old}}}(E_i)$. By the exchangeability of the IID rollout sample $(E_1, \dots, E_L)$,
\begin{equation*}
\mathbb E\left[\nabla_\theta \mathcal J_{k,L}(\theta)\big|_{\theta=\theta_{\mathrm{old}}}\right] = -\beta \mathbb E\left[\frac{1}{L}\sum_{i=1}^L c_i g_{\theta_{\mathrm{old}}}(E_i)\right] = -\beta \mathbb E\left[c_1^k g_{\theta_{\mathrm{old}}}(E_1)\right].
\end{equation*}
Since $k_y \in [e^{-1/h}, 1]$, the statistics $\bar{k}_1$ and $\widehat{V}_L$ both lie in $[e^{-1/h}, 1]$, so $c_1^k$ is uniformly bounded in $L$ by $2(e^{1/h}-1)$. Combined with score integrability $\mathbb E[\|g_{\theta_{\mathrm{old}}}(E_1)\|] < \infty$, the dominated convergence theorem implies
\begin{equation*}
\lim_{L\to\infty} \mathbb E\left[c_1^k g_{\theta_{\mathrm{old}}}(E_1)\right] = \mathbb E\left[c_q^k(\kappa_y(E_1))\,g_{\theta_{\mathrm{old}}}(E_1)\right].
\end{equation*}
Substituting this limit back into the surrogate expectation and applying Eq.~\eqref{eq:app_population_kernel_gradient} directly yields Eq.~\eqref{eq:app_kernel_gradient_consistency}.
\end{proof}

\textbf{Relation to deployment-time success and mechanism coverage.}
Because $k$ is already used for the similarity kernel, let $K$ denote the
number of independently generated repairs in the evaluation metric
success@$K$.
For a fixed evaluation instance $x$, let
$s_x(E)\in\{0,1\}$ denote the evaluation success predicate and define the
single-draw success probability
\[
Z_x(\theta)
=
\mathbb P_{E\sim\pi_\theta(\cdot\mid x)}
[s_x(E)=1].
\]

\begin{proposition}[Successful quotient mass determines Success@$K$]
\label{prop:app_successk}
For $K\geq1$ IID repairs sampled from $\pi_\theta(\cdot\mid x)$, the
probability that at least one repair succeeds is
\begin{equation}
P_K(x;\theta)
=
1-
\bigl(1-Z_x(\theta)\bigr)^K.
\label{eq:app_successk}
\end{equation}
Therefore $P_K(x;\theta)$ is monotonically increasing in
$Z_x(\theta)$, and strictly increasing whenever $Z_x(\theta)<1$.
If the evaluation success predicate is invariant within repair cores, i.e.,
there exists $s_x^{\mathrm{core}}$ such that
\(
s_x(E)
=
s_x^{\mathrm{core}}(\kappa_y(E)),
\)
then
\begin{equation}
Z_x(\theta)
=
\sum_{m\in\mathcal M_y}
q_{\theta,y}(m)\,
s_x^{\mathrm{core}}(m).
\label{eq:app_successful_core_mass}
\end{equation}
Thus success@$K$ is determined by the total quotient probability mass assigned
to successful repair cores.
\end{proposition}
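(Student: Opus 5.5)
The plan is to prove the two displayed identities separately, each by an elementary probabilistic argument, and then read off the monotonicity claims.

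\textbf{Step 1 (Success@$K$ formula).} Let $E_1,\ldots,E_K$ be IID draws from $\pi_\theta(\cdot\mid x)$. Then the indicators $s_x(E_1),\ldots,s_x(E_K)$ are IID Bernoulli$(Z_x(\theta))$ variables. The event that no repair succeeds is the intersection of the events $\{s_x(E_j)=0\}$, and by independence it has probability $(1-Z_x(\theta))^K$. Taking complements gives Eq.~\eqref{eq:app_successk}.

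\textbf{Step 2 (monotonicity).} Consider $f(z)=1-(1-z)^K$ on $[0,1]$. Its derivative is $f'(z)=K(1-z)^{K-1}\ge 0$, so $f$ is nondecreasing. For $z<1$ this derivative is strictly positive, which gives strict increase on $[0,1)$. For $K=1$, $f$ is the identity, so the claim is immediate.

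\textbf{Step 3 (successful quotient mass).} Assume core invariance, $s_x(E)=s_x^{\mathrm{core}}(\kappa_y(E))$. The argument mirrors Eq.~\eqref{eq:app_expected_utility} in the proof of Proposition~\ref{prop:action_multiplicity_bias}. Since $\mathcal F=\bigsqcup_m\Omega_m$, we can regroup the sum by fibers:
\[
Z_x(\theta)=\sum_{E}\pi_\theta(E)s_x(E)=\sum_{m}\sum_{E\in\Omega_m}\pi_\theta(E)s_x^{\mathrm{core}}(m).
\]
The factor $s_x^{\mathrm{core}}(m)$ is constant on each fiber, so it can be pulled out of the inner sum. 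The remaining inner sum is exactly $q_{\theta,y}(m)$, which yields Eq.~\eqref{eq:app_successful_core_mass}.

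The only point needing care is Step 3's implicit assumption about the sample space. Evaluation is on instance $x$ while the partition is indexed by $y$, so the success predicate must be defined over the same finite $\mathcal F$ partitioned by $\kappa_y$. This is exactly what the invariance hypothesis supplies, and there is no real obstacle beyond stating that hypothesis explicitly.

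Finally, combine Step 3 with Steps 1 and 2. This shows that $P_K$ depends on $\pi_\theta$ only through the total quotient mass on successful cores, and that it increases with that mass.
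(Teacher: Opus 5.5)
Your proposal is correct and matches the paper's proof step for step. Both compute the all-fail probability $(1-Z_x(\theta))^K$ under IID sampling and take the complement, then use the derivative $K(1-Z_x)^{K-1}>0$ for $Z_x<1$ to get monotonicity, and finally regroup $\sum_E \pi_\theta(E\mid x)\,s_x^{\mathrm{core}}(\kappa_y(E))$ by the fibers $\Omega_m$ to obtain the successful quotient mass.
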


\begin{proof}
A single repair fails with probability $1-Z_x(\theta)$.
Under IID sampling, the probability that all $K$ repairs fail is therefore
\[
\bigl(1-Z_x(\theta)\bigr)^K.
\]
Taking the complement proves Eq.~\eqref{eq:app_successk}.
Moreover,
\[
\frac{\partial P_K(x;\theta)}
{\partial Z_x}
=
K(1-Z_x)^{K-1},
\]
which is positive for $Z_x<1$.

Under core invariance of the success predicate,
\begin{align*}
Z_x(\theta)
&=
\sum_E
\pi_\theta(E\mid x)
s_x^{\mathrm{core}}(\kappa_y(E))\\
&=
\sum_{m\in\mathcal M_y}
s_x^{\mathrm{core}}(m)
\sum_{E:\kappa_y(E)=m}
\pi_\theta(E\mid x)\\
&=
\sum_{m\in\mathcal M_y}
q_{\theta,y}(m)
s_x^{\mathrm{core}}(m),
\end{align*}
which proves Eq.~\eqref{eq:app_successful_core_mass}.
\end{proof}

\begin{proposition}[Expected distinct successful-core coverage]
\label{prop:successful_core_coverage}
Let
\[
\mathcal M_x^{\mathrm{succ}}
=
\left\{
m\in\mathcal M_y:
s_x^{\mathrm{core}}(m)=1
\right\}
\]
denote the successful repair cores for instance $x$.
For $K$ IID repairs $E_1,\ldots,E_K\sim\pi_\theta(\cdot\mid x)$,
let
\[
D_K
=
\left|
\left\{
\kappa_y(E_t):
1\leq t\leq K,\;
s_x(E_t)=1
\right\}
\right|
\]
be the number of distinct successful cores discovered within the
$K$ samples. Then
\[
\mathbb E[D_K]
=
\sum_{m\in\mathcal M_x^{\mathrm{succ}}}
\left[
1-
\left(1-q_{\theta,y}(m)\right)^K
\right].
\]
Thus, unlike Success@K, distinct successful-core coverage depends
not only on the total successful mass
\[
Z_x(\theta)
=
\sum_{m\in\mathcal M_x^{\mathrm{succ}}}
q_{\theta,y}(m),
\]
but also on how that mass is distributed across successful cores.
\end{proposition}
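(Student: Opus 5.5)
The plan is to write $D_K$ as a sum of indicators, one per successful core, and then use linearity of expectation. For each $m\in\mathcal M_x^{\mathrm{succ}}$, let
\[
I_m=\mathbf 1\bigl[\exists\, t\le K:\ \kappa_y(E_t)=m,\ s_x(E_t)=1\bigr].
\]
The first step is to check that $D_K=\sum_{m\in\mathcal M_x^{\mathrm{succ}}}I_m$. Every element of the set defining $D_K$ is a core $\kappa_y(E_t)$ with $s_x(E_t)=1$. Under the core-invariance assumption of Proposition~\ref{prop:app_successk}, $s_x(E_t)=s_x^{\mathrm{core}}(\kappa_y(E_t))$, so each such core lies in $\mathcal M_x^{\mathrm{succ}}$. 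In the other direction, if $m$ is a successful core and some $E_t$ has $\kappa_y(E_t)=m$, then invariance gives $s_x(E_t)=1$ automatically. Together these show $I_m=\mathbf 1[\exists t:\kappa_y(E_t)=m]$, and that the set in $D_K$ is exactly the set of successful cores that are hit at least once.

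The second step computes each expectation. For a single draw, $\mathbb P(\kappa_y(E_t)=m)=\sum_{E\in\Omega_m}\pi_\theta(E\mid x)=q_{\theta,y}(m)$, which is the fiber aggregation already used in Lemma~\ref{lem:surjectivity} and Proposition~\ref{prop:app_successk}. Because the $K$ draws are IID, the probability that none of them lands in $\Omega_m$ is $(1-q_{\theta,y}(m))^K$. Hence $\mathbb E[I_m]=1-(1-q_{\theta,y}(m))^K$. Summing over the finite set $\mathcal M_x^{\mathrm{succ}}$ and applying linearity of expectation gives the stated formula.

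For the closing remark, I would give a two-core example with the same total successful mass $Z_x$. If $Z_x$ is split evenly between two successful cores, the formula gives $2[1-(1-Z_x/2)^K]$. If it is concentrated on one core, the formula gives $1-(1-Z_x)^K$. For $K\ge2$ and $0<Z_x\le1$ these differ, by strict concavity of $z\mapsto1-(1-z)^K$. Success@$K$, by Proposition~\ref{prop:app_successk}, is the same in both cases, so this establishes the dependence on how the mass is distributed.

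The main obstacle is the first step: the identification relies on the core-invariance of $s_x$. I would state explicitly that the proposition inherits this hypothesis from Proposition~\ref{prop:app_successk}, since $\mathcal M_x^{\mathrm{succ}}$ is defined through $s_x^{\mathrm{core}}$. Everything after that is routine.
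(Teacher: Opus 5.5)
Your proof is correct and follows the paper's argument: write $D_K$ as a sum of per-core hit indicators $I_m$ over $\mathcal M_x^{\mathrm{succ}}$, use independence to get $\Pr(I_m=0)=(1-q_{\theta,y}(m))^K$, and sum by linearity. Two points are yours rather than the paper's, and both are sound: you invoke core invariance of $s_x$ explicitly, which the paper uses only implicitly when identifying $D_K$ with $\sum_m I_m$, and you give a two-core example for the closing remark, which the paper instead leaves to the Schur-concavity corollary that follows.
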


\begin{proof}
For each successful core $m\in\mathcal M_x^{\mathrm{succ}}$,
define
\[
I_m
=
\mathbf 1
\left[
m\in
\{\kappa_y(E_1),\ldots,\kappa_y(E_K)\}
\right].
\]
Then
\[
D_K
=
\sum_{m\in\mathcal M_x^{\mathrm{succ}}} I_m.
\]
Because each sampled repair induces core $m$ with probability
$q_{\theta,y}(m)$, independence gives
\[
\Pr(I_m=0)
=
\left(1-q_{\theta,y}(m)\right)^K.
\]
Hence
\[
\mathbb E[I_m]
=
1-\left(1-q_{\theta,y}(m)\right)^K.
\]
Applying linearity of expectation yields
\[
\mathbb E[D_K]
=
\sum_{m\in\mathcal M_x^{\mathrm{succ}}}
\left[
1-\left(1-q_{\theta,y}(m)\right)^K
\right].
\]
\end{proof}

\begin{corollary}[Successful-mass equalization improves mechanism coverage]
Fix an instance $x$, a sampling budget $K>1$, and a set of $S$ successful repair cores.
For a fixed total successful mass
\[
\sum_{m\in\mathcal{M}^{\mathrm{succ}}_x} q_{\theta,y}(m)=Z,
\]
the expected distinct successful-core coverage
\[
\mathbb{E}[D_K]
=
\sum_{m\in\mathcal{M}^{\mathrm{succ}}_x}
\left[
1-\left(1-q_{\theta,y}(m)\right)^K
\right]
\]
is Schur-concave in the successful-core probability vector.
Consequently, any mean-preserving redistribution that equalizes probability mass across successful cores weakly increases expected coverage.
The maximum is attained at the uniform allocation
\[
q_{\theta,y}(m)=\frac{Z}{S},
\]
yielding
\[
\mathbb{E}[D_K]
=
S\left[
1-\left(1-\frac{Z}{S}\right)^K
\right].
\]
Meanwhile, Success@$K$ remains
\[
1-(1-Z)^K,
\]
and therefore depends only on the total successful mass $Z$.
\end{corollary}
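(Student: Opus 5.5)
We prove the corollary by applying Proposition~\ref{prop:successful_core_coverage} and a standard majorization argument. Write $p=(p_1,\ldots,p_S)$ for the probabilities $q_{\theta,y}(m)$ of the $S$ successful cores. These lie in the set
\[
\Delta_Z=\{p\in[0,1]^S:\textstyle\sum_s p_s=Z\}.
\]
By Proposition~\ref{prop:successful_core_coverage}, $\mathbb E[D_K]=F(p)=\sum_{s=1}^S\phi(p_s)$ with $\phi(t)=1-(1-t)^K$.

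\textbf{Step 1: Schur-concavity.} Since $F$ is symmetric and separable, Schur-concavity follows once $\phi$ is concave on $[0,1]$. We have $\phi''(t)=-K(K-1)(1-t)^{K-2}\le 0$ for $K>1$, so $\phi$ is concave.

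We can then invoke Karamata's inequality (equivalently, the Hardy--Littlewood--P\'olya characterization). If $p$ majorizes $p'$, then $\sum_s\phi(p_s)\le\sum_s\phi(p'_s)$.

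For a self-contained route, check the two-coordinate Schur--Ostrowski condition:
\[
(p_a-p_b)\bigl(\partial_aF-\partial_bF\bigr)=(p_a-p_b)\bigl(\phi'(p_a)-\phi'(p_b)\bigr)\le 0.
\]
This holds because $\phi'(t)=K(1-t)^{K-1}$ is nonincreasing.

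\textbf{Step 2: Equalizing redistributions.} A mean-preserving redistribution that equalizes mass is a Robin Hood (T-)transform. It moves mass from a larger coordinate to a smaller one without reversing their order. The output is therefore majorized by the input, and Step 1 gives a weak increase in $\mathbb E[D_K]$.

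\textbf{Step 3: The maximum.} The uniform vector $(Z/S,\ldots,Z/S)$ is majorized by every $p\in\Delta_Z$. Alternatively, Jensen's inequality gives $\frac1S\sum_s\phi(p_s)\le\phi(Z/S)$. Either way, the maximum is attained at the uniform allocation, with value $S[1-(1-Z/S)^K]$.

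\textbf{Step 4: Success@$K$.} By Proposition~\ref{prop:app_successk}, Success@$K$ equals $1-(1-Z)^K$. This depends only on the total successful mass $Z$, so it is unaffected by any redistribution within $\Delta_Z$.

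\textbf{Main obstacle.} The only delicate point is bookkeeping: the redistributions must stay inside $\Delta_Z$, and the mass on unsuccessful cores must be held fixed. Once that is checked, everything reduces to the concavity of $\phi$.

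Strictness can be added if wanted. For $K>1$, $\phi$ is strictly concave on $[0,1)$. It follows that the uniform allocation is the unique maximizer whenever $Z/S<1$.
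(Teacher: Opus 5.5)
Your proposal is correct and follows the same route as the paper. Both proofs establish concavity of $1-(1-t)^K$ from its nonpositive second derivative, deduce Schur-concavity of the symmetric separable sum, conclude that the uniform allocation maximizes coverage under the fixed-sum constraint, and cite Proposition~\ref{prop:app_successk} to show that Success@$K$ depends only on $Z$; your Karamata/Schur--Ostrowski check, Robin Hood transform reading of ``equalizing,'' Jensen bound, and strictness remark simply make explicit steps the paper asserts without elaboration.
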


\begin{proof}
For $K>1$, define
\[
f_K(p)=1-(1-p)^K.
\]
Since
\[
f_K''(p)
=
-K(K-1)(1-p)^{K-2}
\le 0,
\]
$f_K$ is concave on $[0,1]$.
Therefore,
\[
\mathbb{E}[D_K]
=
\sum_{m\in\mathcal{M}^{\mathrm{succ}}_x}
f_K\!\left(q_{\theta,y}(m)\right)
\]
is a symmetric Schur-concave function of the successful-core probability vector.
Hence, any mean-preserving equalization of this vector weakly increases expected coverage.
Under the fixed-sum constraint
\[
\sum_{m\in\mathcal{M}^{\mathrm{succ}}_x}
q_{\theta,y}(m)=Z,
\]
the maximum is attained at the uniform allocation
\[
q_{\theta,y}(m)=\frac{Z}{S}.
\]
Substituting this value gives
\[
\mathbb{E}[D_K]
=
S\left[
1-\left(1-\frac{Z}{S}\right)^K
\right].
\]
By Proposition~B.12, Success@$K$ is
\[
1-(1-Z)^K,
\]
which depends only on the total successful mass $Z$.
\end{proof}

\paragraph{Interpretation and scope.}
Propositions~\ref{prop:app_successk}--\ref{prop:successful_core_coverage} reveal two complementary dimensions of deployment-time performance.
Success@$K$ is governed solely by the total probability mass $Z_x$ assigned to successful repair cores,
whereas distinct-core coverage additionally rewards how broadly this mass is distributed across successful mechanisms.
At fixed $Z_x$, equalizing probability across successful cores monotonically improves expected mechanism coverage,
with uniform allocation achieving the optimum.
Thus, successful-mass concentration and successful-mass allocation control different aspects of test-time scaling:
increasing $Z_x$ raises the probability of obtaining a valid repair,
while distributing that mass across distinct successful cores increases the number of mechanisms recoverable under the same sampling budget.

This characterization motivates quotient-space exploration at the mechanism level.
By shifting probability away from crowded regions of the repair-core distribution,
the kernelized quotient objective promotes broader allocation across candidate mechanisms,
while verifier-guided optimization concentrates probability on functionally successful regions.
Their combination therefore aligns naturally with the two quantities identified above:
successful probability mass and successful-core coverage.
Empirically, QuotientPO improves both,
yielding higher Success@$K$ and greater distinct successful-core recovery under the same sampling budget.

\section{Dataset and Benchmark Construction}
\label{app:data}

\subsection{Construction Pipeline}  

\textbf{Construction of initial GEM and proteome matching.} BacDive phenotype records were linked to NCBI~\citep{ncbi} proteomes via prefix/suffix-normalized INSDC assembly accessions. Retaining matches with available \texttt{protein.faa(.gz)} files yielded 12,303 proteome inputs across 11,238 unique assemblies. We constructed candidate GEMs under four CarveMe settings: default, standard Gram-positive, standard Gram-negative, and a custom archaeal universe (augmenting CarveMe with archaeal biomass and ATP maintenance), producing 36,855 SBML Level~3/FBC2 GEMs as uncurated starting hypotheses. Concurrently, raw BacDive records were filtered for unambiguous binary phenotypes and mapped to BiGG identifiers. Retaining organisms with at least one mapped phenotype label produced 11,275 organisms and 37,864 organism--model entries (as multiple BacDive strains may share an assembly). This cohort was filtered and bifurcated into two downstream pipelines: 3,978 phenotype-consistent GEMs under the selected assays were identified as candidates for self-supervised pre-training, 28,842 canonical error-bearing GEMs formed the post-training cohort. Records sharing the same normalized accession were assigned to the same split to prevent genome-level leakage.

\begin{table*}[h]
\centering
\small
\setlength{\tabcolsep}{8pt}
\renewcommand{\arraystretch}{1.1}
\caption{
\textbf{Nutrient-growth assay coverage.}
}
\label{tab:bacdive_families}
\begin{tabularx}{0.86\textwidth}{
@{}>{\raggedright\arraybackslash}p{0.22\textwidth}
>{\raggedright\arraybackslash}X@{}
}
\toprule
\textbf{Assay category} & \textbf{BacDive families} \\
\midrule
Carbon source
& glucose, maltose, sucrose, glycerol, acetate, lactate,
pyruvate, succinate, citrate, glutamate \\
Nitrogen source
& ammonium, urea, nitrate, nitrite \\
Sulfur source
& sulfate \\
Phosphorus source
& phosphate \\
\bottomrule
\end{tabularx}
\end{table*}


\textbf{Construction of BacDive targets.} Binary BacDive nutrient-utilization phenotypes were mapped to BiGG metabolites across 16 nutrient families (Table~\ref{tab:bacdive_families}), with stereospecific assays for lactate and glutamate yielding 18 total assay configurations. In each assay, the tested nutrient $s$ serves as the sole elemental source for its macro-element class (C, N, S, or P) against an element-depleted minimal background to preclude metabolic bypass, with flux bounds set to 10 for the tested substrate, 20 for oxygen, and 1,000 for non-limiting element-free inorganics and water. To ensure consistent boundary conditions across draft GEMs, native exchanges were replaced by standardized boundary reactions (excluded from the editable action space), and all unsupplemented exchange uptakes were strictly disabled. Growth is predicted when optimal biomass flux exceeds $10^{-3}$. For multi-assay labels, positive predictions require growth in at least one assay, while negative predictions require absence across all. Requiring at least one decidable target and one initial phenotype error yields 21,615 post-training eligible GEMs across all partitions.

\textbf{Certified repair supervision.} Teacher repair sets were generated offline via mixed-integer optimization (Appendix~\ref{app:milp}) and verified by FBA under identical assay conditions. We distinguish \emph{complete repairs} (correcting all phenotype errors) from \emph{partial repairs} (strictly reducing net errors while preserving initially correct phenotypes). Partial repairs provide action-level supervision for SFT-A (masking actions that temporarily degrade phenotype accuracy), whereas complete repairs supervise set-level prediction in SFT-B. Training states are constructed from unordered subsets of verified repairs to eliminate arbitrary edit-order assumptions, retaining multiple valid solutions with per-GEM weight normalization. This supervision covers 1,426 GEMs (10,837 states) for SFT-A and 1,078 GEMs (40,151 rows) for SFT-B after augmentation. Crucially, phenotype labels, error diagnostics, and teacher edits are never exposed as policy inputs.


\begin{figure*}[t]
    \centering
    \includegraphics[width=\textwidth]{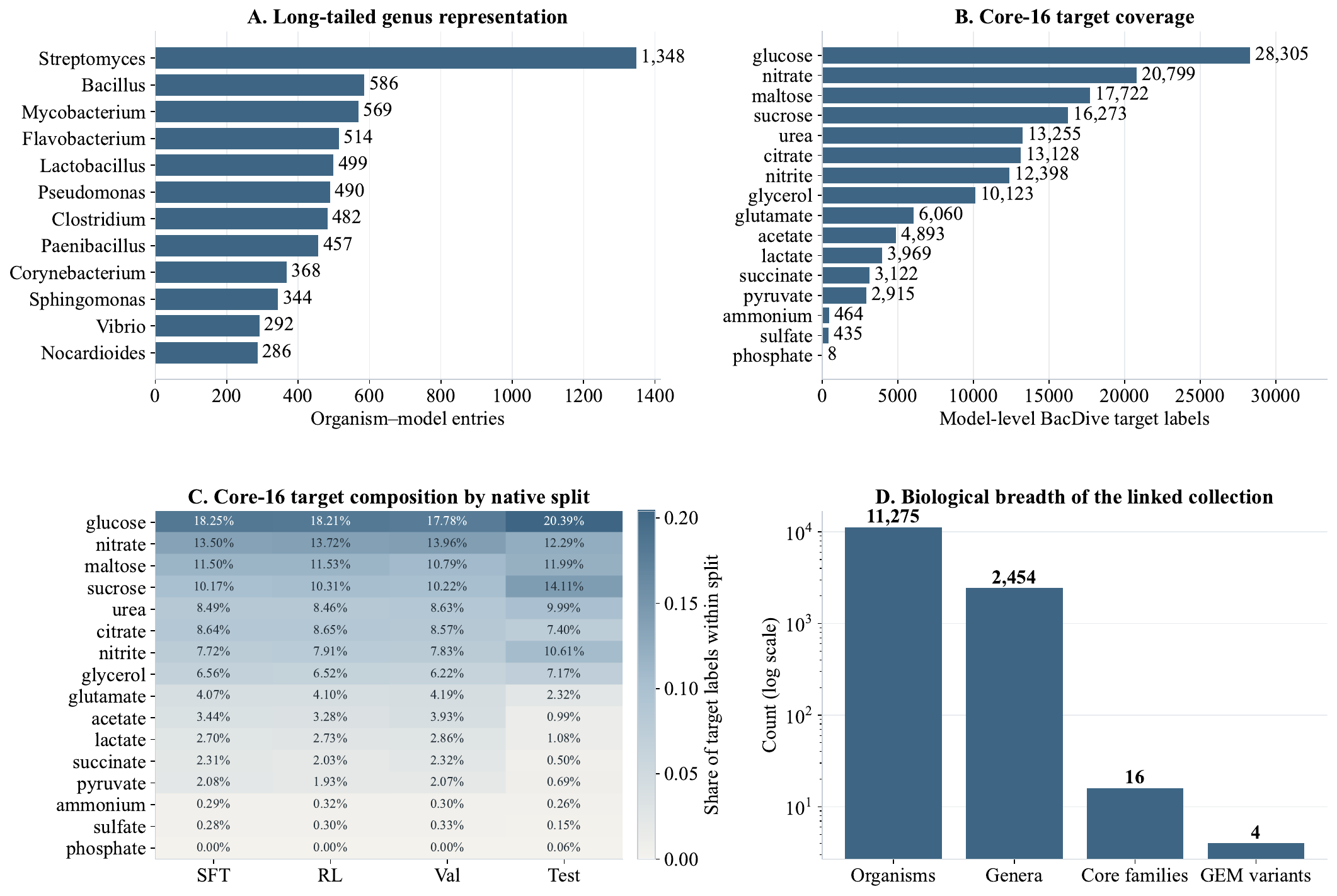}
    \caption{
    \textbf{Dataset characterization and split diagnostics.}
    }
    \label{fig:dataset_characterization}
    \vspace{-0.2cm}
\end{figure*}

\textbf{Data partitioning and leakage prevention.}
Dataset partitions were fixed before post-training eligibility filtering.
Entries connected by normalized organism identity, proteome identity, GEM provenance, or model checksum were merged by transitive closure into 8,568 indivisible identity components.
A genus-held-out test set contains \textit{Bacillus}, \textit{Clostridium}, \textit{Lactobacillus}, \textit{Mycobacterium}, and \textit{Streptomyces}.
The remaining identity components were deterministically assigned to the SFT, RL, and validation pools:
\(
28{,}842
=
12{,}223_{\mathrm{SFT}}
+
12{,}226_{\mathrm{RL}}
+
1{,}445_{\mathrm{validation}}
+
2{,}948_{\mathrm{test}}.
\)
After phenotype-assay filtering and certified-teacher selection, SFT-A retained 1,426/177 training/validation GEMs and SFT-B retained 1,078/117.
The RL split retained 9,196/1,087 training/validation GEMs, while the final repair test set contained 2,212 GEMs.
Eligible models in the SFT pool without certified supervision were left unused rather than reassigned to RL.
All augmented states inherit the partition of their source GEM.
Beyond the component-level partitioning of the post-training cohort, the 3,978 clean GEMs allocated to self-supervised pre-training were strictly quarantined from the outset.
Pre-training explicitly excludes all five held-out evaluation genera and shares zero normalized INSDC accessions or proteomes with the test split.
Auditing confirmed zero biological identity overlap across the entire lifecycle (Pre-training, SFT, RL, Validation and Test), preventing models from acquiring prior familiarity with test-split metabolic networks.

\subsection{Dataset Characterization and Quality Control}
\label{app:dataset_characterization}

Figure~\ref{fig:dataset_characterization} summarizes the biological breadth and split-level properties of the post-training collection. The linked dataset is taxonomically long-tailed, with 11,275 model-linked organisms spanning 2,454 genera and four GEM reconstruction variants (Fig.~\ref{fig:dataset_characterization}A,D). The core-16 supervision covers 153,869 model-level BacDive target groups across the 28,842 retained GEMs (Fig.~\ref{fig:dataset_characterization}B). These counts measure phenotype-target coverage rather than repair errors: the same organism contributes separately through different GEM variants, and distinct compound-level annotations remain independent targets. Across the native SFT, RL, and validation pools,  major phenotype partitions are broadly consistent, while the sealed genus-held-out test set exhibits the expected distribution shifts induced by taxonomic holdout (Fig.~\ref{fig:dataset_characterization}C). Together, these diagnostics establish broad biological coverage, explicit bidirectional phenotype supervision, and a leakage-controlled evaluation shift.

\subsection{Benchmark Validation}
\label{app:benchmark_validation}

The sealed benchmark (2,212 GEMs) evaluates generalization across biological taxa, reconstruction templates, and phenotype error modes (Figure~\ref{fig:benchmark_validation}). Taxonomically, it spans five held-out genera: \textit{Streptomyces} (926), \textit{Bacillus} (443), \textit{Lactobacillus} (403), \textit{Mycobacterium} (266), and \textit{Clostridium} (174) (Fig.~\ref{fig:benchmark_validation}A). It balances four draft reconstruction templates: 697 default, 517 archaeal, 507 Gram-negative, and 491 Gram-positive models (Fig.~\ref{fig:benchmark_validation}B). Across the 16 core metabolites, initial errors display dominant false negatives alongside non-trivial false positives (Fig.~\ref{fig:benchmark_validation}C), requiring policies to coordinate reaction additions and deletions rather than relying on monotonic gap-filling.

\begin{figure*}[h]
    \centering
    \includegraphics[width=\textwidth]{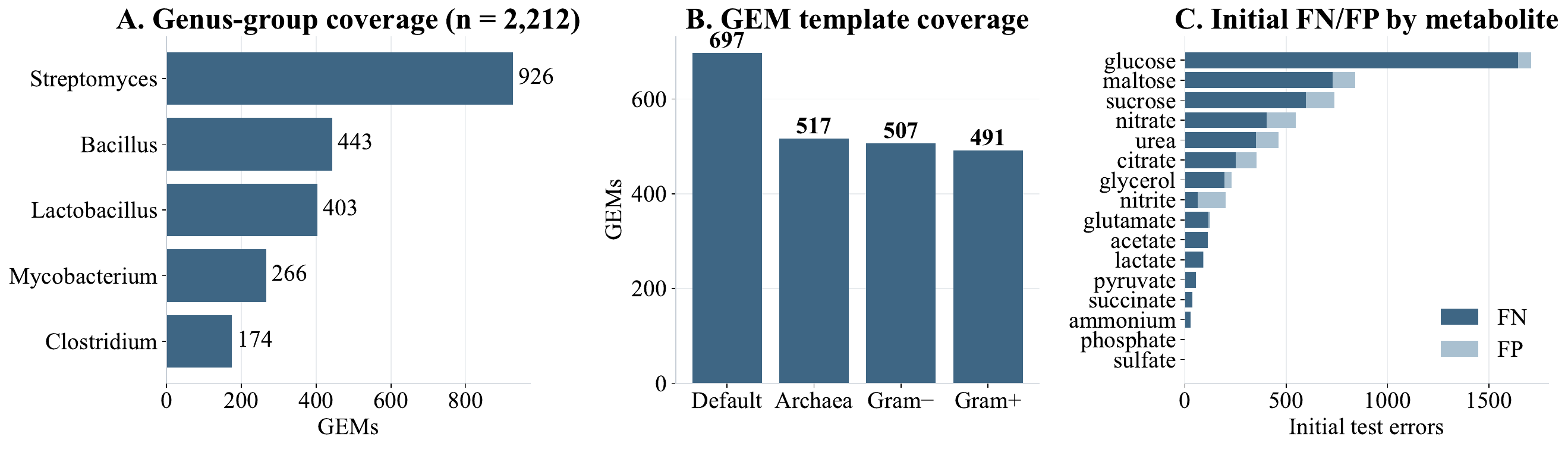}
    \caption{
    \textbf{Benchmark validation.}
    }
    \label{fig:benchmark_validation}
    \vspace{-0.5cm}
\end{figure*}

\section{Baseline Details}
\label{app:baselines}

This section details the baselines that require bespoke algorithmic reformulation to operate over metabolic networks:
the phenotype-informed MILP oracle (\S\ref{app:milp}) and the object-level GFlowNet (\S\ref{app:gflownet}),
both adapted from external paradigms to phenotype-supervised GEM repair. The remaining baselines build directly upon our neural architecture and training pipeline: imitation models (\textit{SFT-Soft} and \textit{SFT-Hard}) are described alongside architectural details in Appendix~\ref{app:sft}, while reinforcement learning policies (Vanilla GRPO, MaxEnt, and QuotientPO) are detailed in Appendix~\ref{app:algorithm} and \ref{app:rl}.

\subsection{MILP}
\label{app:milp}

\textbf{Mathematical formulation and phenotype-guided separation.} We construct a phenotype-informed mixed-integer linear programming (MILP) oracle that couples discrete reaction edits with continuous intracellular fluxes. For an initial GEM $M_0$ with reaction set $\mathcal R_0$ and a candidate reaction universe $\mathcal U$, a binary decision vector $x \in \{0,1\}^{|\mathcal U|}$ specifies toggled reactions, defining the repair edit set $E(x) = (A(x), D(x))$ with additions $A(x) = \{r \in \mathcal U \setminus \mathcal R_0 : x_r = 1\}$ and deletions $D(x) = \{r \in \mathcal R_0 \cap \mathcal U : x_r = 1\}$, such that $\|x\|_1 = |E(x)|$. For each nutrient assay $s$, biomass flux is evaluated via flux balance analysis (FBA), formulated as a continuous linear programming (LP) problem:
\[
v^*_s(x) = \max_v \left\{c^\top v : Sv = 0, \; \ell_s(x) \le v \le u_s(x)\right\}.
\]
Given target phenotype labels $y_i \in \{0, 1\}$ and mapped assay sets $\mathcal A_i$, the predicted phenotype is $\widehat y_i(x) = \mathbf 1 [\max_{s \in \mathcal A_i} v^*_s(x) > 10^{-3}]$, yielding the total phenotype error $\mathrm{Err}(x) = \sum_i \mathbf 1 [\widehat y_i(x) \neq y_i]$. The oracle accesses test-time labels to lexicographically minimize phenotype errors before edit cardinality: $\min_x (\mathrm{Err}(x), \|x\|_1)$. We implement this hierarchy via binary violation indicators $z_i \in \{0, 1\}$:
\[
\min_{x,z} \quad \sum_{r \in \mathcal U} x_r + (|\mathcal U| + 1) \sum_i z_i,
\]
where $z_i = 0$ requires $\widehat y_i(x) = y_i$. Because $\|x\|_1 \le |\mathcal U|$, the weight $(|\mathcal U| + 1)$ guarantees that resolving any phenotype strictly dominates toggling candidate reactions. 
Rather than embedding continuous fluxes into a large big-$M$ formulation, the discrete master
optimizes $(x,z)$ and generates phenotype-guided cuts through FBA separation. At iteration $k$,
a violated phenotype induces
\[
D_H(x,x^{(k)}) + z_i \geq 1,
\qquad
D_H(x,x^{(k)}) =
\sum_{r\in H}\left|x_r-x_r^{(k)}\right|,
\]
where $H\subseteq U$ concentrates the separation on phenotype-relevant reactions identified from
LP dual information for failed positive phenotypes or active flux support for failed negative phenotypes.
These phenotype-guided restricted-support cuts concentrate the finite-budget search on functionally
relevant edit configurations, while every retained candidate is independently certified by complete
FBA evaluation under the original assay conditions.

\textbf{Solver implementation and verification pipeline.}
Each GEM receives a 600-second search budget using double-precision HiGHS
~\citep{huangfu2015parallelizingdualrevisedsimplex} with one thread per GEM, LP feasibility tolerance $1e{-8}$,
MIP integrality tolerance $1e{-9}$, and zero requested MIP gap.
The search allocates 120 seconds to greedy subset repair (up to 1,000 separation iterations)
and the remainder to lexicographic search (up to 4,000 iterations).
Each candidate is independently evaluated by the complete runtime FBA verifier, which determines
its phenotype error under the original assay conditions.
Across the 2,212 sealed test GEMs, we retain the FBA-verified candidate with the best
lexicographic objective $(\mathrm{Err}(x),\|x\|_1)$ encountered within the search budget,
with unimproved cases retaining $M_0$.
This phenotype-informed oracle therefore provides a strong label-access baseline for evaluating
amortized repair policies.

\subsection{GFlowNet}
\label{app:gflownet}

We adapt GFlowNet~\citep{bengio2021flow} to one-shot GEM repair as an
object-level diversity baseline.
It is to test whether distributing probability across high-reward
raw repair sets, without identifying equivalent repairs, is sufficient
to obtain the multi-solution benefits targeted by QuotientPO.

\textbf{Object-level GFlowNet objective.}
Let $\pi_0$ denote the frozen SFT policy used to initialize RL.
For a complete repair set $E$, we define the reward-tilted target
distribution
\begin{equation}
p^\star(E\mid M_0,G,y)
\propto
\pi_0(E\mid M_0,G)^\gamma
\exp\!\left(R_y(E)/\tau\right),
\label{eq:gfn_target}
\end{equation}
where $\tau$ controls reward temperature and $\gamma$ controls the strength
of the SFT base distribution.
The target therefore favors high-reward repairs while retaining a structured
prior over plausible edits.
Crucially, its support is defined over raw edit sets: two repairs
remain distinct GFlowNet objects whenever their edit sets differ, even if
they canonicalize to the same repair core.
This provides a direct object-space counterpart to the quotient-space
exploration of QuotientPO.

\textbf{One-shot adaptation.}
Unlike standard trajectory-based GFlowNets, our policy predicts the entire unordered repair set in a single forward pass via independent Bernoulli decoders, requiring no intermediate states, action orderings, or learned backward policies. Flow consistency under target~\eqref{eq:gfn_target} then requires policy probabilities to match target weights up to a partition constant. For a rollout group of $L$ candidates, defining the log flow residual $z_i = R_y(E_i)/\tau + \gamma\log\pi_0(E_i) - \log\pi_\theta(E_i)$ and eliminating the unknown instance offset yields the log-partition variance (LPV) loss,
\[
\mathcal L_{\mathrm{LPV}} = \frac{1}{L}\sum_{i=1}^{L}(z_i-\bar z)^2,
\]
which enforces reward-proportional probabilities without explicit partition-function estimation.

\textbf{SFT anchor and task adaptation.}
A uniform base distribution over complete edit sets concentrates probability on unrealistically large cardinalities (averaging half of all available actions), causing unanchored exploration ($\gamma=0$) to rapidly degenerate into thousands of spurious ADD edits (Table~\ref{tab:gfn_anchor}). The SFT anchor in Eq.~\eqref{eq:gfn_target} resolves this combinatorial mismatch by tempering the target distribution with the structured prior $\pi_0^\gamma$ while keeping policy initialization matched. Comparing completed candidates across $\gamma\in\{0.4,0.6,1.0\}$ under identical training recipes, the intermediate anchor $\gamma=0.6$ achieves the highest validation Success@8 ($9.38\%$) while maintaining sparse, plausible repairs ($10.98$ ADD edits); we therefore fix $\gamma=0.6$ before running replicated seeds without consulting test data. In contrast, an identity-uniform diagnostic that matched expected edit counts but removed reaction-specific SFT preferences collapsed toward nearly empty repairs ($0.065$ edits, $0.276\%$ validation success), confirming that GFlowNet requires biological reaction-level preferences rather than mere cardinality constraints.

\begin{table}[h]
\centering
\small
\setlength{\tabcolsep}{5pt}
\renewcommand{\arraystretch}{1.1}
\caption{
\textbf{SFT-anchor selection for the GFlowNet baseline.}
Anchored variants use the same final training recipe and complete all
864 updates.
The unanchored run is incomplete and is included only to show object-space
cardinality degeneration.
Success values are validation Success@8 (\%).
}
\label{tab:gfn_anchor}
\begin{tabular}{@{}lrrrr@{}}
\toprule
Anchor $\gamma$
& Best
& Best step
& Last
& Last ADD \\
\midrule
$0$ (no anchor)
& 7.91
& 15
& 5.80
& 7367.57 \\
$0.4$
& 8.46
& 745
& 7.64
& 23.05 \\
$\mathbf{0.6}$
& $\mathbf{9.38}$
& 864
& $\mathbf{9.38}$
& 10.98 \\
$1.0$
& 8.00
& 715
& 7.27
& 6.05 \\
\bottomrule
\end{tabular}
\end{table}

\textbf{Implementation details.}
To isolate object-level exploration under matched task optimization, we train the baseline using the joint objective $\mathcal L_{\mathrm{GFlowNet}} = \mathcal L_{\mathrm{GRPO}} + \lambda_{\mathrm{LPV}}\mathcal L_{\mathrm{LPV}}$, combining standard verifier-guided GRPO on group-normalized advantages with LPV distribution matching on raw rewards. GFlowNet uses no quotient-space mechanisms during training, with repair cores computed strictly post hoc for evaluation. The baseline shares the exact policy backbone, candidate action masks, FBA verifier, and frozen SFT initialization ($\pi_0$) with all other RL methods. We use rollout size $L=32$, anchor strength $\gamma=0.6$, reward temperature $\tau=1$, LPV weight $\lambda_{\mathrm{LPV}}=0.02$, and PPO clipping threshold $0.2$. Each update batches 32 organisms with 32 fresh on-policy samples for a fixed budget of 864 updates (matching three epochs) via AdamW (learning rate $2e{-6}$, weight decay $1e{-4}$, global gradient clipping at 1.0). Across seeds 42, 43, and 44, test metrics are reported from the final checkpoint matching Table~\ref{tab:main_results}.

\section{Experimental Details}
\label{app:experiment}

Our training follows a three-stage \emph{Pretrain--SFT--RL} pipeline. Since GEM repair is evaluated only through non-differentiable terminal feedback, direct RL from random initialization is ineffective in the large and sparse reaction space. We therefore first pretrain the model to learn GEM structure and reaction compatibility, then use phenotype-supervised SFT to learn a repair prior, and finally optimize complete edit sets with verifier-grounded RL. These stages correspond to \emph{structural literacy}, \emph{repair prior learning}, and \emph{joint consequence optimization}.

\subsection{Base model architecture.} 
\label{app:base_model}
All post-training methods use the same pretrained backbone (Fig.~\ref{fig:model_arch}). Each GEM is represented as a metabolite--reaction bipartite graph with 19-dimensional node features and 3-dimensional stoichiometric edge features, and encoded by a 12-layer edge-aware GINE~\citep{hu2020strategiespretraininggraphneural} with hidden dimension 768, residual connections, LayerNorm, and dropout. Reaction and metabolite nodes use type embeddings together with shared biochemical identity embeddings, while separate attention pooling over the two node types produces a global GEM representation. Candidate ADD reactions are encoded by a universal reaction encoder over 28,301 reactions, which combines a shared reaction-ID embedding, pooled substrate/product metabolite embeddings, and reaction-level structural features into the same 768-dimensional latent space. 
Organism-level proteomic context is extracted by a frozen ESM2-t12-35M~\footnote{\url{https://huggingface.co/facebook/esm2_t12_35M_UR50D}} 
encoder, whose 480-dimensional protein embeddings are deterministically
reduced to 256 dimensions, attention-pooled over at most 512 proteins, and
mapped to 768 dimensions by a learnable gated MLP before fusion with the
global GEM representation.
Based on this shared representation, separate ADD and REMOVE heads score absent and existing reactions, respectively: the ADD head combines graph--reaction matching, pairwise interactions, and local metabolite compatibility, whereas the REMOVE head scores interactions between the global GEM state and reaction-node embeddings. During post-training, the ESM-2 backbone remains frozen, while the GINE encoder, universal reaction encoder, proteome conditioner, and ADD/REMOVE scoring heads are jointly fine-tuned.

\begin{figure*}[h]
    \centering
    \includegraphics[width=\textwidth]{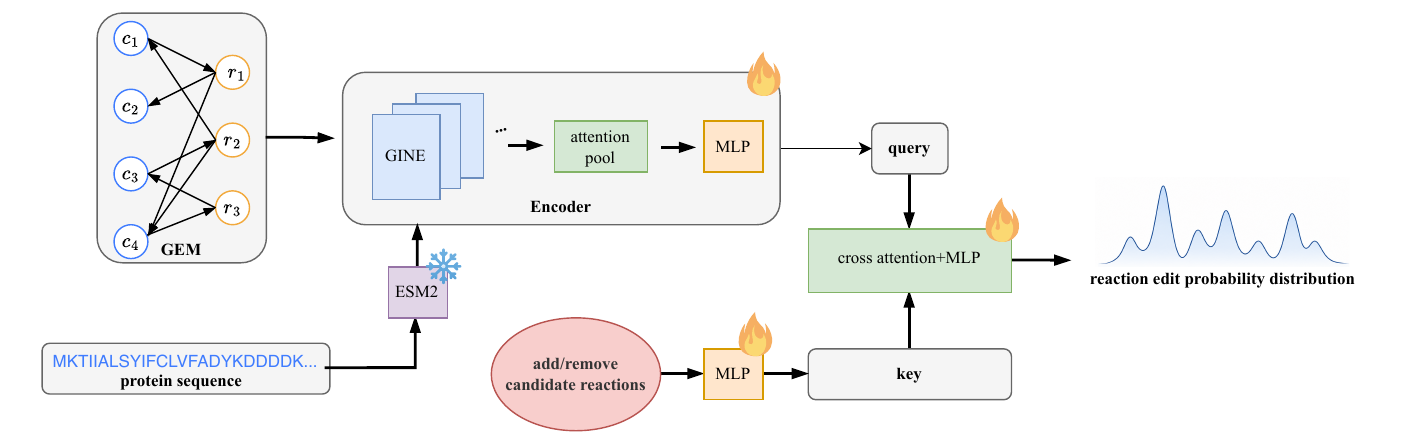}
    \caption{
    \textbf{Base model architecture.}
    }
    \label{fig:model_arch}
\end{figure*}

\subsection{Pretraining Details}
\label{app:pretraining_details}

\textbf{Dataset construction.}
Of the 3,978 candidate GEMs, 143 were excluded by the pretraining leakage-control criteria, leaving 3,806 training GEMs and 29 held-out validation GEMs.
We pretrain the GEM backbone through single-reaction denoising on these 3,806 clean GEMs.
Each training example contains exactly one corruption: either a valid reaction is removed and recovered by the ADD head, or an absent reaction is inserted and identified by the REMOVE head.
The two tasks are sampled with equal probability.
To prevent synthetic shortcuts, injected reactions must be absent from the original GEM, avoid duplicate stoichiometric signatures, and connect metabolites already represented in the graph; artifact-prone metadata features are masked during pretraining.
Each epoch contains 125,000 examples, comprising 100,000 dynamically generated corruptions and 25,000 fixed atomic examples, for a total of five million training exposures over 40 epochs.
This objective trains reaction-level compatibility without imposing a synthetic prior over downstream repair-set cardinality or co-occurrence.

\textbf{Model training and evaluation.}
Each pre-training example contains exactly one corrupted reaction, corresponding to either an ADD or REMOVE target.
For action type $a\in\{\mathrm{ADD},\mathrm{REMOVE}\}$, let $\mathcal C_i^a$ denote the legal candidate set for example $i$ and $y_i^a$ its target reaction.
ADD uses up to 4,096 sampled absent reactions, always including the positive target and frequency-based hard negatives, whereas REMOVE scores all reactions currently present in the corrupted GEM.
Given logits $s_{ir}^a$, the reaction-identification loss is
\(
\mathcal L_{\mathrm{CE}}^a
=
-\frac{1}{|\mathcal B_a|}
\sum_{i\in\mathcal B_a}
\log
\frac{\exp(s_{iy_i^a}^a)}
{\sum_{r\in\mathcal C_i^a}\exp(s_{ir}^a)}.
\)
We further emphasize difficult confounders using the eight highest-scoring incorrect candidates,
$\mathcal H_i^a=\operatorname{Top8}\{s_{ir}^a:r\in\mathcal C_i^a\setminus\{y_i^a\}\}$,
with pairwise ranking loss
\(
\mathcal L_{\mathrm{rank}}^a
=
\frac{1}{|\mathcal B_a|}
\sum_{i\in\mathcal B_a}
\frac{1}{|\mathcal H_i^a|}
\sum_{r\in\mathcal H_i^a}
\operatorname{softplus}
\left(s_{ir}^a-s_{iy_i^a}^a\right).
\)
The complete objective is
\[
\mathcal L_{\mathrm{pretrain}}
=
\mathcal L_{\mathrm{CE}}^{\mathrm{ADD}}
+
\mathcal L_{\mathrm{CE}}^{\mathrm{REMOVE}}
+
0.2
\left(
\mathcal L_{\mathrm{rank}}^{\mathrm{ADD}}
+
\mathcal L_{\mathrm{rank}}^{\mathrm{REMOVE}}
\right).
\]
Models are trained with AdamW using a learning rate of $1e{-4}$, weight decay $1e{-5}$, global batch size 384 across eight GPUs\footnote{All experiments across pretraining, SFT, RL, and evaluation in this work are conducted on NVIDIA RTX PRO 6000 Blackwell GPUs.}, BF16 mixed precision, and gradient clipping at 1.0.
Validation uses 580 fixed corruptions generated from 29 held-out GEMs, including multi-edit corruptions, and scores the full reaction vocabulary. Checkpoint selection uses the macro-average of ADD and REMOVE recall on the operation-specific single-target subsets.

\begin{figure}[h]
    \centering
    \includegraphics[width=0.6\linewidth]{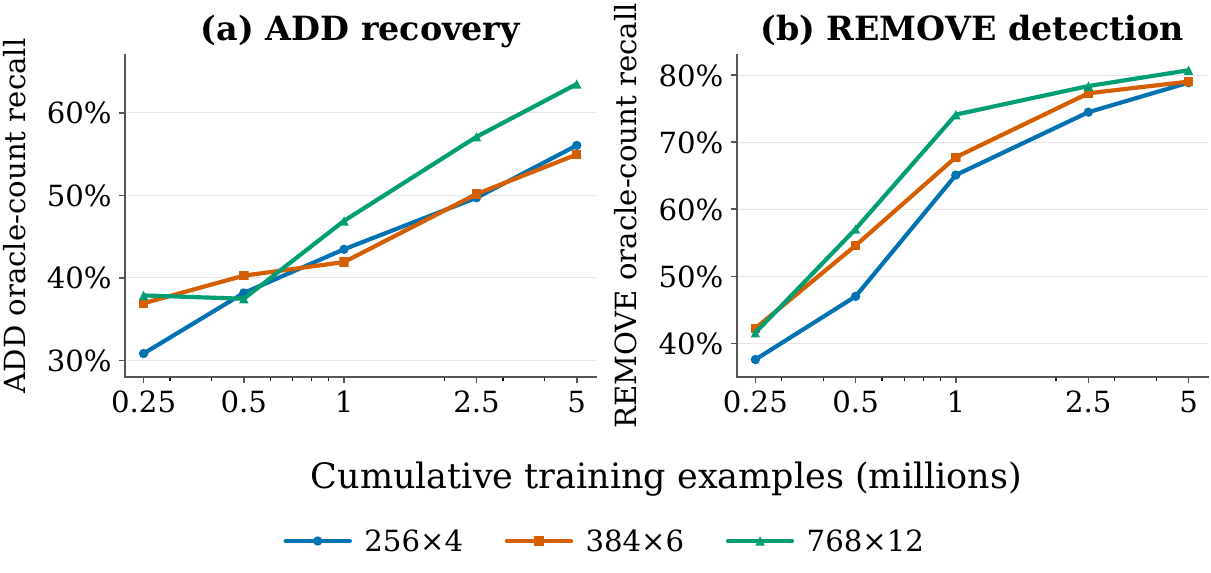
    }
    \caption{
    \textbf{Scaling behavior of GEM pretraining.}
    }
    \label{fig:pretrain_scaling}
\end{figure}

\textbf{Scaling of model size and data size.}
We study scaling along both model capacity and cumulative pretraining exposure.
Specifically, we compare 256(hidden size)$\times$4(layer), 384$\times$6, and 768$\times$12 GINE backbones containing 11.16M, 18.98M, and 58.45M parameters, respectively, at 0.25M, 0.5M, 1M, 2.5M, and 5M training examples.
Scaling is evaluated using oracle-count edit recall on 580 fixed validation examples generated from 29 held-out GEMs. Unlike the single-reaction corruptions used for training, validation examples may contain multiple target edits. For each example and operation, reactions are ranked by model score and the top $k_i$ predictions are retained, where $k_i$ equals the number of ground-truth targets for that operation. Recall is then micro-averaged over target reactions separately for ADD and REMOVE. This metric isolates reaction-ranking quality from threshold calibration and cardinality prediction. As shown in Figure~\ref{fig:pretrain_scaling}, both additional data and model capacity improve recovery, with the largest gains appearing for ADD. At five million examples, the $768\times12$ model reaches 63.49\% ADD recall and 80.73\% REMOVE recall. We therefore use the $768\times12$ checkpoint as the shared pretrained backbone for all downstream experiments.

\subsection{Supervised Fine-tuning Details}
\label{app:sft}
Supervised fine-tuning bridges single-reaction pretraining and one-shot RL by establishing a directed biochemical repair prior. Because directly predicting complete repairs conflates reaction prioritization with combinatorial set sizing under underdetermined phenotypes, we divide SFT into two sequential stages. SFT-A learns local reaction rankings via multi-positive partial-label supervision, while SFT-B calibrates state-conditioned thresholds to decode complete edit sets in one shot, stably initializing downstream policy optimization.

\subsubsection{SFT-A: Partial-Label Reaction-Edit Ranking}
\label{app:sfta}

\textbf{Algorithm design.}
SFT-A bridges pretrained metabolic representations and downstream complete-set decoding by learning which reaction edits are promising from partially edited states. Because phenotype supervision is partially identifying and admits multiple valid repair paths, treating a single teacher edit as unique penalizes alternative valid choices, while treating unlisted reactions as negatives overstates supervisory completeness. We therefore formulate SFT-A as multi-positive partial-label ranking. Evaluated by FBA verification on partially edited states, candidate edits are categorized into strictly improving actions (immediate phenotype gain, forming set $P^+$), required neutral actions (zero immediate gain but necessary for multi-step completions, forming set $P^0$), and regressive actions (degrading agreement, masked from gradients).
Let $C^{-}$ denote the regressive actions and define the effective candidate set
$\widetilde C = C \setminus C^{-}$; all losses below are evaluated only over
$\widetilde C$.
By construction, each SFT-A state contains at least one certified positive
($P\neq\varnothing$). When $P^+=\varnothing$, the strict-tier term is omitted
($L_{\mathrm{strict}}=0$), and when $H=\varnothing$, $L_{\mathrm{tail}}=0$.
Improving and neutral edits jointly form the certified positive set $P = P^+ \cup P^0$. For candidate logits $z$ over legal candidates $\mathcal C$, the complete training objective combines partial-label ranking, strict-tier enhancement, hard-tail margin separation against the top unlabeled candidates $\mathcal H \subseteq \mathcal C \setminus P$ ($|\mathcal H| \le 32$), and a logit regularizer:
\begin{align*}
\mathcal L_{\mathrm{SFT\text{-}A}} &= \mathcal L_{\mathrm{union}} + 0.15\,\mathcal L_{\mathrm{strict}} + 0.15\,\mathcal L_{\mathrm{tail}} + 10^{-5}\mathcal L_{\mathrm{logit}}, \label{eq:sfta_total} \\
\text{where} \quad 
\mathcal L_{\mathrm{union}} &= \operatorname{LSE}_{b \in  \widetilde C} z_b - \operatorname{LSE}_{a \in P} z_a, \nonumber \\
\mathcal L_{\mathrm{strict}} &= \operatorname{LSE}_{b \in \widetilde C} z_b - \operatorname{LSE}_{a \in P^+} z_a, \nonumber \\
\mathcal L_{\mathrm{tail}} &= \frac{1}{|\mathcal H|} \sum_{b \in \mathcal H} \operatorname{softplus}\bigl(z_b - \max_{a \in P} z_a + 0.5\bigr). \nonumber
\end{align*}
Here, $\operatorname{LSE}_{i \in \mathcal S} z_i = \log \sum_{i \in \mathcal S} \exp(z_i)$ denotes the LogSumExp operator, and $\mathcal L_{\mathrm{logit}} = \frac{1}{2|\widetilde C|} \sum_{b \in \widetilde C} z_b^2$ regularizes finite logits to prevent probability saturation. This multi-positive objective equips downstream policy learning with a directional biochemical prior toward functionally viable reactions while strictly preserving alternative valid repair pathways.

\textbf{Engineering implementation.}
The reaction-edit actor initializes from our pretraining checkpoint using a 12-layer, 768-dimensional GINE encoder over bipartite reaction--metabolite graphs, containing 59.3M trainable parameters out of 92.7M total. 
Organism context is supplied by frozen ESM2 representations (Fig.\ref{fig:model_arch}):
up to 512 uniformly sampled proteins per proteome are represented by fixed
256-dimensional truncations of the normalized 480-dimensional ESM2 embeddings,
aggregated by trainable attention, and mapped into the 768-dimensional graph
embedding via a zero-initialized gated MLP residual, preserving pretrained
reaction logits at initialization without learning explicit protein-to-reaction
alignments.
For scalable ADD training, each local batch constructs a shared candidate panel of nominal size 2,048 that retains all supervised batch positives and draws up to 75\% of remaining slots from cached topology-based hard candidates (256 per graph) alongside vocabulary negatives; validation scores the complete ADD vocabulary without candidate sampling, while REMOVE scoring evaluates only present reaction nodes. Training uses 10,837 partial states weighted inversely by each organism's state count ($w_s = 1/n_x$) to balance supervisory influence. Optimization spans 8 GPUs with local batch size 4 (effective global batch size 32) for up to five epochs using AdamW (learning rate $3\times 10^{-5}$, weight decay $10^{-4}$, gradient clipping at 1.0) under BF16 autocast with FP32 loss reductions. We evaluate checkpoints on 410 development states and select checkpoints lexicographically by operation-macro, model-equal hit rate (Hit@1). SFT-A strictly outputs reaction candidate rankings, leaving decision thresholds, edit cardinality, and complete repair decoding to the downstream SFT-Soft stage.

\textbf{Key results.}
Table~\ref{tab:sfta-direct} summarizes reaction-retrieval performance. The full Pretrained + ESM2 configuration achieves the strongest accuracy and likelihood fit, lifting operation-macro Hit@1 from $0.016$ on pretrained initialization to $0.207$ while attaining the lowest development loss ($3.248$). Removing pretraining degrades Hit@1 to $0.114$ and inflates loss to $4.524$, while removing ESM2 drops Hit@1 to $0.143$ with loss $3.922$. This advantage is concentrated in ADD retrieval, reaching $0.264$ Hit@1 against $0.068$ without pretraining and $0.125$ without ESM2. Both metabolic pretraining and proteome context thus prove essential for prioritizing viable reactions and minimizing partial-label ranking loss.

\begin{table}[t]
\centering
\small
\setlength{\tabcolsep}{5pt}
\caption{
\textbf{Direct SFT-A development results.}
Hit rates are operation-macro, model-equal retrieval scores on certified
partial labels (seed 42).
}
\label{tab:sfta-direct}
\begin{tabular}{@{}lccc@{}}
\toprule
Configuration & Hit@1$\uparrow$ & dev-select loss$\downarrow$  \\
\midrule
Pretrained + ESM2
& \textbf{0.207} & \textbf{3.248}  \\
Random + ESM2
& 0.114 & 4.524  \\
Pretrained, no ESM2
& 0.143 & 3.922  \\
\bottomrule
\end{tabular}
\end{table}

\subsubsection{SFT-B: Complete Set-level Reaction-Edit Supervision}
\label{app:sftb}

\textbf{Algorithm design.}
SFT-B converts the single action ranking actor learned in SFT-A into a one shot predictor of the complete remaining repair set. Suppressing state and operation indices, let $\mathcal C$ be the legal candidate set, $Y\subseteq\mathcal C$ the target edits from one certified completion, $N=\mathcal C\setminus Y$ the non target candidates. Unlike SFT-A, SFT-B treats each teacher completion as a complete binary target. Alternative valid completions remain separate training examples rather than being merged. Both variants share a class balanced binary cross entropy loss and a hard negative ranking loss. Let $v$ denote the candidate score vector, and let $H_h\subseteq N$ be the top $h$ negatives. With $\operatorname{softplus}(u)=\log(1+e^u)$,
\begin{align*}
&\mathcal L_{\mathrm{BCE}}(v)
=
\frac{1}{2}
\left[
\frac{1}{|Y|}\sum_{a\in Y}\operatorname{softplus}(-v_a)
+
\frac{1}{|N|}\sum_{b\in N}\operatorname{softplus}(v_b)
\right],\\
&\mathcal L_{\mathrm{rank}}(v)
=
\frac{1}{|Y||H_h|}
\sum_{a\in Y}\sum_{b\in H_h}
\operatorname{softplus}(v_b-v_a).
\end{align*}
The balanced BCE prevents the large negative vocabulary from dominating the objective, while the ranking loss separates target edits from competitive negatives. When one class is empty, BCE uses only the available class.
Likewise, boundary losses are evaluated only when both their positive and
negative aggregates are nonempty; otherwise the corresponding boundary term is
set to zero.
Ranking is applied only when both classes are present. Hard-SFT evaluates these losses on raw logits $z$, while Soft-SFT evaluates them on threshold adjusted scores $u$.

\emph{Hard-SFT} factorizes set prediction into reaction ranking and discrete cardinality prediction. The actor outputs reaction logits $z$ and a categorical count distribution $q(k)$. Decoding predicts $\widehat k = \arg\max_k q(k)$ and returns $\widehat Y = \operatorname{TopK}(z, \widehat k)$. The count head supports up to $K_{\max}$ edits ($13$ for ADD, $1$ for REMOVE), trained with cross-entropy $\mathcal L_{\mathrm{count}} = -\log q(m)$ and normalized expected error $\mathcal L_{\mathrm{dist}} = |\sum_k kq(k) - m| / K_{\max}$. Hard-SFT also enforces separation around the decision boundary at temperature $t=0.25$ via $\mathcal L_{\mathrm{bdry}} = t\operatorname{softplus}((b^- - b^+ + 1)/t)$, where $b^+ = -t\log(|Y|^{-1}\sum_{a\in Y}e^{-z_a/t})$ and $b^- = t\log(|H_{32}|^{-1}\sum_{b\in H_{32}}e^{z_b/t})$ smoothly aggregate target and top-32 negative logits. The complete objective is
$$
\mathcal L_{\mathrm{Hard}}
=
\mathcal L_{\mathrm{BCE}}(z)
+
0.40\mathcal L_{\mathrm{rank}}(z)
+
0.20\mathcal L_{\mathrm{bdry}}
+
0.25\mathcal L_{\mathrm{count}}
+
0.50\mathcal L_{\mathrm{dist}}.
$$
Hard-SFT therefore learns reaction identity and repair size through separate outputs. This gives explicit cardinality control, but count errors directly cause under or over editing and the finite support imposes a fixed upper bound on repair size.

\emph{Soft-SFT} replaces the discrete count head with a state conditioned inclusion threshold. The actor outputs reaction logits $z_a$, while an auxiliary head predicts an operation specific threshold $\tau$. The adjusted score is $u_a=z_a-\tau$, giving inclusion probability $p_a=\sigma(u_a)$ and deterministic prediction $\widehat Y=\{a\in\mathcal C:u_a\ge0\}$. Repair size therefore emerges from the candidate scores and the learned threshold rather than from a separate count prediction. Soft-SFT applies the shared BCE and ranking losses to $u$, with $H_8$ used for ranking. Its expected cardinality is $\bar k=\sum_{a\in\mathcal C}p_a$. Using smooth $L_1$ loss $\rho$, cardinality is supervised by $\mathcal L_{\mathrm{logcount}}=\rho(\log(1+\bar k)-\log(1+m))$ and $\mathcal L_{\mathrm{relative}}=\rho((\bar k-m)/(m+1))$. The former stabilizes supervision across different set sizes, while the latter remains sensitive to small targets. Soft-SFT also directly trains the zero inclusion boundary using
\(
\mathcal L_{\mathrm{bdry}}
=
\frac{1}{2}
[
|Y|^{-1}
\sum_{a\in Y}
\operatorname{softplus}(0.5-u_a)
+
|H_{64}|^{-1}
\sum_{b\in H_{64}}
\operatorname{softplus}(u_b+0.5)
].
\)
The complete objective is
$$
\mathcal L_{\mathrm{Soft}}
=
\mathcal L_{\mathrm{BCE}}(u)
+
0.20\mathcal L_{\mathrm{rank}}(u)
+
0.15\mathcal L_{\mathrm{bdry}}
+
0.50\mathcal L_{\mathrm{logcount}}
+
0.02\mathcal L_{\mathrm{relative}}.
$$
By coupling reaction scoring and repair sizing within continuous scores $u$, Soft-SFT removes the discrete cardinality limit and directly produces the Bernoulli inclusion probabilities for downstream one-shot RL.

\textbf{Engineering implementation.}
Both variants initialize their shared 12-layer, 768-dimensional GNN actor, ADD/REMOVE scoring heads, and frozen ESM2 conditioning from the identical selected SFT-A checkpoint across seeds 42--44, randomly initializing only the newly added count or threshold module. Training views are matched to their decoding assumptions across 1,078 shared GEMs: Hard-SFT trains on 5,874 early-subset states with ADD-loss reweighting, whereas Soft-SFT trains on 40,151 all-subset states. Hard-SFT restricts training to early-subset states ($\vert{}S\vert{}\le 2$) to prevent its categorical count head from developing a small-cardinality prior collapse induced by late-stage combinatorial subsets, aligning the training cardinality distribution with initial-state deployment. Conversely, Soft-SFT leverages all-subset states to train its continuous threshold across diverse residual repair scales. Operation-specific Soft thresholds are initialized by maximizing edit F1 on initial training examples and recalibrated per-epoch on a fixed training subset without consulting evaluation splits. Both methods score the full ADD vocabulary under per-GEM legality masks, restrict REMOVE scoring to present reactions, and evaluate or deploy directly from original unedited GEMs. Optimization runs on eight GPUs with effective batch size 128 using AdamW (weight decay $1e{-4}$, gradient clipping at 1.0) with layer-wise learning rates of $5e{-6}$ for the backbone, $2e{-5}$ for actor heads, and $1e{-4}$ for the count or threshold module. 

\textbf{Key results.}
For downstream evaluation, checkpoints across seeds are selected on the development split by joint edit F1, with final held-out test evaluations presented in Table~\ref{tab:main_results}. Crucially, a matched-batch diagnostic before any RL update confirms that Soft-SFT provides a far more informative initialization for policy optimization, yielding nonzero within-group reward variance for $51.6\%\pm4.9\%$ of sampled organism groups compared with $32.0\%\pm5.1\%$ for Hard-SFT. This verifies that the adaptive-threshold policy exposes a richer verifier signal to kick-start policy exploration.

\begin{table}[t]
\centering
\small
\setlength{\tabcolsep}{4.5pt}
\renewcommand{\arraystretch}{1.08}
\caption{
\textbf{Shared RL configuration.}
}
\label{tab:rl-settings}
\begin{tabularx}{\linewidth}{@{}l>{\raggedright\arraybackslash}X@{}}
\toprule
Parameter & Setting \\
\midrule
Actor initialization & Soft-SFT \\
Total / trainable parameters & 92.85M / 59.46M \\
Training / validation GEMs & 9,196 / 1,087 \\
RL seeds & 42, 43, 44 \\
Training devices & 8 GPU ranks \\
GEMs per update & 32 \\
Rollouts per GEM & 32 \\
Training budget & 3 nominal epochs, 864 updates \\
Actor updates per rollout batch & 1 \\
Optimizer & AdamW \\
Learning rate & $2e{-6}$ \\
Weight decay & $1e{-4}$ \\
Gradient norm clipping & 1.0 \\
GRPO clipping & $\epsilon=0.2$ \\
Advantage stabilizer & $1e{-6}$ \\
Bernoulli sampling temperature & 1.0 \\
FBA growth threshold & $1e{-3}$ \\
Validation cadence & Every 5 updates and at update 864 \\
\bottomrule
\end{tabularx}
\end{table}

\subsection{Reinforcement Learning Details}
\label{app:rl}
\textbf{Shared RL training setup.}
Alongside the GFlowNet (Appendix~\ref{app:gflownet}), Vanilla GRPO, MaxEnt, and QuotientPO (Appendix~\ref{app:algorithm}) share a unified one-shot policy optimization framework with detailed configurations summarized in Table~\ref{tab:rl-settings}. Initialized from the selected Soft-SFT checkpoint, the policy samples a complete repair set in a single forward pass via independent Bernoulli decisions over legal edits using fine-tuned inclusion thresholds, dispensing with value critics, reference KL penalties, or discrete cardinality heads. Vanilla GRPO optimizes this verifier-guided objective directly, serving as the common backbone for exploration ablations.

\textbf{Action-entropy baseline.}
MaxEnt augments GRPO with the exact Shannon entropy of the factorized Bernoulli policy, summed analytically in nats over all legal ADD and REMOVE decisions without vocabulary normalization, Monte Carlo estimation, or additional verifier overhead. Because summing unnormalized entropy across a large reaction vocabulary risks simultaneously elevating many individually unlikely actions and causing severe repair cardinality explosion, its coefficient requires explicit calibration. In development experiments, coefficients of $1e{-3}$ and $5e^{-4}$ triggered severe ADD cardinality inflation, whereas $1e{-4}$ remained stable but yielded insufficient exploration. We therefore fixed the intermediate coefficient of $2e{-4}$ across all seeds based strictly on validation behavior, ensuring that the hyperparameter choice was locked before test evaluation without consulting held-out splits.

\textbf{Runtime overhead.} We benchmark one complete optimizer update on a single node with 8$\times$ NVIDIA RTX PRO 6000 Blackwell GPUs, using the same RL training settings for all methods. All runs use the same HiGHS backend, 64 single-threaded FBA workers, FP32 execution, and no validation or external logging. QuotientPO requires 23.22 s per update versus 21.32 s for Vanilla GRPO, corresponding to an 8.9\% overhead. Most additional cost comes from verifier-side processing, including canonicalization, QPI and kernel scoring.

\begin{table}[h]
\centering
\small
\caption{\textbf{Per-update runtime} under identical hardware and rollout settings.}
\label{tab:runtime}
\begin{tabular}{lccc}
\toprule
Method & Update (s)  & FBA drain (s) & Overhead \\
\midrule
Vanilla GRPO & 21.323 & 2.229 & -- \\
QuotientPO   & 23.218  & 4.186 & +8.9\% \\
\bottomrule
\end{tabular}
\end{table}

\subsection{Evaluation Details}
\label{app:eval}

All methods are evaluated on 2,212 held-out initial GEMs. For each GEM, we generate 32 stochastic repair sets in FP32 with dropout disabled under fixed per-organism seeds, alongside a separate greedy prediction. Canonical repair cores are uniformly recomputed for all baselines using the deterministic reduction described in Appendix~\ref{app:algorithm}. Success@$K$ is computed over nested prefixes of the 32 rollouts, greedy success is evaluated separately, and diversity metrics are calculated over these frozen outputs. All metrics are computed per GEM and macro-averaged, reporting the mean and sample standard deviation across three training seeds.

\section{Additional Experimental Results}

\subsection{Robustness across reconstruction templates.}
QuotientPO consistently improves Success@32 over Vanilla GRPO across all four CarveMe templates, with gains of +1.96, +2.78, +1.71, and +2.32 percentage points on default, Gram-positive, Gram-negative, and archaeal models, respectively (Table~\ref{tab:template_breakdown}). Successful-core coverage also improves in every template, indicating that the overall gain is not driven by any single reconstruction setting, including the archaeal subset.

\begin{table*}[t]
\centering
\small
\setlength{\tabcolsep}{1.8pt}
\renewcommand{\arraystretch}{1.15}
\caption{
\textbf{Performance across GEM reconstruction templates.}
Results are mean $\pm$ standard deviation across three seeds.
}
\label{tab:template_breakdown}
\begin{tabular}{llcccccc}
\toprule
Template & Method
& Success@1
& Success@8
& Success@32
& Greedy
& Improved
& Success Cores@32 \\
\midrule

\multicolumn{8}{l}{\textit{Default} ($N=697$)} \\
& Vanilla GRPO
& 7.37$\pm$0.58
& 12.72$\pm$0.79
& 16.79$\pm$1.08
& \textbf{7.84$\pm$0.36}
& 28.83$\pm$1.08
& 0.516$\pm$0.003 \\
& MaxEnt
& 5.69$\pm$0.96
& 11.81$\pm$1.58
& 15.93$\pm$1.28
& 7.32$\pm$1.52
& 28.04$\pm$2.33
& 0.462$\pm$0.088 \\
& GFlowNet
& 1.77$\pm$0.22
& 6.07$\pm$0.54
& 10.38$\pm$0.33
& 2.54$\pm$0.30
& 11.48$\pm$0.70
& 0.265$\pm$0.017 \\
& \textbf{QuotientPO}
& \textbf{7.89$\pm$0.50}
& \textbf{14.25$\pm$0.22}
& \textbf{18.75$\pm$0.82}
& 7.80$\pm$0.33
& \textbf{30.01$\pm$1.19}
& \textbf{0.582$\pm$0.006} \\
\midrule

\multicolumn{8}{l}{\textit{Gram-positive} ($N=491$)} \\
& Vanilla GRPO
& \textbf{7.13$\pm$0.20}
& 13.04$\pm$0.35
& 16.70$\pm$2.12
& 8.15$\pm$0.41
& 30.14$\pm$0.72
& 0.573$\pm$0.019 \\
& MaxEnt
& 5.16$\pm$1.73
& 10.93$\pm$1.01
& 16.02$\pm$0.85
& 7.13$\pm$1.43
& 30.17$\pm$1.06
& 0.483$\pm$0.087 \\
& GFlowNet
& 2.58$\pm$0.31
& 7.06$\pm$0.31
& 10.93$\pm$0.12
& 2.72$\pm$0.82
& 14.33$\pm$0.64
& 0.297$\pm$0.017 \\
& QuotientPO
& 6.93$\pm$0.20
& \textbf{14.39$\pm$0.62}
& \textbf{19.48$\pm$0.77}
& \textbf{9.03$\pm$0.77}
& \textbf{31.44$\pm$1.08}
& \textbf{0.604$\pm$0.003} \\
\midrule

\multicolumn{8}{l}{\textit{Gram-negative} ($N=507$)} \\
& Vanilla GRPO
& 7.43$\pm$0.89
& 14.79$\pm$0.99
& 19.26$\pm$0.99
& 8.02$\pm$0.41
& 30.21$\pm$0.49
& 0.507$\pm$0.027 \\
& MaxEnt
& 6.11$\pm$0.59
& 13.48$\pm$1.40
& 17.82$\pm$2.43
& 6.77$\pm$0.97
& 29.50$\pm$1.88
& 0.550$\pm$0.027 \\
& GFlowNet
& 3.81$\pm$0.30
& 8.22$\pm$0.11
& 12.10$\pm$0.30
& 2.89$\pm$0.11
& 13.89$\pm$0.46
& 0.377$\pm$0.008 \\
& QuotientPO
& \textbf{8.15$\pm$0.30}
& 1\textbf{5.98$\pm$0.20}
& \textbf{20.97$\pm$1.01}
& \textbf{8.55$\pm$0.23}
& \textbf{32.08$\pm$0.56}
& \textbf{0.617$\pm$0.032} \\
\midrule

\multicolumn{8}{l}{\textit{Archaeal} ($N=517$)} \\
& Vanilla GRPO
& 7.80$\pm$0.30
& 14.96$\pm$2.34
& 19.34$\pm$2.42
& 7.93$\pm$0.70
& 32.11$\pm$0.23
& 0.409$\pm$0.057 \\
& MaxEnt
& 6.96$\pm$0.89
& 14.77$\pm$1.24
& 19.08$\pm$1.45
& 7.93$\pm$1.01
& 30.17$\pm$2.76
& 0.427$\pm$0.022 \\
& GFlowNet
& 5.22$\pm$0.19
& 12.77$\pm$0.51
& 19.79$\pm$0.40
& 4.84$\pm$0.39
& 21.08$\pm$0.15
& 0.433$\pm$0.017 \\
& QuotientPO
& \textbf{8.00$\pm$0.30}
& \textbf{16.89$\pm$0.81}
& \textbf{21.66$\pm$0.39}
& \textbf{8.77$\pm$0.40}
& \textbf{33.38$\pm$0.36}
& \textbf{0.476$\pm$0.030} \\
\bottomrule
\end{tabular}
\end{table*}

\subsection{Solution Space Geometry: SFT versus Policy Optimization}
\label{app:solution_space}



To analyze exploration geometry, each raw repair is represented by its signed reaction edits under pairwise Jaccard distance $d(A, B) = 1 - |A \cap B| / |A \cup B|$ ($d(\varnothing, \varnothing) = 0$), and 32 stochastic rollouts from SFT-Soft and QuotientPO are jointly embedded with the unedited GEM and a certified MILP teacher reference via 2D metric multidimensional scaling (MDS)\footnote{\url{https://github.com/scikit-learn/scikit-learn}}. Identical edits share averaged coordinates on aligned axes, with all reported teacher distances evaluated in the ambient Jaccard space. Across six representative organisms (Figure~\ref{fig:sft_rl_cases}), SFT-Soft rollouts yield few successful completions and cluster tightly around the certified teacher reference. In contrast, QuotientPO substantially increases repair success and disperses into multiple disconnected clusters, moving beyond the supervised demonstration to recover valid repairs across separated regions of the edit space.

\subsection{Extended Solution-Space Visualizations Across RL Baselines}
Figure~\ref{fig:rl_cases} compares exploration behavior across Vanilla GRPO, MaxEnt, GFlowNet, and QuotientPO projected into both raw edit and quotient mechanism spaces over six representative GEMs. Standard RL baselines and action-entropy exploration suffer from severe mode collapse, frequently rediscovering identical redundant mechanisms despite surface-level action variations. In contrast, QuotientPO consistently disperses across the quotient landscape, discovering substantially more distinct, successful canonical repair cores across diverse biological targets.

\subsection{Success-Controlled Mechanism Coverage}
\label{app:success_controlled}

Successful-core yield conflates solve rates with conditional mechanism diversity. To isolate the latter, we measure the expected distinct canonical cores under an equalized budget of $r$ successful repairs, restricted to paired cohorts where both methods achieve at least $r$ successes among 32 rollouts. 
For $M$ successful repairs with core $a$ occurring $s_a$ times, the expected distinct core count across $r$ draws without replacement is
\[
D_r = \sum_a \left[ 1 - \frac{\binom{M-s_a}{r}}{\binom{M}{r}} \right].
\]
Table~\ref{tab:success_controlled} shows that QuotientPO consistently outperforms Vanilla GRPO across all budgets, with the margin widening from $+0.035$ at $r=2$ to $+0.246$ at $r=16$. This confirms that the diversity gain does not merely reflect a higher solve rate. We truncate at $r=16$ because requiring all 32 rollouts to succeed leaves only 10/8/4 paired GEMs across seeds 42/43/44.

\begin{table}[t]
\centering
\footnotesize
\setlength{\tabcolsep}{4.0pt}
\renewcommand{\arraystretch}{1.15}
\caption{\textbf{Success-controlled mechanism coverage.} Expected distinct canonical cores ($D_r$) evaluated across $r$ successful draws on paired cohorts.}
\label{tab:success_controlled}
\begin{tabular}{@{}ccccc@{}}
\toprule
$r$
& \begin{tabular}[c]{@{}c@{}}Paired GEMs\\(42/43/44)\end{tabular}
& Vanilla GRPO
& QuotientPO \\
\midrule
2  & 295/269/323 & $1.425 \pm 0.019$ & $\mathbf{1.461 \pm 0.020}$   \\
4  & 247/228/258 & $1.960 \pm 0.037$ & $\mathbf{2.050 \pm 0.056}$  \\
8  & 209/186/206 & $2.659 \pm 0.076$ & $\mathbf{2.782 \pm 0.114}$ \\
16 & 147/123/130 & $3.256 \pm 0.033$ & $\mathbf{3.502 \pm 0.203}$  \\
\bottomrule
\end{tabular}
\end{table}

\subsection{Finite-Sample Credit Resolution Analysis}
\label{app:credit_resolution}

\textbf{Metric definitions.}
We compute the training diagnostics from each rollout group of $N=32$ repairs. Let $n_a$ denote the multiplicity of canonical core $a$ in the group. Under exact R\'enyi credit, two distinct cores receive identical credit if and only if they have the same multiplicity, so we identify ties directly by $n_a=n_b$ rather than by floating-point comparison. Weighting each distinct-core pair by the number of corresponding sampled pairs, $w_{ab}=n_an_b$, the exact-credit tie rate is
\[
\mathrm{TieRate}
=
\frac{\sum_{a<b} w_{ab}\,\mathbb I[n_a=n_b]}
{\sum_{a<b} w_{ab}}.
\]
Importantly, $\mathrm{TieRate}$ is an \emph{exact-count} diagnostic for every
method: it measures how often distinct cores receive identical multiplicity-based
credit, $n_a=n_b$, within rollout groups generated by that policy. Thus, the
Kernel R\'enyi-2 row does not measure ties under the kernel credit itself.
Instead, it quantifies how often exact counting would still fail to distinguish
cores sampled from the kernel-trained policy. The complementary
$\mathrm{ResolvedRate}$ measures the fraction of these exact-count ties that
are separated by the graded kernel credit.

To measure how much structure is hidden within these ties, we recompute a kernel credit from the same full rollout group. With signed-Jaccard similarity $K_{ab}=\exp[-(1-J(a,b))/0.25]$, the local kernel density of core $a$ is
\(
\rho_a=
\frac{(n_a-1)+\sum_{b\neq a}n_bK_{ab}}{N-1},
P_K=\frac{1}{N}\sum_a n_a\rho_a,
\)
giving the normalized kernel cost $c_a^K=2(\rho_a/P_K-1)$. Following the training scale, we compare $\widetilde c_a^K=0.02\,\operatorname{clip}(c_a^K,-8,8)$ and define the kernel-resolved rate as the fraction of exact-credit ties satisfying $|\widetilde c_a^K-\widetilde c_b^K|>0.001$:
\[
\mathrm{ResolvedRate}
=
\frac{\sum_{a<b}w_{ab}\,
\mathbb I[n_a=n_b]\,
\mathbb I[|\widetilde c_a^K-\widetilde c_b^K|>0.001]}
{\sum_{a<b}w_{ab}\,\mathbb I[n_a=n_b]}.
\]
Thus, TieRate measures the coarseness of exact finite-sample credit, whereas ResolvedRate is a conditional diagnostic of graded structure hidden inside exact ties and is not a higher-is-better performance metric. The successful-core statistics in Table~\ref{tab:kernel_renyi} are computed separately from successful stochastic repairs only. For a GEM with $M\ge2$ successful draws, let $s_a$ be the number assigned to successful core $a$. Exact collision and mean pairwise Jaccard distance are
\[
\mathrm{Collision}
=
\frac{\sum_a s_a(s_a-1)}{M(M-1)},
\qquad
\overline d_J
=
\frac{\sum_{a<b}2s_as_b\,[1-J(a,b)]}{M(M-1)}.
\]
Repeated draws retain their empirical multiplicity, so duplicate cores increase collision and contribute zero Jaccard distance rather than being deduplicated. All quantities are first computed per GEM and then macro-averaged over GEMs for which the corresponding statistic is defined. Undefined cases are excluded rather than imputed as zero.


\textbf{Full training trajectory.}
Table~\ref{tab:kernel_renyi_mechanism} traces credit resolution across updates 0 to 864. As training proceeds, 
Exact R\'enyi-2 becomes increasingly coarse under finite rollout groups, with its exact-count tie rate rising from 5.67\% to 10.23\%. Kernel R\'enyi-2 instead reshapes the rollout distribution toward finer exact-count structure, ending at 7.99\%. More importantly, exact multiplicity substantially understates the available geometric signal: at step 864, kernel credit resolves 78.20\% of exact-count ties in Exact R\'enyi-2 and 84.86\% in Vanilla GRPO. These results demonstrate that kernel relaxation promotes a less degenerate rollout geometry and extracts graded credit among otherwise indistinguishable cores, translating directly into superior downstream performance (Table~\ref{tab:quotient_objective_downstream}).

\begin{table*}[h]
\centering
\footnotesize
\setlength{\tabcolsep}{3.6pt}
\renewcommand{\arraystretch}{1.08}

\caption{\textbf{Full training trajectory of finite-sample credit resolution.} Exact-count tie rates measure distinct-core pairs indistinguishable under multiplicity credit within rollout groups; kernel-resolved rates report the fraction of these ties separated by graded kernel credit.}
\label{tab:kernel_renyi_mechanism}

\begin{tabular}{@{}lcccccccc@{}}
\toprule
&
\multicolumn{4}{c}{\textbf{Exact-credit ties (\%)}}
&
\multicolumn{4}{c}{\textbf{Kernel-resolved ties (\%)}}
\\
\cmidrule(lr){2-5}
\cmidrule(lr){6-9}

\textbf{Method}
& \textbf{0}
& \textbf{200}
& \textbf{400}
& \textbf{864}
& \textbf{0}
& \textbf{200}
& \textbf{400}
& \textbf{864}
\\
\midrule

Vanilla GRPO
& 5.67 & 8.19 & 8.31 & 8.59
& 30.61 & 60.55 & 57.50 & 84.86
\\

Discrete Shannon
& 5.67 & 8.60 & 9.99 & 9.72
& 30.61 & 59.23 & 59.77 & 80.20
\\

Exact R\'enyi-2
& 5.67 & 7.34 & 9.06 & 10.23
& 30.61 & 59.71 & 63.71 & 78.20
\\

Kernel R\'enyi-2
& 5.67 & 8.57 & 8.60 & \textbf{7.99}
& 30.61 & 63.81 & 57.23 & 72.78
\\

\bottomrule
\end{tabular}

\vspace{-0.2cm}
\end{table*}

\begin{table}[h]
\centering
\small
\caption{
\textbf{Downstream performance of quotient exploration objectives.}
}
\label{tab:quotient_objective_downstream}
\begin{tabular}{lcc}
\toprule
Method & Success@32 $\uparrow$ & Success Cores@32 $\uparrow$ \\
\midrule
Vanilla GRPO
& 17.93 $\pm$ 1.54
& 0.50 $\pm$ 0.01 \\

Discrete Shannon
& 18.61 $\pm$ 1.09
& 0.52 $\pm$ 0.01 \\
Exact R\'enyi-2
& 18.04 $\pm$ 1.08
& 0.54 $\pm$ 0.03 \\

\textbf{Kernel R\'enyi-2}
& \textbf{20.10 $\pm$ 0.52}
& \textbf{0.57 $\pm$ 0.02} \\
\bottomrule
\end{tabular}
\end{table}

\subsection{Multi-Seed Robustness and Ablation Results}
\label{app:ablation}

Table~\ref{tab:ablation_full} provides the complete evaluations corresponding to Table~\ref{tab:ablations}, confirming consistent improvements for QuotientPO.

\begin{table*}[t]
\centering
\footnotesize
\setlength{\tabcolsep}{3.5pt}
\renewcommand{\arraystretch}{1.10}

\caption{
\textbf{Multi-seed robustness and ablation results.}
Success@32 (\%) are reported as mean $\pm$ standard deviation over three seeds.
}
\label{tab:ablation_full}

\begin{minipage}[t]{0.47\textwidth}
\centering
\textbf{(a) RLOO Base}

\setlength{\tabcolsep}{2pt}        
\renewcommand{\arraystretch}{1.2}
\begin{tabular}{@{}lc@{}}
\toprule
\textbf{Method} & \textbf{Success@32} \\
\midrule
Vanilla
    & 15.75 $_{\pm \text{1.20}}$  \\
MaxEnt
    & 14.29 $_{\pm \text{0.95}}$  \\
GFlowNet
    & 11.57 $_{\pm \text{0.14}}$ \\
\textbf{QuotientPO}
    & \textbf{17.21 $_{\pm \text{0.81}}$}  \\
\bottomrule
\end{tabular}
\end{minipage}
\hfill
\begin{minipage}[t]{0.47\textwidth}
\centering
\textbf{(b) Policy Architecture}

\setlength{\tabcolsep}{2pt}        
\renewcommand{\arraystretch}{1.05}
\begin{tabular}{@{}llc@{}}
\toprule
\textbf{Architecture} & \textbf{RL} & \textbf{Success@32} \\
\midrule
\multirow{2}{*}{Soft}
    & Vanilla
    & 17.93 $_{\pm \text{1.54}}$ \\
    & QuotientPO
    & 20.10 $_{\pm \text{0.52}}$ \\
\midrule
\multirow{2}{*}{Hard}
    & Vanilla
    & 5.92 $_{\pm \text{0.09}}$ \\
    & QuotientPO
    & 6.65 $_{\pm \text{0.23}}$ \\
\bottomrule
\end{tabular}
\end{minipage}

\vspace{0.35cm}

\begin{minipage}[t]{0.47\textwidth}
\centering
\textbf{(c) Design Ablation}

\setlength{\tabcolsep}{2pt}        
\renewcommand{\arraystretch}{1.08}
\begin{tabular}{@{}lc@{}}
\toprule
\textbf{Variant} & \textbf{Success@32} \\
\midrule
\textbf{Default}
    & 20.10 $_{\pm \text{0.52}}$ \\
Raw-space kernel 
 & 18.16 $_{\pm \text{0.61}}$ \\
w/o Quotient term
    & 17.54 $_{\pm \text{1.81}}$\\
w/o QPI term
    & 19.62 $_{\pm \text{0.72}}$ \\
w/o Redundancy penalty
    & 19.94 $_{\pm \text{1.79}}$ \\
\bottomrule
\end{tabular}
\end{minipage}
\hfill
\begin{minipage}[t]{0.47\textwidth}
\centering
\textbf{(d) Hyperparameter Sensitivity}

\setlength{\tabcolsep}{2pt}        
\renewcommand{\arraystretch}{0.9}
\begin{tabular}{@{}llc@{}}
\toprule
\textbf{Param.} & \textbf{Setting} & \textbf{Success@32} \\
\midrule
\textbf{Default}
    & $h=0.25,\ \beta=0.02$
    & 20.10 $_{\pm \text{0.52}}$ \\
\midrule
\multirow{2}{*}{$h$}
    & 0.125
    & 19.41 $_{\pm \text{0.70}}$ \\
    & 0.5
    & 18.57 $_{\pm \text{0.74}}$ \\
\midrule
\multirow{2}{*}{$\beta$}
    & 0.01
    & 19.05 $_{\pm \text{1.12}}$ \\
    & 0.04
    & 20.00 $_{\pm \text{0.74}}$ \\
\bottomrule
\end{tabular}
\end{minipage}

\vspace{-0.2cm}
\end{table*}

\section{Biological Analysis and Case Studies}
\label{app:case_study}

Six additional cases complement the main-text sucrose and nitrate examples, spanning two cases across three progressive coverage tiers. Each comparison evaluates the identical GEM, checkpoint, and seed over $N=32$ stochastic rollouts per method ($V$: Vanilla GRPO; $Q$: QuotientPO). 
For each focal alternative (the second pathway in each case), we track its sample-level representation along the three-stage evaluation pipeline: raw proposals containing the focal module (\(R\)), verified successful raw repairs containing the focal module (\(S\)), and successful canonicalized repairs in which the focal module remains present (\(C\)), satisfying \(R \geq S \geq C\). Importantly, \(R\), \(S\), and \(C\) count sample occurrences rather than distinct canonical cores; the total number of distinct successful canonical cores is reported separately as \(K_{\mathrm{core}}\). The three coverage tiers characterize the stage at which Vanilla fails to retain the focal mechanism while QuotientPO does (\(C_Q>0\)): Tier 1 (retention gap: \(S_V>0, C_V=0\)) captures focal alternatives that already occur in successful Vanilla proposals but are eliminated during canonicalization; Tier 2 (viability gap: \(R_V>0, S_V=0\)) captures focal modules proposed by Vanilla only within unsuccessful repairs; and Tier 3 (discovery gap: \(R_V=0\)) captures focal modules absent from Vanilla raw proposals.

\section{Limitations and Future Work}
\label{app:limitations}

\textbf{Limitations.}
Our study adopts a controlled setting to isolate phenotype-supervised GEM repair. All GEMs are reconstructed with CarveMe in the BiGG reaction system, which ensures a consistent reaction vocabulary and modeling convention across organisms but does not yet test transfer across reconstruction pipelines. We also use standardized category-specific media as reproducible proxies for BacDive growth conditions. This abstraction is necessary because many BacDive assays provide heterogeneous or incomplete culture descriptions, but it does not capture all strain-specific environmental details. Finally, our verifier focuses on steady-state metabolic feasibility and does not model regulatory or kinetic effects.


\textbf{Future work.}
Future work can extend the framework to GEMs reconstructed by other pipelines, richer or uncertainty-aware culture-condition models, and additional biological constraints beyond FBA. More broadly, combining quotient-space repair with targeted experimental validation could connect phenotype-consistent computational repairs to testable metabolic hypotheses. Finally, quotient-space exploration may extend well beyond metabolic modeling to broader scientific domains such as molecular design, reaction mechanism discovery, and symbolic physical modeling, where macroscopic observations underdetermine underlying structures and search will benefit from operating directly over functional equivalence classes.

\makeatletter
\setlength{\@fptop}{0pt}
\setlength{\@fpsep}{10pt plus 2pt minus 2pt}
\setlength{\@fpbot}{0pt plus 1fil}
\makeatother

\begin{figure*}[t]
    \centering
    \includegraphics[width=\textwidth]{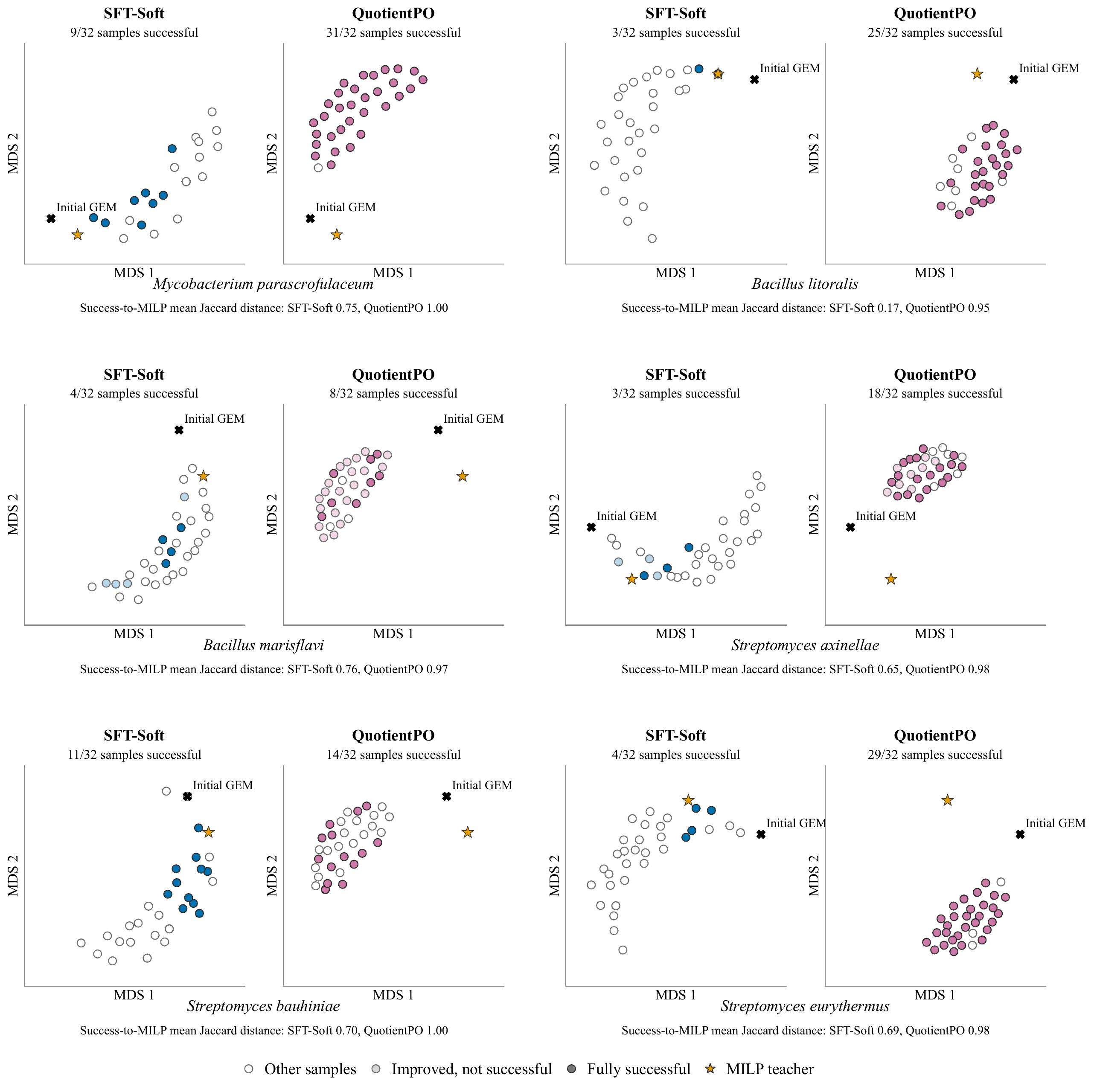}
    \caption{
    \textbf{Cases of SFT versus RL Optimization.}
    }
    \label{fig:sft_rl_cases}
    \vspace{-0.2cm}
\end{figure*}

\begin{figure*}[t]
    \centering
    \includegraphics[width=\textwidth]{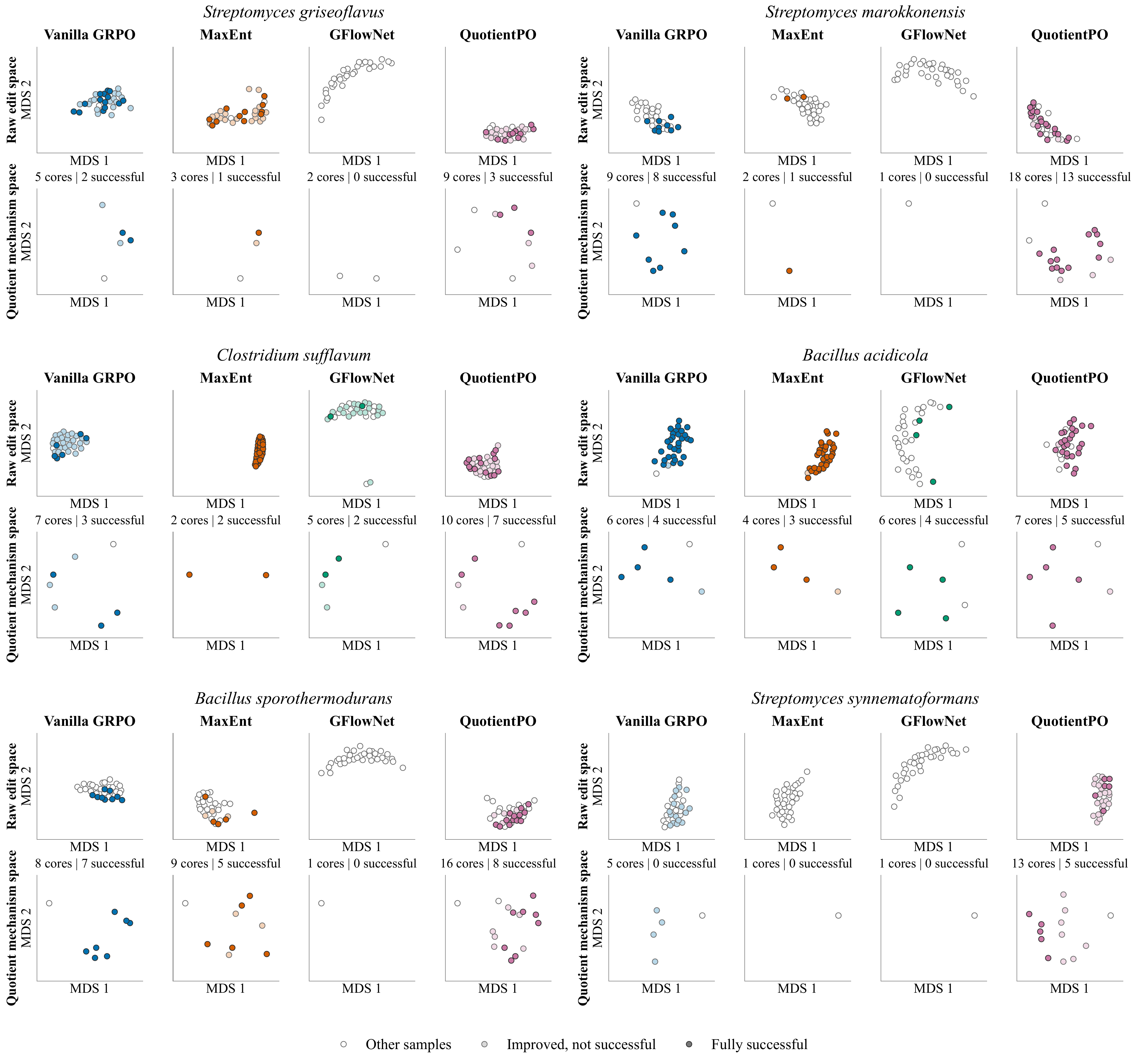}
    \caption{
    \textbf{Cases of solution-space visualization across RL baselines.}
    }
    \label{fig:rl_cases}
    \vspace{-0.2cm}
\end{figure*}


\begin{table*}[t]
\centering
\setlength{\belowcaptionskip}{8pt}
\caption{\textbf{Tier 1: retained-solution coverage.}
The focal alternative appears in successful Vanilla raw proposals ($S_V>0$)
but is retained only by QuotientPO after canonical reduction
($C_V=0$, $C_Q>0$).}
\label{tab:case-tier1}
\small
\setlength{\tabcolsep}{0pt}
\renewcommand{\arraystretch}{1.14}
\setlength{\abovedisplayskip}{4pt}
\setlength{\belowdisplayskip}{4pt}
\setlength{\abovedisplayshortskip}{4pt}
\setlength{\belowdisplayshortskip}{4pt}
\begin{tabularx}{\linewidth}{@{}>{\raggedright\arraybackslash}X@{\hspace{14pt}}>{\raggedright\arraybackslash}X@{}}
\toprule
\textbf{Repair alternatives \& encoded chemistry}
& \textbf{Functional verification \& policy diagnosis} \\
\midrule
\multicolumn{2}{@{}p{\linewidth}@{}}{%
\textbf{Case 1 (C1):} \textit{Mycobacterium terrae}
\par
\textbf{Target:} Urea utilization\par
\textbf{Focal pipeline ($R\to S\to C$):}
$V$: $13\to2\to0$ \quad vs\quad $Q$: $13\to2\to1$\par
\textbf{Yield ($N_{\mathrm{succ}}\mid K_{\mathrm{core}}$):}
$V$: $4\mid1$ \qquad $Q$: $4\mid2$
} \\
\midrule
\textbf{Shared deletions (3):}\par
\texttt{-HEX1}, \texttt{-MALTabc}, \texttt{-NARK}\par\smallskip
\textbf{Pathway 1: Uptake blockade}\par
\texttt{-UREAt}
\[
\mathrm{urea}_{e}\longrightarrow\mathrm{urea}_{c}
\]
\textbf{Pathway 2: Carboxylation blockade (focal)}\par
\texttt{-UREASE}
\[
\begin{aligned}
&\mathrm{urea}+\mathrm{ATP}+\mathrm{HCO}_{3}^{-}\\
&\quad\longrightarrow\mathrm{allophanate}\\
&\qquad+\mathrm{ADP}+\mathrm{P}_{\mathrm{i}}+\mathrm{H}^{+}
\end{aligned}
\]
\textit{Note:} \texttt{UREASE} encodes urea carboxylase in the GEM.
Both canonical cores contain exactly four deletions.
&
\textbf{Phenotypic verification}\par
Both cores reduce urea-supported growth from $0.202$ to zero while
preserving nitrate-supported growth of $0.545$ and all remaining targets.
Reintroducing the focal deletion restores the failed urea-growth phenotype,
confirming its functional necessity.\par\medskip
\textbf{Search behavior diagnosis}\par
Vanilla produces successful raw proposals containing this intracellular
branch ($S_V=2$), but completely eliminates it during canonical reduction
($C_V=0$). In contrast, QuotientPO successfully retains both
functional intervention sites ($C_Q=1$).
\\
\addlinespace[8pt]
\midrule
\multicolumn{2}{@{}p{\linewidth}@{}}{%
\textbf{Case 2 (C2):} \textit{Streptomyces inusitatus}
\par
\textbf{Target:} Alternative proline supply\par
\textbf{Focal pipeline ($R\to S\to C$):}
$V$: $32\to32\to0$ \quad vs\quad $Q$: $32\to30\to2$\par
\textbf{Yield ($N_{\mathrm{succ}}\mid K_{\mathrm{core}}$):}
$V$: $32\mid2$ \qquad $Q$: $30\mid2$
} \\
\midrule
\textbf{Shared deletion (1):} \texttt{-UREAt}\par\smallskip
\textbf{Pathway 1: Direct conversion}\par
\texttt{+ORNCD}
\[
\mathrm{ornithine}\longrightarrow\mathrm{proline}+\mathrm{NH}_{4}^{+}
\]
\textbf{Pathway 2: Semialdehyde/P5C route (focal)}\par
\texttt{+G5SADs}
\[
\begin{aligned}
\mathrm{GSA}&\longrightarrow\mathrm{P5C}+\mathrm{H}_{2}\mathrm{O}+\mathrm{H}^{+}\\
\mathrm{P5C}&\xrightarrow{\text{existing P5CR/P5CRx}}\mathrm{proline}
\end{aligned}
\]
\textit{Note:} GSA denotes glutamate-5-semialdehyde;
P5C denotes $\Delta^1$-pyrroline-5-carboxylate.
Both canonical cores contain two edits.
&
\textbf{Phenotypic verification}\par
Both two-edit cores restore proline supply and recover citrate-supported
growth from zero to $0.831$ while preserving the inactive urea target.
Flux witnesses independently confirm active throughput along both pathways.\par\medskip
\textbf{Search behavior diagnosis}\par
Vanilla already contains the focal G5SADs route in successful proposals but loses it after canonicalization (\(C_V=0\)); QuotientPO retains it (\(C_Q=2\)). Thus, this is a retention rather than discovery gap. ORNCD and ORNCD$\_$2 are treated as one functional pathway family.
\\
\bottomrule
\end{tabularx}
\end{table*}


\clearpage
\begin{table*}[t]
\centering
\setlength{\belowcaptionskip}{8pt}
\caption{\textbf{Tier 2: successful-combination coverage.}
Vanilla proposes the focal module ($R_V>0$) but does not place it in a
successful complete repair ($S_V=0$), whereas QuotientPO does.}
\label{tab:case-tier2}
\small
\setlength{\tabcolsep}{0pt}
\renewcommand{\arraystretch}{1.14}
\setlength{\abovedisplayskip}{4pt}
\setlength{\belowdisplayskip}{4pt}
\setlength{\abovedisplayshortskip}{4pt}
\setlength{\belowdisplayshortskip}{4pt}
\begin{tabularx}{\linewidth}{@{}>{\raggedright\arraybackslash}X@{\hspace{14pt}}>{\raggedright\arraybackslash}X@{}}
\toprule
\textbf{Repair alternatives \& encoded chemistry}
& \textbf{Functional verification \& policy diagnosis} \\
\midrule
\multicolumn{2}{@{}p{\linewidth}@{}}{%
\textbf{Case 3 (C3):} \textit{Lactobacillus farraginis} \par
\textbf{Target:} Alternative phosphate donors for UTP synthesis\par
\textbf{Focal pipeline ($R\to S\to C$):}
$V$: $4\to0\to0$ \quad vs\quad $Q$: $10\to1\to1$\par
\textbf{Yield ($N_{\mathrm{succ}}\mid K_{\mathrm{core}}$):}
$V$: $0\mid0$ \qquad $Q$: $2\mid2$
} \\
\midrule
\textbf{Shared edits (3):}\par
\texttt{+MOHMT}, \texttt{+NTRIR2y}, \texttt{-NTRIR3pp}\par\smallskip
\textbf{Pathway 1: ATP donor}\par
\texttt{+NDPK2}
\[
\mathrm{ATP}+\mathrm{UDP}\longrightarrow\mathrm{ADP}+\mathrm{UTP}
\]
\textbf{Pathway 2: PEP donor (focal)}\par
\texttt{+PYK2}
\[
\begin{aligned}
&\mathrm{PEP}+\mathrm{UDP}+\mathrm{H}^{+}\\
&\quad\longrightarrow\mathrm{pyruvate}+\mathrm{UTP}
\end{aligned}
\]
Both complete cores contain four edits.
&
\textbf{Phenotypic verification}\par
The alternatives provide UDP-to-UTP phosphorylation using different
phosphate donors.\par\smallskip
Both QuotientPO cores satisfy all six targets, with glucose growth
$0.540171$ and $0.539515$, respectively, and nitrate growth $0.888279$.\par\smallskip
Flux witnesses carry approximately $0.16655$ through \texttt{NDPK2}
or $0.16635$ through \texttt{PYK2}.\par\medskip
\textbf{Search behavior diagnosis}\par
Vanilla proposes \texttt{PYK2} four times, but these proposals lack the
coordinated repair required for success.
Adding \texttt{NTRIR2y} restores the nitrate-positive target.
\\
\addlinespace[8pt]
\midrule
\multicolumn{2}{@{}p{\linewidth}@{}}{%
\textbf{Case 4 (C4):} \textit{Streptomyces ferralitis}\par
\textbf{Target:} Alternative nitrogen donors for asparagine\par
\textbf{Focal pipeline ($R\to S\to C$):}
$V$: $24\to0\to0$ \quad vs\quad $Q$: $29\to3\to1$\par
\textbf{Yield ($N_{\mathrm{succ}}\mid K_{\mathrm{core}}$):}
$V$: $0\mid0$ \qquad $Q$: $3\mid3$
} \\
\midrule
\textbf{Shared edits (3):}\par
\texttt{+ARGDC}, \texttt{-FFSD}, \texttt{-SUCRtex}\par\smallskip
\textbf{Pathway 1: Glutamine donor}\par
\texttt{+ASNS1}
\[
\begin{aligned}
&\mathrm{Asp}+\mathrm{Gln}+\mathrm{ATP}+\mathrm{H}_{2}\mathrm{O}\\
&\quad\longrightarrow\mathrm{Asn}+\mathrm{Glu}\\
&\qquad+\mathrm{AMP}+\mathrm{PP}_{\mathrm{i}}+\mathrm{H}^{+}
\end{aligned}
\]
\textbf{Pathway 2: Ammonium donor (focal)}\par
\texttt{+ASNS2}
\[
\begin{aligned}
&\mathrm{Asp}+\mathrm{NH}_{4}^{+}+\mathrm{ATP}\\
&\quad\longrightarrow\mathrm{Asn}+\mathrm{AMP}\\
&\qquad+\mathrm{PP}_{\mathrm{i}}+\mathrm{H}^{+}
\end{aligned}
\]
&
\textbf{Phenotypic verification}\par
The two reactions complete asparagine synthesis using different nitrogen
donors.\par\smallskip
Both QuotientPO cores satisfy the glucose-positive and sucrose-negative
targets, with glucose growth $1.342551$ and $1.345126$.\par\medskip
\textbf{Search behavior diagnosis}\par
Vanilla frequently proposes the focal \texttt{ASNS2} branch, but the
corresponding repair sets remain unsuccessful.
One such proposal also introduces extracellular sucrose hydrolysis through
\texttt{SUCRe}; removing this additional bypass restores the
sucrose-negative target.\par\smallskip
Thus, successful use of the focal biosynthetic branch depends on its
surrounding repair combination.
\\
\bottomrule
\end{tabularx}
\end{table*}


\clearpage
\begin{table*}[t]
\centering
\setlength{\belowcaptionskip}{8pt}
\caption{\textbf{Tier 3: raw-module coverage.}
The complete focal module is absent from Vanilla raw proposals
($R_V=0$) but appears in successful QuotientPO repairs.}
\label{tab:case-tier3}
\small
\setlength{\tabcolsep}{0pt}
\renewcommand{\arraystretch}{1.14}
\setlength{\abovedisplayskip}{4pt}
\setlength{\belowdisplayskip}{4pt}
\setlength{\abovedisplayshortskip}{4pt}
\setlength{\belowdisplayshortskip}{4pt}
\begin{tabularx}{\linewidth}{@{}>{\raggedright\arraybackslash}X@{\hspace{14pt}}>{\raggedright\arraybackslash}X@{}}
\toprule
\textbf{Repair alternatives \& encoded chemistry}
& \textbf{Functional verification \& policy diagnosis} \\
\midrule
\multicolumn{2}{@{}p{\linewidth}@{}}{%
\textbf{Case 5 (C5):} \textit{Streptomyces olivochromogenes}\par
\textbf{Target:} Alternative control of citrate-supported growth\par
\textbf{Focal pipeline ($R\to S\to C$):}
$V$: $0\to0\to0$ \quad vs\quad $Q$: $2\to2\to2$\par
\textbf{Yield ($N_{\mathrm{succ}}\mid K_{\mathrm{core}}$):}
$V$: $2\mid1$ \qquad $Q$: $8\mid3$
} \\
\midrule

\textbf{Shared edit (1):}\par
\texttt{-UREAt}\par\smallskip

\textbf{Alternative 1: Entry restriction}\par
\texttt{-CITt14}, \texttt{-FE3t}\par
CITt14 directly supports citrate uptake, while FE3t provides a reversible
iron route coupled to the remaining FEDCabc citrate-entry pathway.\par\smallskip

\textbf{Alternative 2: Intracellular node (focal)}\par
\texttt{-SUCOAS}
\[
\mathrm{succinyl\text{-}CoA}+\mathrm{ADP}+\mathrm{P_i}
\longrightarrow
\mathrm{succinate}+\mathrm{CoA}+\mathrm{ATP}.
\]

&
\textbf{Phenotypic verification}\par
Both cores eliminate the initial urea and citrate false positives while
preserving glucose- and sucrose-supported growth.
The entry-restriction core retains glucose/sucrose growth at
$0.630548/1.261097$, whereas the SUCOAS core retains
$0.436023/0.872046$.\par\smallskip

Restoring either \texttt{CITt14} or \texttt{FE3t} breaks only the
citrate-negative target in the first core. Restoring \texttt{SUCOAS}
does the same in the focal core. Restoring \texttt{UREAt} affects only
the urea-negative target.\par\medskip

\textbf{Search behavior diagnosis}\par
Vanilla never proposes \texttt{-SUCOAS} in this rollout panel.
QuotientPO discovers it twice. Both samples are successful and
canonicalize to the same focal core.
Thus, this comparison exposes an intracellular repair alternative
absent from Vanilla raw proposals, rather than two distinct mechanisms.
\\
\addlinespace[8pt]
\midrule
\multicolumn{2}{@{}p{\linewidth}@{}}{%
\textbf{Case 6 (C6):} \textit{Streptomyces durbertensis}\quad \par
\textbf{Target:} Alternative quinone-precursor supply\par
\textbf{Focal pipeline ($R\to S\to C$):}
$V$: $0\to0\to0$ \quad vs\quad $Q$: $5\to1\to1$\par
\textbf{Yield ($N_{\mathrm{succ}}\mid K_{\mathrm{core}}$):}
$V$: $1\mid1$ \qquad $Q$: $7\mid5$
} \\
\midrule
\textbf{Shared edits (5):}\par
\texttt{+HSERTA}, \texttt{+HSST}, \texttt{+THZPSN2\_SC},
\texttt{-NO3abc}, \texttt{-NO3tex}\par\smallskip
\textbf{Pathway 1: Aromatic branch}\par
\texttt{+PHPYROX}
\[
\begin{aligned}
&\mathrm{phenylpyruvate}\longrightarrow\mathrm{benzaldehyde}\\
&\quad\longrightarrow\mathrm{benzoate}\longrightarrow\mathrm{benzoyl\text{-}CoA}\\
&\quad\xrightarrow{\text{reverse BSCT}}\mathrm{OSB\text{-}CoA}
\end{aligned}
\]
\textbf{Pathway 2: Three-ADD branch (focal)}\par
\texttt{+SHCHCS}, \texttt{+SUCBZS}, \texttt{+SUCBZL\_1}
\[
\begin{aligned}
&\mathrm{isochorismate}+\mathrm{SSA}\\
&\quad\longrightarrow\mathrm{SHCHC}\longrightarrow\mathrm{OSB}\\
&\quad\longrightarrow\mathrm{OSB\text{-}CoA}
\end{aligned}
\]
\textit{Note:} SSA denotes succinic semialdehyde and OSB denotes
$o$-succinylbenzoate.
&
\textbf{Phenotypic verification}\par
Both branches converge on OSB-CoA and downstream menaquinone-precursor
supply.\par\smallskip
The resulting cores satisfy all four targets and yield glucose growth of
approximately $0.454946$.\par\smallskip
Minimum-total-flux witnesses use the corresponding precursor branch at
approximately $4.55\times10^{-5}$, matching the quinone demand induced
by biomass production.\par\medskip
\textbf{Search behavior diagnosis}\par
The complete three-ADD module does not occur in Vanilla raw proposals,
whereas QuotientPO proposes it five times and retains one successful
canonical core.
Its five total distinct cores therefore correspond to multiple edit sets,
rather than five distinct quinone-supply mechanisms.
\\
\bottomrule
\end{tabularx}
\end{table*}

\end{document}